\newcommand{\doublearrow}{{~\dashleftarrow \hspace{-5.5mm} \dashrightarrow~}}
\newcommand{\myrightarrow}{\xrightarrow{\hspace*{2.5mm}}}
\newcommand{\myleftarrow}{\xleftarrow{\hspace*{2.5mm}}}
\newcommand{\doubleplusrightarrow}{\:
	\smash{
	\underset{
		{\raisebox{0.75ex}{\smash{$\myrightarrow$}}}%
	}
	{
		{\raisebox{0.5ex}{\smash{$\doublearrow$}}}
	}
	}
	\:}
\newtheorem{assumption}{Assumption}
\newtheorem{definition}{Definition}
\newtheorem{corollary}{Corollary}
\newtheorem{remark}{Remark}
\newcommand{\Reals}{\mathbb{R}} 
\newcommand{\Probability}{\mathbb{P}}
\newcommand{\Indicator}{\mathds{1}}
\newcommand{\Expectation}{\mathbb{E}}
\DeclareMathAlphabet{\mathbsf}{OT1}{cmss}{bx}{n}
\DeclareMathAlphabet{\mathssf}{OT1}{cmss}{m}{sl}
\newcommand{\rvx}{{\mathssf{x}}}	
\newcommand{\rvu}{{\mathssf{u}}}	
\newcommand{\rvv}{{\mathssf{v}}}	
\newcommand{\rvw}{{\mathssf{w}}}	
\newcommand{\rvy}{{\mathssf{y}}}	
\newcommand{\rvt}{{\mathssf{t}}}	
\newcommand{\rve}{{\mathssf{e}}}	
\newcommand{\rvs}{{\mathssf{s}}}	
\newcommand{\rvbs}{{\mathbsf{s}}} 
\newcommand{\rvbx}{{\mathbsf{x}}} 
\newcommand{\rvbv}{{\mathbsf{v}}} 
\newcommand{\rvbw}{{\mathbsf{w}}} 
\newcommand{\rvbz}{{\mathbsf{z}}} 
\newcommand{\svbs}{{\boldsymbol{s}}} 
\newcommand{\svbx}{{\boldsymbol{x}}} 
\newcommand{\svbz}{{\boldsymbol{z}}} 
\newcommand{\cV}{\mathcal{V}} 
\newcommand{\cG}{\mathcal{G}} 
\newcommand{\cN}{\mathcal{N}} 
\newcommand{\cS}{\mathcal{S}} 
\newcommand{\cP}{\mathcal{P}} 
\newcommand{\cX}{\mathcal{X}} 
\newcommand{\cM}{\mathcal{M}} 
\newcommand{\cW}{\mathcal{W}} 
\newcommand{\cU}{\mathcal{U}} 
\newcommand{\cZ}{\mathcal{Z}} 
\newcommand{\pao}{\pi^{(o)}} 
\newcommand{\pau}{\pi^{(u)}} 
\newcommand{\pa}{\pi} 
\newcommand{\Uniform}{\text{Uniform}} 
\newcommand{\Bernoulli}{\text{Bernoulli}} 
\newcommand{\Sigmoid}{\text{Sigmoid}} %
\newcommand{\Softmax}{\text{Softmax}} %
\newcommand{\dsep}{{\perp \!\!\! \perp}_d~}
\newcommand{\indep}{\perp_p}
\newcommand{\notdsep}{{\not\!\perp\!\!\!\perp}_d}
\newcommand{\td}{\tilde{d}} 
\newcommand{\btheta}{\boldsymbol{\theta}} 
\title{Finding Valid Adjustments under Non-ignorability with Minimal DAG Knowledge 
}
\author{%
	Abhin Shah
	\\
	MIT\\
	\texttt{abhin@mit.edu} \\
	\and
	Karthikeyan Shanmugam \\
	IBM Research \\
	\texttt{karthikeyan.shanmugam2@ibm.com}
	\and
	Kartik Ahuja
	\\
	Mila \\
	\texttt{kartik.ahuja@mila.quebec}
}
\date{}
\begin{document}
\maketitle
\begin{abstract}
    Treatment effect estimation from observational data is a fundamental problem in causal inference. There are two very different schools of thought that have tackled this problem. On the one hand, the Pearlian framework commonly assumes structural knowledge (provided by an expert) in the form of directed acyclic graphs and provides graphical criteria such as the back-door criterion to identify the valid adjustment sets. On the other hand, the potential outcomes (PO) framework commonly assumes that all the observed features satisfy ignorability (i.e., no hidden confounding), which in general is untestable. In prior works that attempted to bridge these frameworks, there is an observational criteria to identify an \textit{anchor variable} and if a subset of covariates (not involving the anchor variable) passes a suitable conditional independence criteria, then that subset is a valid back-door. Our main result strengthens these prior results by showing that under a different expert-driven structural knowledge --- that one variable is a direct causal parent of the treatment variable --- remarkably, testing for subsets (not involving the known parent variable) that are valid back-doors is \textit{equivalent} to an invariance test. Importantly, we also cover the non-trivial case where the entire set of observed features is not ignorable (generalizing the PO framework) without requiring the knowledge of all the parents of the treatment variable. Our key technical idea involves generation of a synthetic sub-sampling (or environment) variable that is a function of the known parent variable. In addition to designing an invariance test, this sub-sampling variable allows us to leverage Invariant Risk Minimization, and thus, connects finding valid adjustments (in non-ignorable observational settings) to  representation learning. We demonstrate the effectiveness and tradeoffs of these approaches on a variety of synthetic datasets as well as real causal effect estimation benchmarks.
\end{abstract}
\section{Introduction}
\label{section:intro}
Estimating the impact of a treatment (or an action) is fundamental  to many scientific disciplines (e.g., economics \citep{imbens2015causal}, medicine \citep{shalit2017estimating, alaa2017bayesian}, policy making \citep{Lalonde1986, Smith2005}). In most of these fields, randomized clinical trials (RCT) is a common practice for estimating treatment effects.  However, conducting a RCT could be unethical or costly, and we may only have access to observational data. Estimating treatment effects with only  observational data is a challenging task and is of central interest to causal inference researchers. 

A fundamental question in treatment effect estimation is: \textit{Which subset of observed features should be adjusted for while estimating treatment effect from  observational data?}  Simpson's paradox \citep{pearl2014comment}, which is a phenonmenon that is observed in many real-life studies on treatment effect estimation, underscores the value of selecting appropriate features for treatment effect estimation. Over the years, two schools of thoughts have formed on how to tackle treatment effect estimation. The Pearlian framework \citep{Pearl2009} commonly assumes that an expert provides us with the causal generative model in the form of a directed acyclic graph (DAG) that relates unobserved exogenous variables to observed features, treatment variable, and outcome variables. With the knowledge of the DAG available, the framework provides different graphical criteria (e.g., back-door criterion \citep{Pearl1993}, front-door criterion \citep{pearl1995causal}) that answers whether a subset is valid for adjustment. The DAG framework allows for the existence of confounders -- unobserved variables that affect multiple observed variables.  The potential outcomes (PO) framework \citep{rubin1974estimating} makes an untestable assumption called \textit{ignorability} -- the assumption (in a rough sense) requires potential outcomes under different treatments be independent of the treatment conditioned on all (or a known subset of) observed features. In other words, ignorability implies that a subset of observed features is a valid adjustment and is known. The PO framework provides various techniques (e.g., inverse propensity weighing \citep{swaminathan2016off}, doubly robust estimation \citep{Funk2011}) for treatment effect estimation under ignorability. One can view the Pearlian DAG framework as providing graphical criteria implying ignorability of certain subsets. 

In summary, the Pearlian framework requires the knowledge of the DAG and the PO framework assumes ignorability with respect to the observed features. Motivated by the limitations of both of these frameworks we ask: \emph{can we significantly reduce the structural knowledge required about the DAG under non-ignorability of observed features and yet find valid adjustment sets}? 

\subsection{Our Contributions}
We assume the following minimal expert-driven local structural knowledge: a known observed feature is a direct causal parent of the treatment. Given this, we propose a simple \textit{invariance} test, and show that it is equivalent to testing if a subset not involving the known parent satisfies the back-door criterion (without requiring ignorability) when the features are pre-treatment. To design our invariance test, we use the known parent to create `fake environment variables'. We then test for invariance (across these environments) of the outcome conditioned on subsets of observed features (not containing the known parent) and the treatment. If a subset passes this invariance test, then it satisfies the back-door criterion (and therefore is a valid adjustment set) allowing for treatment effect estimation. Crucially, our result also goes in the other direction, i.e., if there exists a set (not containing the known parent) that satisfies the back-door criterion, then it will pass our invariance test. 

We propose two algorithms based on this equivalence result to identify valid adjustments. In the first algorithm, we use a subset based search procedure that exploits conditional independence (CI) testing to check our invariance criterion. As is standard with any subset based search approach, the application of our first algorithm is limited to small dimensional datasets. To overcome this, in our second algorithm, we leverage Invariant Risk Minimization (IRM) \citep{arjovsky2019invariant}, originally proposed to learn causal representations for out-of-distribution generalization, to act as a continuous optimization based scalable approximation for CI testing. We demonstrate the  effectiveness of our algorithms in treatment effect estimation  on both synthetic and benchmark datasets. In particular, we also show that IRM based algorithm  scales well with dimensions in contrast to the subset search based approach. The source code of our implementation is available at \url{https://github.com/Abhin02/invariance-via-subsampling}.

\subsection{Outline}
In Section \ref{section:related_works}, we look at some related literature. In Section \ref{section:problem_formulation}, we formulate our problem and state our assumptions. We provide our main results in Section \ref{section:main_results} and our algorithms in Section \ref{section:algorithm}. In Section \ref{section:experiments}, we demonstrate our experimental findings. In Section \ref{section:conclusion_limitations_future_work}, we conclude, discuss a few limitations and directions for future work. See the Appendix for its organization.
\section{Related Work}
\label{section:related_works}
Next, we provide an overview of related work that directly concerns finding valid adjustment in treatment effect estimation. See Appendix \ref{appendix:related_work} for an overview of prior work related to potential outcomes and usage of representation learning to debias treatment effect.

\subsection{Finding valid adjustment with global knowledge}
Finding valid adjustment sets for general interventional queries has been extensively studied in the Pearlian framework \citep{TianP2002}. Given the complete knowledge of the DAG, a sound and complete algorithm to find valid adjustments was proposed by \cite{shpitser2008complete}. When only the observational equivalence class is known, i.e., partial ancestral graph or PAG \citep{zhang2008causal},  \cite{perkovic2018complete} provided a sound and complete algorithm for finding valid adjustments. \cite{vanderweele2011new} showed that if a valid adjustment set exists amongst the observed features, then the union of all observed parents of outcome and all observed parents of treatment is also a valid adjustment set. However, they required global knowledge i.e., information about \textit{every} observed feature while our work requires knowledge about \textit{only one observed parent of treatment} i.e., local knowledge.

\subsection{Finding valid adjustment with local knowledge.}
As opposed to the works described in the previous paragraph, another line of work (e.g., \cite{entner2013data, cheng2020towards, gultchin2020differentiable}) focused on finding valid adjustment sets by exploiting local knowledge of the DAG. In \cite{entner2013data}, a two-step approach was proposed. First, \textit{an anchor variable} is characterized by an observational criteria that is testable. Next, a conditional independence test is performed on the subsets not involving the anchor variable to find the valid adjustment set. In the reverse direction, if a valid adjustment set exists that does not contain the anchor variable, their test is shown to succeed only if the anchor variable has no observed or unobserved parents. As a result, even if it were possible to carry out consistent treatment effect estimation based on adjustment sets not involving the anchor, their procedure need not necessarily enable it. In contrast, in these settings, under the assumption that the anchor variable (direct causal parent of the treatment) is specified by the expert, our invariance test enables consistent treatment effect estimation. On the other hand, in \cite{cheng2020towards}, the anchor variable is characterized by topological properties of the PAG. We provide examples where our procedure can correctly declare that consistent treatment effect is not possible but they cannot.

Following \cite{entner2013data}, \cite{gultchin2020differentiable} proposed a fully-differentiable optimization framework to find a representation of the features that passes the conditional independence criteria analogous to \cite{entner2013data}. While their approach avoids the brute-force subset search required by \cite{entner2013data}, their approach is as limited in the reverse direction as \cite{entner2013data}. Further, their continuous optimization framework assumes the outcome is binary or the whole system (including the treatment) is linear Gaussian. Additionally, they use partial correlation as a proxy for conditional independence. This proxy is correct when the underlying distribution is Gaussian and in the worst-case constrains only the second moment. In other words, their framework doesn't provide formal guarantees even if one of the variables (e.g. treatment) isn't Gaussian. In contrast, our approach doesn't make these assumptions and is more general.

\subsection{Invariance principle.} 
The invariance principle (also known as modularity condition) is fundamental to causal bayesian networks \citep{bareinboim2012local,scholkopf2019causality}. \cite{arjovsky2019invariant} proposed a continuous optimization framework called invariant risk minimization (IRM), to search for causal representations which satisfy invariance principle, that achieves out-of-distribution generalization. A recent line of work (e.g., \cite{shi2020invariant, shah2021treatment}) has focused on using IRM for treatment effect estimation. \cite{shi2020invariant} assumed (i) that there are no unmeasured confounders and (ii) access to interventional data is available (similar to IRM). We significantly differ from this as we allow unmeasured confounders and do not require interventional data -- we create artificial environments by sub-sampling observational data -- and leverage IRM to find valid adjustment sets that satisfy our criterion. On the other hand, while \cite{shah2021treatment} created environments artificially (similar to ours), their sub-sampling procedure lacks theoretical justification. Further, they focus primarily on the setting where there is little support overlap between the control and the treatment group, and lack formal guarantees on finding valid adjustment sets.
\section{Problem Formulation}
\label{section:problem_formulation}
\noindent{\bf Notations.} 
For a sequence of deterministic variables $s_1, \cdots , s_n$, we let $\svbs \coloneqq \{s_1, \cdots, s_n\}$. For a sequence of random variables $\rvs_1, \cdots , \rvs_n$, we let $\rvbs \coloneqq \{\rvs_1, \cdots, \rvs_n\}$. Let $\Indicator$ denote the indicator function. 

\subsection{Semi-Markovian Model, Effect Estimation, Valid Adjustment}
\label{subsection:semi-markovian}
Consider a causal effect estimation task with $\rvbx$ as the feature set, $\rvt$ as the observed treatment variable and $\rvy$ as the observed potential outcome. For the ease of exposition, we focus on binary $\rvt$. However, our results apply to non-binary $\rvt$ as well. Further, while we consider discrete $\rvbx$ and $\rvy$, our framework applies equally to continuous or mixed $\rvbx$ and $\rvy$. Let $\cG$ denote the underlying DAG over the set of vertices $\cW \coloneqq \{\rvbx,\rvt,\rvy\}$. For any variable $\rvw \in \cW$, let $\pa(\rvw)$ denote the set of parents of $\rvw$ i.e., $\pa(\rvw) = \{\rvw_1:\rvw_1 \myrightarrow \rvw\}$.

To estimate the causal effect of treatment $\rvt$ on outcome $\rvy$, a Markovian causal model requires the specification of the following three elements : $(a)$ $\cW$ -- the set of variables, $(b)$ $\cG$ -- the DAG  over the set of vertices $\cW$, and $(c)$ $\Probability(\rvw|\pa(\rvw))$ -- the conditional probability of $\rvw$ given its parents $\pa(\rvw)$ for every $\rvw \in \cW$. Given the DAG $\cG$, the causal effect of $\rvt$ on $\rvy$ can be estimated from observational data since $\Probability(\rvw|\pa(\rvw))$ is estimable from observational data whenever $\cW$ is observed. 

Our ability to estimate the causal effect of $\rvt$ on $\rvy$ from observational data is severely curtailed when some variables in a Markovian causal model are unobserved. Let $\rvbx^{(o)} \subseteq \rvbx$ be the subset of features that are observed and $\rvbx^{(u)} = \rvbx \setminus \rvbx^{(o)}$ be the subset of features that are unobserved. For any variable $\rvw \in \cW$, let $\pao(\rvw) \subseteq \pa(\rvw)$ denote the set of parents of $\rvw$ that are observed and let $\pau(\rvw) \coloneqq \pa(\rvw) \setminus \pao(\rvw)$ denote the set of parents of $\rvw$ that are unobserved. We focus on the semi-Markovian causal model \citep{TianP2002}, defined below, since any causal model with unobserved variables can be mapped to a semi-Markovian causal model while preserving the dependencies between the variables \citep{verma1990causal, acharya2018learning}.

\begin{definition} (Semi-Markovian Causal Model.)
A semi-Markovian causal model $\cM$ is a tuple $\big\langle \cV, \cU, \cG, \Probability(\rvv| \pao(\rvv), \pau(\rvv)), \Probability(\cU) \big\rangle$ where:
\begin{enumerate}[leftmargin=*,topsep=-0pt,itemsep=-3pt]
    \item  $\cV$ is the set of observed variables, i.e. $\cV = \{\rvbx^{(o)}, \rvt, \rvy \}$,
    \item  $\cU$ is the set of unobserved (or exogenous) features, i.e. $\cU \coloneqq \cW \setminus \cV = \rvbx^{(u)}$, 
    \item  $\cG$ is the DAG over the set of vertices $\cW$ such that each member in $\cU$ has no parents and at-most two children. 
    \item $\Probability(\rvv| \pao(\rvv), \pau(\rvv)) ~ \forall \rvv \in \cV$ is the set of unobserved conditional distributions of the observed variables, and
    \item  $\Probability(\cU)$ is the unobserved joint distribution over the unobserved features.
\end{enumerate}
\end{definition}

In a semi-Markovian model, unobserved variables with only one or no children are omitted entirely. See Figure \ref{fig:Gtoy_with_u} for a toy example of a semi-Markovian model with $\cV = \{\rvx_1, \rvx_2, \rvx_3, \rvt, \rvy\}$, $\cU = \{\rvu_1, \rvu_2, \rvu_3, \rvu_4\}$, and $\cG = \cG^{toy}$.
\begin{figure}[h]
 \centering
 \begin{tikzpicture}[scale=0.6,every node/.style={transform shape}, > = latex, shorten > = 1pt, shorten < = 1pt]
	\node[shape=circle,draw=black](x1) at (0,2) {\LARGE$\rvx_1$};
	\node[shape=circle,draw=black](u1) at (-1.5,0.25) {\LARGE$\rvu_1$};
	\node[shape=circle,draw=black](x2) at (3,2) {\LARGE$\rvx_2$};
	\node[shape=circle,draw=black](u2) at (1.5,0.5) {\LARGE$\rvu_2$};
	\node[shape=circle,draw=black](x3) at (5,0.25) {\LARGE$\rvx_3$};
	\node[shape=circle,draw=black](u3) at (5.0,2) {\LARGE$\rvu_3$};
	\node[shape=circle,draw=black](u4) at (5.0,-1.5) {\LARGE$\rvu_4$};
	\node[shape=circle,draw=black](y) at (3,-1.5) {\LARGE$\rvy$};
	\node[shape=circle,draw=black](t) at (0,-1.5) {\LARGE$\rvt$};
	\path[style=thick][->](x1) edge (x2);
	\path[style=thick][->](x1) edge (t);
	\path[style=thick][->](u2) [dashed] edge (x1);
	\path[style=thick][->](u2) [dashed] edge (x2);
	\path[style=thick][->](t) edge (y);		
	\path[style=thick][->](x2) edge (y);		
	\path[style=thick][->](u1) [dashed] edge (t);
	\path[style=thick][->](u1) [dashed] edge (x1);
	\path[style=thick][->](u3) [dashed] edge (x2);
	\path[style=thick][->](u3) [dashed] edge (x3);
	\path[style=thick][->](u4) [dashed] edge (x3);
	\path[style=thick][->](u4) [dashed] edge (y);
	\end{tikzpicture}
 \caption{The toy example $\cG^{toy}$.}
 \label{fig:Gtoy_with_u}
\end{figure}%

In observational data, we observe samples of $\cV$ from $\Probability(\cV)$ which is related to the semi-Markovian model $\cM$ by the following marginalization \citep{TianP2002}: 
$$ \Probability(\cV) = \Expectation_{\rvbx^{(u)}} \big[ \prod_{\rvv \in \cV} \Probability( \rvv| \pao(\rvv), \pau(\rvv)) \big].$$ Next, we define the notion of causal effect using the do-operator.
\begin{definition}\label{definition_causal_effect}
 (Causal Effect.)
 The causal effect of the treatment $\rvt$ on the outcome $\rvy$ is defined as
  \begin{align}
    \Probability(\rvy | do(\rvt = t)) =  \sum_{\rvt = t', \rvbx^{(o)} = \svbx^{(o)}} \Indicator_{t'=t} \Expectation_{\rvbx^{(u)}}  \nonumber \big[ \prod_{\rvv \in \cV \setminus \{\rvt\}} \Probability( \rvv | \pao(\rvv), \pau(\rvv) ) \big]
\end{align}
\end{definition}
The do-operator forces $\rvt$ to be $t$ in the causal model $\cM$, i.e the conditional factor $\Probability(\rvt = t'|\pao(\rvt),~ \pau(\rvt) )$ is replaced by the indicator $\Indicator_{\rvt=t'}$ and the resulting distribution is marginalized over all possible realizations of all observed random variables except $\rvy$. Next, we define average treatment effect and valid adjustment.
 
\begin{definition}\label{definition_average_treatment_effect}
(Average Treatment Effect.)
The average treatment effect (ATE) of a binary treatment  $\rvt$ on the outcome $\rvy$ is defined as ATE = $\Expectation[\rvy | do(\rvt = 1)] - \Expectation[\rvy | do(\rvt = 0)]$.
\end{definition}
 
\begin{definition}\label{definition_valid_adjustment}
(Valid Adjustment.)
A set of variables $\rvbz \subseteq \rvbx$ is said to be a \textit{valid adjustment} relative to the ordered pair of variables $(\rvt, \rvy)$ in the DAG $\cG$ if 
 $\Probability(\rvy | do(\rvt = t)) = \Expectation_{\rvbz} [\Probability(\rvy | \rvbz = \svbz, \rvt = t))].$
\end{definition}
If $\rvbz \subseteq \rvbx^{(o)}$ is a valid adjustment relative to $(\rvt, \rvy)$, then the ATE can be estimated from observational data by regressing the factual outcomes for the treated and the untreated sub-populations on $\rvbz$ i.e., ATE $ = \Expectation_{\rvbz} [ \Expectation_{\rvy} [ \rvy| \rvt=1, \rvbz] - \Expectation_{\rvy} [ \rvy| \rvt=0, \rvbz] ]$. 

For any variables $\rvw_1, \rvw_2 \in \cW$, and a set $\rvbw \subseteq \cW$, (a) let $\rvw_1 \indep \rvw_2 |\rvbz$ denote that $\rvw_1$ and $\rvw_2$ are conditionally independent given $\rvbz$ and (b) let $\rvw_1 \dsep \rvw_2 |\rvbw$ denote that $\rvw_1$ and $\rvw_2$ are d-separated by $\rvbw$ in $\cG$. For completeness, we provide the definition of d-separation in Appendix \ref{appendix:d_separation} as well as review potential outcomes (PO) framework  (\cite{imbens2015causal}), discuss ignorability and connect it with valid adjustment in Appendix \ref{appendix:po}.

\subsection{Back-door Criterion}
We now discuss the \textit{back-door} criterion  \citep{Pearl2016} -- a popular sufficient graphical criterion for finding valid adjustments i.e., any set satisfying the back-door criterion is a valid adjustment (\cite{Pearl1993}).
 
\begin{definition}\label{definition_back_door}
(Back-door criteria.)
A set of variables $\rvbz \subseteq \rvbx$ satisfies the back-door criterion relative to the ordered pair of variables $(\rvt, \rvy)$ in $\cG$ if no node in $\rvbz$ is a descendant of $\rvt$ and
$\rvbz$ blocks every path between $\rvt$ and $\rvy$ in $\cG$ that contains an arrow into $\rvt$.
\end{definition}

Often, $\cG$ is represented without explicitly showing elements of $\cU$ but, instead, using bi-directed edges (\cite{TianP2002}) to represent confounding effects of $\cU$. For example, Figure \ref{fig:Gtoy_with_u} uses bi-directed edges to represent unmeasured confounders (i.e., elements of $\cU$ that influence two variables in $\cV$) in the DAG $\cG^{toy}$.

\begin{definition}
(A Bi-directed Edge.)
A bi-directed edge between nodes $\rvv_1 \in \cV$ and $\rvv_2 \in \cV$ (i.e., $\rvv_1 \doublearrow \rvv_2$) represents the presence (in $\cG$) of a divergent path $\rvv_1 \dashleftarrow \rvu \dashrightarrow \rvv_2$ where $\rvu \in \cU$.
\end{definition}

In this work, we make the following structural assumption on the DAG $\cG$ under the semi-Markovian model $\cM$. This assumption is analogous to the common assumption that all observed features are pre-treatment variables. As an example, consider the DAG $\cG^{toy}$ in Figure \ref{fig:Gtoy_without_u} that satisfies this assumption. 

\begin{assumption}\label{assumption1}
Let the DAG $\cG$ be such that the treatment $\rvt$ has the outcome $\rvy$ as its only child. Further, the outcome $\rvy$ has no child.
\end{assumption}

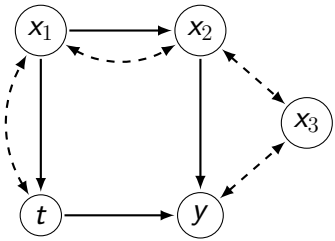
\begin{figure*}[h!]
\centering
	\begin{tikzpicture}[scale=0.7, every node/.style={transform shape}, > = latex, shorten > = 1pt, shorten < = 1pt]
	\node[shape=circle,draw=black](x1) at (0,2) {\LARGE$\rvx_1$};
	\node[shape=circle,draw=black](x2) at (3,2) {\LARGE$\rvx_2$};
	\node[shape=circle,draw=black](x3) at (5,0.25) {\LARGE$\rvx_3$};
	\node[shape=circle,draw=black](y) at (3,-1.5) {\LARGE$\rvy$};
	\node[shape=circle,draw=black](t) at (0,-1.5) {\LARGE$\rvt$};
	\path[style=thick][->](x1) edge (x2);
	\path[style=thick][->](x1) edge (t);
	\path[style=thick][<->, bend right](x1) [dashed] edge (x2);
	\path[style=thick][->](t) edge (y);		
	\path[style=thick][->](x2) edge (y);		
	\path[style=thick][<->, bend left](t) [dashed] edge (x1);
	\path[style=thick][<->](x2) [dashed] edge (x3);
	\path[style=thick][<->](y) [dashed] edge (x3);
	\end{tikzpicture}
  \caption{The toy example $\cG^{toy}$ with bi-directed edges.}
  \label{fig:Gtoy_without_u}
\end{figure*}
\section{Main Results}
\label{section:main_results}
In this section, we state our main results relating sub-sampling and invariance testing to the back-door criteria. First, we define the notions of sub-sampling and invariance. Next, we provide : (a) a sufficient d-separation condition (that can be realized by our invariance test under sub-sampling) for a class of back-door criteria (Theorem \ref{thm_sufficiency}) and (b) a necessary d-separation condition (that can be realized by our invariance test under sub-sampling) implied by a class of back-door criteria (Theorem \ref{thm_necessity}). Combining these, we show equivalence between an invariance based d-separation condition and a class of back-door criteria (Corollary \ref{corollary:backdoor_CI_equivalence}). Finally, we propose an algorithm to find all subsets of the observed features that satisfy the back-door criteria when all the parents of the treatment variable are known and observed (Appendix \ref{appendix:finding_back_doors}).\\

\noindent{\bf Sub-sampling.} We create a sub-sampling (or environment) variable $\rve$ from the observed distribution $\Probability(\cV)$. Formally, we use a specific observed variable $\rvx_{\rvt} \in \rvbx^{(o)}$ and a subset of the observed variables $\rvbv \subseteq \cV \setminus \{\rvx_t,\rvy\}$ to sub-sample  $\rve$ i.e., $\rve = f(\rvx_{\rvt},\rvbv,\eta)$ where $\eta$ is a noise variable independent of $\cW$, and $f$ is a function of $\rvx_{\rvt}, \rvbv$ and $\eta$. The choices of $\rvx_{\rvt}$ and $\rvbv$, which differ for the sufficient condition (Theorem \ref{thm_sufficiency}) and the necessary condition (Theorem \ref{thm_necessity}), are made clear in the respective theorem statements. We let the sub-sampling variable $\rve$ be discrete and think of the distinct values of $\rve$ as identities of distinct artificial environments created via sub-sampling. While the case where $\rve$ is continuous is similar in spirit, we postpone the nuances for a future work. Graphically, sub-sampling variable introduces a node $\rve$, an edge from $\rvx_{\rvt}$ to $\rve$ and edges from every $\rvv \in \rvbv$ to $\rve$ in the DAG $\cG$. For example, see Figure \ref{fig:G_toy_with_e} where $\rve$ is sub-sampled in toy example $\cG^{toy}$ with $\rvx_{\rvt} = \rvx_1$ and $\rvbv = \{\rvt\}$.\\

\noindent{\bf Invariance testing.} 
Our main results relate the back-door criterion to d-separation statements of the type $\rve \dsep \rvy | \rvbz$ for some $\rvbz \subseteq \cV \setminus \{\rvy\}$. While our goal is to infer sets satisfying the back-door criterion from observational data, such d-separation statements cannot be tested for from observational data. To tackle this, we propose the notion of invariance testing. An invariance test is a conditional independence test of the form $\rve \indep \rvy | \rvbz$ for some $\rvbz \subseteq \cV \setminus \{\rvy\}$ i.e., an invariance test tests if the sub-sampling variable is independent of the outcome conditioned on $\rvbz$ for some $\rvbz \subseteq \cV \setminus \{\rvy\}$.

For our results involving invariance testing, we require the following limited set of faithfulness assumptions to ensure invariance testing with $\rve$ is equivalent to d-separation statements involving $\rve$.
\begin{assumption}\label{assumption3}
 (Sub-sampling Faithfulness) If $ \rve \indep \rvy | \rvbz$, then $ \rve \dsep \rvy | \rvbz, ~\forall \rvbz \subseteq \cV \setminus \{\rvy\} $.
\end{assumption}
Thus, in effect, we create \textit{synthetic} environments and show that a class of back-door criterion either implies or is equivalent to a suitable invariance test. For our framework to work, we only require the knowledge of $\rvx_{\rvt}$ from an expert. This is in contrast to any detailed knowledge of the structure of the DAG $\cG$.\\

\noindent{\bf Sufficient condition.} Suppose an expert provides us with an observed feature that has a direct edge or a bi-directed edge to the treatment. Let $\rve$ be sub-sampled using this feature as $\rvx_{\rvt}$ and any $\rvbv \subseteq \cV \setminus \{\rvx_{\rvt},\rvy\}$.
The following result shows that any subset of the remaining observed features satisfying a d-separation involving $\rve$ (or an invariance test under Assumption \ref{assumption3}) also satisfies the back-door criterion. See Appendix \ref{appendix:proof_thm_sufficiency} for a proof. 
\begin{restatable}{theorem}{thmSufficiency}
\label{thm_sufficiency}
Let Assumption \ref{assumption1}  be satisfied. Consider any $\rvx_{\rvt} \in \rvbx^{(o)}$ that has a direct edge or a bi-directed edge to $\rvt$ 
i.e., either $\rvx_{\rvt} \myrightarrow \rvt$, $\rvx_{\rvt} \doublearrow \rvt$ or $\rvx_{\rvt} \doubleplusrightarrow \rvt$. 
Let $\rve$ be sub-sampled using $\rvx_{\rvt}$ and $\rvbv$ for any $\rvbv \subseteq \cV \setminus \{\rvx_{\rvt},\rvy\}$ i.e., $\rve = f(\rvx_{\rvt}, \rvbv,\eta)$.
Let $\rvbz \subseteq \rvbx^{(o)} \setminus \{\rvx_{\rvt}\}$. If $\rve$ is d-separated from $\rvy$ by $\rvbz$ and $\rvt$ in $\cG$ i.e., $\rve \dsep \rvy | \rvbz, \rvt$ in $\cG$, 
then $\rvbz$ satisfies the back-door criterion relative to $(\rvt, \rvy)$ in $\cG$.
\end{restatable}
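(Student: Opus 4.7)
The plan is to verify the two clauses of the back-door criterion separately. The descendant clause is immediate from Assumption~\ref{assumption1}: the only descendant of $\rvt$ is $\rvy$, and since $\rvbz \subseteq \rvbx^{(o)} \setminus \{\rvx_{\rvt}\}$ is disjoint from $\{\rvy\}$, no element of $\rvbz$ is a descendant of $\rvt$. All the substantive work lies in showing that $\rvbz$ blocks every back-door path between $\rvt$ and $\rvy$ in $\cG$.

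I would argue this by contrapositive. Suppose some (WLOG simple) back-door path $p$ from $\rvt$ to $\rvy$ is not blocked by $\rvbz$; from $p$ I construct an unblocked walk $q$ from $\rve$ to $\rvy$ given $\rvbz \cup \{\rvt\}$, which contradicts the hypothesis $\rve \dsep \rvy \mid \rvbz, \rvt$. The walk $q$ prepends to $p$ a short segment connecting $\rve$ to $\rvt$ through $\rvx_{\rvt}$: if $\rvx_{\rvt} \myrightarrow \rvt$ is present, take $q = \rve \leftarrow \rvx_{\rvt} \myrightarrow \rvt \cdots \rvy$; if $\rvx_{\rvt} \doublearrow \rvt$ is present, unfold the bi-directed edge into its latent common parent $\rvu \in \cU$ and take $q = \rve \leftarrow \rvx_{\rvt} \leftarrow \rvu \myrightarrow \rvt \cdots \rvy$; the case $\doubleplusrightarrow$ reduces to either sub-construction. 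In every case, $\rvt$ becomes a collider on $q$, because the back-door path $p$ enters $\rvt$ with an arrowhead and the prepended segment also pushes an arrowhead into $\rvt$.

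To verify that $q$ is unblocked given $\rvbz \cup \{\rvt\}$, I would inspect each interior node. At $\rvx_{\rvt}$ the walk has a fork or chain, and $\rvx_{\rvt}$ is excluded from $\rvbz$ by the hypothesis on $\rvbz$ and trivially from $\{\rvt\}$, so this non-collider is not conditioned on. In the bi-directed case the latent $\rvu$ is a fork on $q$; it is unobserved, so it also lies outside the conditioning set. At $\rvt$, the collider in $q$, the conditioning on $\rvt$ opens the node. For the remaining interior of $q$ (namely the interior of $p$), $p$ is unblocked by $\rvbz$ by assumption, and enlarging the conditioning set by $\rvt$ preserves this: since $p$ is simple, $\rvt$ does not appear as an interior non-collider; and since $\rvy$ is the only descendant of $\rvt$ (and is the endpoint of $p$), any collider on $p$ that was already unblocked remains unblocked.

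The main obstacle is the collider check at $\rvt$, which is what ties the sub-sampling construction to the back-door structure. It rests on two ingredients pressed together: the back-door definition supplies an arrowhead into $\rvt$ at the start of $p$, and each admissible configuration of $(\rvx_{\rvt}, \rvt)$ must supply an arrowhead into $\rvt$ from the prepended segment. Handling the bi-directed case cleanly is where making the latent semi-Markovian structure explicit matters, since the needed arrowhead is supplied by $\rvu \myrightarrow \rvt$ rather than by a direct edge from $\rvx_{\rvt}$ into $\rvt$.
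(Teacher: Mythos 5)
Your proof is correct, and its core is the same as the paper's: argue by contradiction/contrapositive, prepend $\rve \myleftarrow \rvx_{\rvt}$ (an edge that exists because $\rve$ is sub-sampled from $\rvx_{\rvt}$) to an offending unblocked back-door path, and observe that conditioning on $\rvt$ opens the collider created at $\rvt$ by the arrowhead from the anchor side together with the arrowhead that the back-door path already puts into $\rvt$; your handling of the descendant clause via Assumption \ref{assumption1} matches the paper's use of Remark \ref{remark1}. Where you differ is in the bookkeeping: the paper splits into cases according to whether $\rvx_{\rvt}$ lies on the offending path, taking a \emph{shortest} unblocked path and truncating it at $\rvx_{\rvt}$ when it does (so that it only ever manipulates genuine paths), whereas you prepend uniformly and accept that the result may be a walk with repeated nodes, discharging this by the standard equivalence between d-connection along paths and along active walks. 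That walk-to-path lemma is the one ingredient your argument leans on that the paper neither states nor proves; it is standard, but since the paper's d-separation machinery (Appendix \ref{appendix:d_separation}) is phrased purely in terms of paths, you should either cite it explicitly or prove it, otherwise the claim that your walk contradicts $\rve \dsep \rvy \mid \rvbz, \rvt$ is not formally closed within the paper's definitions. Your explicit unfolding of the bi-directed edge $\rvx_{\rvt} \doublearrow \rvt$ into a latent fork $\rvx_{\rvt} \myleftarrow \rvu \myrightarrow \rvt$ is equivalent to the paper's device of treating bi-directed edges as segments that are never blocked (the latent is unobserved and is a fork, hence never a conditioned non-collider nor a closed collider), and your reduction of the $\doubleplusrightarrow$ case to the directed-edge sub-construction is fine. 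The net effect is that your route trades the paper's case analysis for a cleaner uniform construction at the cost of one external (but standard) lemma.
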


\begin{remark}
A stronger result that subsumes Theorem \ref{thm_sufficiency} was proven in \cite{entner2013data}; we provide our theorem for clarity of exposition and completeness.
\end{remark}
\noindent{\bf Necessary condition.} Suppose an expert provides us with an observed feature that has a direct edge to the treatment. Let $\rve$ be sub-sampled using this variable as $\rvx_{\rvt}$ and any $\rvbv \subseteq \{\rvt\}$. 
The following result shows that any subset of the remaining observed features satisfying the back-door criterion satisfies a specific d-separation involving $\rve$ (as well as an invariance test).
See Appendix \ref{appendix:proof_thm_necessity} for a proof.

\begin{restatable}{theorem}{thmNecessity}
\label{thm_necessity}
Let Assumption \ref{assumption1}  be satisfied. Consider any $\rvx_{\rvt} \in \rvbx^{(o)}$ that has a direct edge to $\rvt$ i.e., $\rvx_{\rvt} \myrightarrow \rvt$ or $\rvx_{\rvt} \doubleplusrightarrow \rvt$.
Let $\rve$ be sub-sampled using $\rvx_{\rvt}$ and $\rvbv$ for any $\rvbv \subseteq \{\rvt\}$ i.e., $\rve = f(\rvx_{\rvt}, \rvbv, \eta)$.
Let $\rvbz \subseteq \rvbx^{(o)} \setminus \{\rvx_{\rvt}\}$. 
If $\rvbz$ satisfies the back-door criterion relative to $(\rvt, \rvy)$ in $\cG$, then $\rve$ is d-separated from $\rvy$ by $\rvbz$ and $\rvt$ in $\cG$ i.e., $\rve \dsep \rvy | \rvbz, \rvt$ in $\cG$.
\end{restatable}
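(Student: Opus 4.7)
Augment $\cG$ by adjoining the node $\rve$ as a child of $\rvx_{\rvt}$ (and, when $\rvbv = \{\rvt\}$, also of $\rvt$), and call the resulting DAG $\cG'$. The statement is precisely that $\rve \dsep \rvy \mid \rvbz, \rvt$ in $\cG'$, which I prove by the contrapositive: assuming that some path $p$ from $\rve$ to $\rvy$ in $\cG'$ is open given $\rvbz \cup \{\rvt\}$, I will construct a back-door path from $\rvt$ to $\rvy$ in $\cG$ that is open given $\rvbz$ alone, contradicting the back-door hypothesis on $\rvbz$.

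\emph{Reduction to a path in $\cG$.} The path $p$ must leave $\rve$ along one of its parents, so it starts with $\rve \myleftarrow \rvt$ or $\rve \myleftarrow \rvx_{\rvt}$. In the first case Assumption~\ref{assumption1} ensures that the only outgoing edge of $\rvt$ besides $\rvt \myrightarrow \rve$ is $\rvt \myrightarrow \rvy$, so $\rvt$ is a non-collider on $p$ and, being in the conditioning set, blocks $p$---a contradiction. Hence $p = \rve \myleftarrow \rvx_{\rvt} - q$ where $q: \rvx_{\rvt} - \cdots - \rvy$ is a path in $\cG$ open given $\rvbz \cup \{\rvt\}$ (since $\rvx_{\rvt}$ is a non-collider on $p$ with $\rvx_{\rvt} \notin \rvbz \cup \{\rvt\}$). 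If $\rvt$ lies internally on $q$, Assumption~\ref{assumption1} limits $\rvt$'s only outgoing option to $\rvt \myrightarrow \rvy$; either $\rvt$ is a non-collider on $q$ (then the conditioning set blocks $q$---contradiction), or $\rvt$ is a collider on $q$, in which case the tail of $q$ from $\rvt$ to $\rvy$ is itself a back-door path open given $\rvbz \cup \{\rvt\}$ to which the argument below applies directly.

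\emph{Main case, $\rvt \notin q$.} Pre-pending the edge $\rvt \myleftarrow \rvx_{\rvt}$ to $q$ produces a simple back-door path $r : \rvt \myleftarrow \rvx_{\rvt} - \cdots - \rvy$ from $\rvt$ to $\rvy$ in $\cG$. The back-door criterion forces $\rvbz$ to block $r$ at some internal node $W$; since $\rvx_{\rvt}$ is a non-collider on $r$ and $\rvx_{\rvt} \notin \rvbz$, the blocker $W$ is actually internal to $q$ with an identical local edge pattern. Case-split on $W$: if $W$ is a non-collider on $r$ with $W \in \rvbz$, then $W$ also blocks $q$ given $\rvbz \cup \{\rvt\}$---contradiction. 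If $W$ is a collider on $r$ with both $W$ and $\mathrm{desc}(W)$ disjoint from $\rvbz$, the same block works on $q$ given $\rvbz \cup \{\rvt\}$ \emph{unless} $\rvt \in \mathrm{desc}(W)$, which is the only surviving sub-case.

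\emph{Main obstacle and resolution.} In this final sub-case, a directed path $\sigma : W \myrightarrow U_1 \myrightarrow \cdots \myrightarrow U_l = \rvt$ exists in $\cG$. The plan is to splice $\sigma$ (reversed) with the sub-path of $q$ from $W$ to $\rvy$, producing a new back-door path $r^\star : \rvt \myleftarrow U_{l-1} \myleftarrow \cdots \myleftarrow W - \cdots - \rvy$. Each $U_i$ is a non-collider (chain) on $r^\star$ and a descendant of $W$, hence not in $\rvbz$ by the collider hypothesis on $W$; $W$ becomes a non-collider on $r^\star$ (one outgoing edge to $U_1$, one incoming from $q$) and satisfies $W \notin \rvbz$. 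If a further collider on the $q$-tail of $r^\star$ again has $\rvt$ as a descendant, the rerouting is invoked recursively. The hard part will be (i) choosing $\sigma$ to avoid vertex repetition with the $W$-to-$\rvy$ portion of $q$, for which I plan a ``last-collision'' selection to guarantee simplicity of $r^\star$, and (ii) setting up a well-founded induction---presumably on the length of the candidate back-door path, or on the number of ``$\rvt$-activated'' colliders that remain---that terminates with a simple back-door path open given $\rvbz$, contradicting the back-door criterion for $\rvbz$ on $(\rvt, \rvy)$.
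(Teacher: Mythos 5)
Your plan follows essentially the same route as the paper's proof: argue by contradiction from an open path between $\rve$ and $\rvy$ given $\rvbz,\rvt$, show it must leave $\rve$ through $\rvx_{\rvt}$ (the $\rve \myleftarrow \rvt$ start is killed because $\rvt$ is a conditioned non-collider there), convert the resulting $\rvx_{\rvt}$--$\rvy$ path into a back-door path from $\rvt$ to $\rvy$, and repair colliders whose activation comes only from $\rvt$ by splicing in a directed path down to $\rvt$. The one step you defer --- the well-founded induction for repeated rerouting --- is precisely where the paper makes its single non-obvious move, and that move removes the recursion entirely: among all colliders on the $\rvx_{\rvt}$--$\rvy$ path whose openness given $\rvbz,\rvt$ relies on $\rvt$ being a descendant (they are not in $\rvbz$ and have no descendant in $\rvbz$, but have a directed path to $\rvt$ unblocked given $\rvbz$), pick the one closest to $\rvy$, say $\rvx_c$. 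By that choice every collider strictly between $\rvx_c$ and $\rvy$ is already opened by $\rvbz$ itself, so that segment stays open after unconditioning $\rvt$, and concatenating the reversed directed path $\rvt \myleftarrow \cdots \myleftarrow \rvx_c$ (all of whose nodes are descendants of $\rvx_c$, hence outside $\rvbz$, and at $\rvx_c$ no collider is created) with that segment yields in one shot an open back-door path given $\rvbz$ that avoids the edge $\rvt \myrightarrow \rvy$, contradicting the back-door criterion via Remark \ref{remark1}. Your iterative scheme does terminate for the reason you suspect --- each rerouting moves the splice point strictly closer to $\rvy$ along $q$ and there are finitely many nodes --- so your measure in (ii) works, and the closest-to-$\rvy$ choice is simply the closed form of that recursion; your handling of $\rvt$ internal to $q$ (collider case reduces to the same argument on the tail) matches the paper's case (2). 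As for your worry (i), the possible intersection of the spliced directed path with the remaining segment, the paper's write-up does not treat it explicitly either; it is resolved by the standard fact that a d-connecting walk can be shortcut at a repeated vertex into a d-connecting path (or by your ``last-collision'' selection), so it is a cosmetic rather than substantive issue.
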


\begin{remark}
Theorem \ref{thm_necessity} is useful to find out (some) sets that cannot be valid adjustments (see comparison with \cite{entner2013data} and \cite{gultchin2020differentiable} as well as comparison with \cite{cheng2020towards} below). Knowing whether a given set of features is valid for adjustment or not is crucial -- especially in healthcare and social sciences -- to avoid using decisions based on biased estimates from observational studies.
\end{remark}

\begin{remark}
We note that Theorem \ref{thm_necessity} requires $\rvx_{\rvt}$ to be a parent of $\rvt$ (i.e., a direct edge to $\rvt$) whereas Theorem \ref{thm_sufficiency} requires $\rvx_{\rvt}$ to be a parent of $\rvt$ or a spouse of $\rvt$ (i.e., a direct or a bi-directed edge to $\rvt$).
\end{remark}

\noindent{\bf Equivalence.} Suppose an expert provides us with a feature that has a direct edge to the treatment. Let $\rve$ be sub-sampled using this variable as $\rvx_{\rvt}$ and any $\rvbv \subseteq \{\rvt\}$. Combining Theorem \ref{thm_sufficiency} and Theorem \ref{thm_necessity}, we have the following Corollary showing equivalence of the back-door criterion and a specific d-separation involving $\rve$ (as well as an invariance test under Assumption \ref{assumption3}).
\begin{corollary}
\label{corollary:backdoor_CI_equivalence}
Let Assumption \ref{assumption1}  be satisfied. Consider any $\rvx_{\rvt} \in \rvbx^{(o)}$ that has a direct edge to $\rvt$ i.e., $\rvx_{\rvt} \myrightarrow \rvt$ or $\rvx_{\rvt} \doubleplusrightarrow \rvt$. Let $\rve$ be sub-sampled using $\rvx_{\rvt}$ and $\rvbv$ for any $\rvbv \subseteq \{\rvt\}$ i.e., $\rve = f(\rvx_{\rvt}, \rvbv, \eta)$. Let $\rvbz \subseteq \rvbx^{(o)} \setminus \{\rvx_{\rvt}\}$. Then, $\rvbz$ satisfies back-door criterion relative to the ordered pair of variables $(\rvt, \rvy)$ in $\cG$ if and only if $\rve$ is d-separated from $\rvy$ by $\rvbz$ and $\rvt$ in $\cG$ i.e., $\rve \dsep \rvy | \rvbz, \rvt$ in $\cG$.
\end{corollary}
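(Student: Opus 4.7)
The plan is that this corollary is essentially a packaging result: each direction of the biconditional is already established by one of the two preceding theorems, so the proof reduces to verifying that the hypotheses imposed in the corollary subsume the hypotheses required on each side. I would structure the proof as two short paragraphs, one per direction.

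For the forward direction (back-door $\Rightarrow$ d-separation), Theorem \ref{thm_necessity} assumes Assumption \ref{assumption1}, a variable $\rvx_{\rvt} \in \rvbx^{(o)}$ with $\rvx_{\rvt} \myrightarrow \rvt$ or $\rvx_{\rvt} \doubleplusrightarrow \rvt$, a sub-sampling variable $\rve = f(\rvx_{\rvt}, \rvbv, \eta)$ with $\rvbv \subseteq \{\rvt\}$, and $\rvbz \subseteq \rvbx^{(o)} \setminus \{\rvx_{\rvt}\}$. These are precisely the hypotheses imposed by the corollary, so a direct invocation of Theorem \ref{thm_necessity} yields $\rve \dsep \rvy | \rvbz, \rvt$ whenever $\rvbz$ satisfies the back-door criterion relative to $(\rvt, \rvy)$.

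For the reverse direction (d-separation $\Rightarrow$ back-door), Theorem \ref{thm_sufficiency} requires Assumption \ref{assumption1}, an $\rvx_{\rvt}$ with a direct edge or a bi-directed edge to $\rvt$, and an arbitrary $\rvbv \subseteq \cV \setminus \{\rvx_{\rvt}, \rvy\}$. The corollary's assumption of a direct edge is strictly stronger than "direct or bi-directed edge,'' and $\rvbv \subseteq \{\rvt\}$ is a special case of $\rvbv \subseteq \cV \setminus \{\rvx_{\rvt}, \rvy\}$. Hence the hypotheses of Theorem \ref{thm_sufficiency} are met, and the theorem delivers that $\rvbz$ satisfies the back-door criterion relative to $(\rvt, \rvy)$ whenever $\rve \dsep \rvy | \rvbz, \rvt$.

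Because both implications are handed to us by the constituent theorems, there is no real obstacle at the level of the corollary; all the substantive graph-theoretic work has been absorbed into Theorems \ref{thm_sufficiency} and \ref{thm_necessity}. The only subtle point worth flagging in the write-up is the asymmetry in the edge condition on $\rvx_{\rvt}$: the corollary adopts the stronger "direct edge'' requirement of Theorem \ref{thm_necessity} rather than the weaker "direct or bi-directed edge'' of Theorem \ref{thm_sufficiency}, since the necessary-condition direction can fail when $\rvx_{\rvt}$ is merely a spouse of $\rvt$.
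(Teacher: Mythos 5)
Your proposal is correct and matches the paper's own treatment: the corollary is obtained exactly by combining Theorem \ref{thm_sufficiency} (for the direction $\rve \dsep \rvy \mid \rvbz, \rvt$ $\Rightarrow$ back-door) with Theorem \ref{thm_necessity} (for the converse), after noting that the corollary's hypotheses --- a direct edge from $\rvx_{\rvt}$ to $\rvt$ and $\rvbv \subseteq \{\rvt\}$ --- satisfy the requirements of both theorems. Your remark on the asymmetry of the edge condition (the corollary adopting the stronger parent requirement needed for the necessity direction) is also consistent with the paper's own discussion.
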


\begin{remark}
\label{remark:m_bias}
While our framework captures a broad class of back-door criteria, it does not cover all the back-door criteria. For example, our method cannot capture that the M-bias problem \citep{liu2012implications, imbens2020potential} where no observed feature is a parent of the treatment. (see Appendix \ref{appendix:m_bias} for details). 
\end{remark}
 
\noindent{\bf An illustrative example.} First, we illustrate Corollary \ref{corollary:backdoor_CI_equivalence} with our toy example $\cG^{toy}$.
We let $\rvx_{\rvt} = \rvx_1$ and sub-sample $\rve$ using $\rvx_1$ and $\rvt$ (see Figure \ref{fig:G_toy_with_e}). For this example, $\rvbz \subseteq \{\rvx_2,\rvx_3\}$ i.e., $\rvbz \in \{\varnothing, \{\rvx_2\}, \{\rvx_3\}, \{\rvx_2, \rvx_3\}\}$. It is easy to verify that $\rvbz = \{\rvx_2\}$ satisfies the back-door criterion relative to $(\rvt, \rvy)$ in $\cG^{toy}$ but $\rvbz = \varnothing$, $\rvbz = \{\rvx_3\}$, and $\rvbz = \{\rvx_2,\rvx_3\}$ do not. 
Similarly, it is easy to verify that $\rve \dsep \rvy | \rvx_2, \rvt$ but $\rve \notdsep \rvy | \rvt$ , $\rve \notdsep \rvy | \rvx_2, \rvt$, and $\rve \notdsep \rvy | \rvx_2, \rvx_3 \rvt$ in $\cG^{toy}$. See Appendix \ref{appendix:illustrative_example} for an illustration tailored to Theorem \ref{thm_sufficiency}.

\begin{figure}[ht!]
\centering
\begin{subfigure}{.5\textwidth}
  \centering
	\begin{tikzpicture}[scale=0.7, every node/.style={transform shape}, > = latex, shorten > = 1pt, shorten < = 2pt]
	\node[shape=circle,draw=black](x1) at (0,2) {\LARGE${\rvx_1}$};
	\node[shape=circle,draw=black](x2) at (3,2) {\LARGE$\rvx_2$};
	\node[shape=circle,draw=black](x3) at (5,0.25) {\LARGE$\rvx_3$};
	\node[shape=circle,draw=black](e) at (-2,0.25) {\LARGE$\rve$};
	\node[shape=circle,draw=black](y) at (3,-1.5) {\LARGE$\rvy$};
	\node[shape=circle,draw=black](t) at (0,-1.5) {\LARGE$\rvt$};
	\path[style=thick][->](x1) edge (x2);
	\path[style=thick][->](x1) edge (t);
	\path[style=thick][<->, bend right](x1) [dashed] edge (x2);
	\path[style=thick][->](t) edge (y);		
	\path[style=thick][->](x2) edge (y);		
	\path[style=thick][<->, bend left](t) [dashed] edge (x1);
	\path[style=thick][<->](x2) [dashed] edge (x3);
	\path[style=thick][<->](y) [dashed] edge (x3);
	\path[style=thick, color = red][->](t) edge (e);
	\path[style=thick, color = red][->](x1) edge (e);
	\end{tikzpicture}
\caption{$\rve$ has been sub-sampled using $\rvx_1$ and $\rvt$.}
  \label{fig:G_toy_with_e}
\end{subfigure}%
\begin{subfigure}{.5\textwidth}
  \centering
	\begin{tikzpicture}[scale=0.7, every node/.style={transform shape}, > = latex, shorten > = 1pt, shorten < = 2pt]
	\node[shape=circle,draw=black](x1) at (0,2) {\LARGE$\rvx_1$};
	\node[shape=circle,draw=black](x2) at (3,2) {\LARGE$\rvx_2$};
	\node[shape=circle,draw=black](x3) at (5,0.25) {\LARGE$\rvx_3$};
	\node[shape=circle,draw=black](e) at (-2,0.25) {\LARGE$\rve$};
	\node[shape=circle,draw=black](y) at (3,-1.5) {\LARGE$\rvy$};
	\node[shape=circle,draw=black](t) at (0,-1.5) {\LARGE$\rvt$};
	\path[style=thick][->](x1) edge (x2);
	\path[style=thick][->](x1) edge (t);
	\path[style=thick][<->, bend right](x1) [dashed] edge (x2);
	\path[style=thick][->](t) edge (y);		
	\path[style=thick][->](x2) edge (y);		
	\path[style=thick][<->, bend left](t) [dashed] edge (x1);
	\path[style=thick][<->](x2) [dashed] edge (x3);
	\path[style=thick][<->](y) [dashed] edge (x3);
	\path[style=thick, color = red][->](t) edge (e);
	\path[style=thick, color = red][->](x3)  edge (e);
	\end{tikzpicture}
\caption{$\rve$ has been sub-sampled using $\rvx_3$ and $\rvt$.}
  \label{fig:G_toy_with_incorrect_e}
\end{subfigure}
\caption{The toy example $\cG^{toy}$ with different sub-sampling strategies.}
 \label{fig:illustrative_example}
\end{figure}
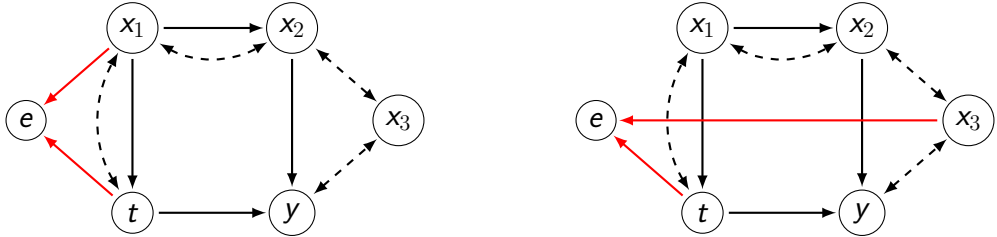

Next, we illustrate the significance of the criteria that qualifies $\rvx_{\rvt}$ to our results. In Figure \ref{fig:G_toy_with_e}, we let $\rvx_{\rvt} = \rvx_1$ ($\rvx_1$ has a direct or bi-directed edge to $\rvt$). Here, the d-separation $\rve \dsep \rvy | \rvx_2, \rvt$ holds implying that $\rvx_2$ satisfies back-door relative to $(\rvt,\rvy)$. In Figure \ref{fig:G_toy_with_incorrect_e}, we let $\rvx_{\rvt} = \rvx_3$ ($\rvx_3$ does not have a direct or bi-directed edge to $\rvt$). Here, the d-separation $\rve \dsep \rvy | \rvx_3, \rvt$ holds but $\rvx_3$ does not satisfy the back-door relative to $(\rvt,\rvy)$.\\

\noindent\textbf{Comparison with \cite{entner2013data} and \cite{gultchin2020differentiable}.} In \cite{entner2013data}, $\rvx_{a}$ is an anchor variable if it satisfies the observational criterion $\rvx_{a} \notdsep \rvy | \rvbz $ for some $\rvx_{a}$ and some $\rvbz$ not containing $\rvx_{a}$. Further, if the CI test implied by (the d-separation condition) $\rvx_{a} \dsep \rvy | \rvbz, \rvt $ is satisfied, then $\rvbz$ is shown to be a valid adjustment. While our sufficient condition in Theorem \ref{thm_sufficiency} is implied by this result, we provide a proof tailored to our condition and notations in Appendix \ref{appendix:proof_thm_sufficiency} for completeness.

However, the reverse direction in \cite{entner2013data} is as follows: if some $\rvbz$ (not containing $\rvx_{a}$) is a valid adjustment, then $\rvx_{a} \dsep \rvy \lvert \rvbz, \rvt $, only when $\rvx_{a}$ does not have any (observed or unobserved) parent in addition to satisfying the criteria for the anchor variable. Under our criterion, if $\rvx_\rvt$ is a direct parent of $\rvt$, the reverse direction can be shown in generality (our Theorem \ref{thm_necessity}).

As a concrete example, in $\cG^{toy}$, \cite{entner2013data} cannot conclude that $\emptyset, \{\rvx_3\}, \{\rvx_2,\rvx_3\}$ are not admissible i.e., not valid back-doors (because $\rvx_a=\rvx_1$ has an unobserved parent)  while our Theorem \ref{thm_necessity} can be used to conclude that. See the empirical comparison in Appendix \ref{app:entner}. Likewise, \cite{gultchin2020differentiable}, which build on \cite{entner2013data}, also cannot conclude that $\emptyset, \{\rvx_3\}, \{\rvx_2,\rvx_3\}$ are not valid adjustment sets in $\cG^{toy}$.\\

\noindent\textbf{Comparison with \cite{cheng2020towards}.} In \cite{cheng2020towards}, the anchor variable $\rvx_a$ is a \textit{COSO variable} i.e., either a parent or a spouse of the treatment but neither a parent or a spouse of the outcome in the true maximal ancestral graph (MAG). Our criteria for $\rvx_{\rvt}$ is different from this, and our result is neither implied by nor implies the result of \cite{cheng2020towards}.

Consider an example which is obtained by adding the edge $\rvx_1 \rightarrow \rvy $ to $\cG^{toy}$ in Figure \ref{fig:Gtoy_without_u}. The results of \cite{cheng2020towards} are not applicable since the anchor variable $\rvx_t$ is a parent of the outcome in the true DAG (and thereby in the MAG). However, $\rvx_t$ is a parent of the treatment (i.e., it satisfies our criteria), and our Theorem \ref{thm_necessity} is applicable. It can be used to conclude that $\emptyset, \{\rvx_2\}, \{\rvx_3\}$ and $\{\rvx_2,\rvx_3\}$ are not admissible sets. See the empirical comparison in Appendix \ref{app:cheng}.\\

\noindent{\bf Connections to Instrument Variable (IV).} While our anchor (i.e., $\rvx_{\rvt}$) may look similar to IV, this is not the case: (i) An IV needs to satisfy the exclusion restriction i.e., it needs to be d-separated from $\rvy$ in $\cG_{-\rvt}$ (i.e., the graph obtained by removing the edge from $\rvt$ to $\rvy$ in $\cG$). However, we do not require $\rvx_{\rvt}$ to be d-separated from $\rvy$ in $\mathcal{G}_{-\rvt}$. (ii) Unlike our work, IVs can \textit{only} provide bounds on ATE in non-parametric models; they provide perfect identifiability of ATE only in linear models \citep{balke1997bounds}.
\section{Algorithms}
\label{section:algorithm}
Our invariance criterion in Corollary \ref{corollary:backdoor_CI_equivalence} requires us to find  a $\rvbz$ such that  $\rve \indep \rvy | \rvbz, \rvt$.  In this section, given $n$ observational samples, we propose two algorithms that enable finding valid adjustment sets that pass our invariance criterion as well as use it to estimate ATE.
 
\subsection{Testing and Subset Search}
First, we propose an algorithm (Algorithm \ref{alg:subset_search}) based on conditional independence (CI) testing and it works as follows. The algorithm takes the sub-sampling variable $\rve$ that is a function of $\rvx_{\rvt}$ ($\rve$ could also be a function of both $\rvx_{\rvt}$ and $\rvt$). The algorithm considers the set $\cX$ of all candidate adjustment sets that do not contain $\rvx_{\rvt}$. For every candidate adjustment set $\rvbz$ in $\cX$, our algorithm checks for CI between $\rve$ and $\rvy$ conditioned on $\rvbz$ and $\rvt$. If this CI holds, then $\rvbz$ satisfies the back-door criterion and is a valid adjustment set (see Corollary \ref{corollary:backdoor_CI_equivalence} and Assumption \ref{assumption3}). The ATE estimated by our algorithm is the average of the ATE estimated by regressing on such valid adjustment sets. On actual datasets, we use the following criterion as acceptance for CI: a p-value threshold $p_{value}$ is used to check if the p-value returned by the CI tester is greater than this threshold. We use the RCoT CI tester (see Appendix \ref{appendix:CI}).

Similar to \cite{entner2013data}, the computational complexity of Algorithm \ref{alg:subset_search} grows exponentially in the dimensionality of $\rvbx^{(o)}$. This makes it impractical for high dimensional settings.

\begin{algorithm}
\KwInput{ $n, n_r, \rvt, \rvy, \rve, \cX, p_{value}$}
\KwOutput{$\text{ATE}(\cX)$}
\KwInitialization{$\text{ATE}(\cX) = 0, c_1 = 0$}
\For(\tcp*[h]{Use a different train-test split in each run}) {$r = 1,\cdots,n_r$} 
{
    $c_2 = 0$; $\text{ATE}_{\text{d}} = 0$;\\
    \For {$\rvbz \in \cX$}
    {
    \If{$\text{CI}(\rve \indep \rvy | \rvbz, \rvt) > p_{value}$}
        {
        $c_2 = c_2 + 1$; $\text{ATE}_{\text{d}} = \text{ATE}_{\text{d}} + \frac{1}{n} \sum_{i=1}^{n} (\Expectation[\rvy | \rvbz = \svbz^{(i)}, \rvt = 1] - \Expectation[\rvy | \rvbz = \svbz^{(i)}, \rvt = 0])$;\\
        }
    }
    \If{$c_2 > 0$}
    {
    $c_1 = c_1 + 1$; $\text{ATE}(\cX) = \text{ATE}(\cX) + \text{ATE}_{\text{d}} / c_2$; \\
    }
}
$\text{ATE}(\cX) = \text{ATE}(\cX) / c_1;$
\caption{ATE estimation using subset search.}
\label{alg:subset_search}
\end{algorithm}

\subsection{IRM based Representation Learning}
To alleviate these concerns, we propose a second algorithm based on invariant risk minimization (IRM). This leverages our use of the subsampling variable and creation of synthetic environments. IRM was proposed to address out-of-distribution generalization for supervised learning tasks and is aimed at learning a predictor that relies only on the causal parents of the label $\rvy$ and ignore any other spurious variables. IRM takes data from different environments indexed as $\rve$ and learns a representation $\Phi$ that transforms the features $\rvbx$ such that $\rve \perp \rvy | \Phi(\rvbx)$. Given that our invariance criterion is of a similar form, and involves checking invariance of the outcome $\rvy$ conditioned on the feature set $\rvbz$ and the treatment $\rvt$ across environments $\rve$, IRM is a perfect fit to test this criterion.  

Our IRM based procedure leverages IRMv1 from \cite{arjovsky2019invariant} with linear representation $\Phi$. We take the data in treatment group $\rvt=1$ (or the control group $\rvt=0$) and divide it into different environments based on $\rve$ and  pass it as input to IRMv1. From the theory of IRM it follows that if the absolute value of some coefficient of $\Phi$ is low, then the corresponding component is unlikely to be a part of the subset that satisfies the invariance criterion. Following this observation, we define a vector  of absolute values of $\Phi$ and denote it as $|\Phi|$. We divide the values in $|\Phi|$ into two clusters using k-means clustering with $k=2$. We select the subset of the features that correspond to the cluster with a higher mean absolute value. We estimate the treatment effect by adjusting over this selected subset. Further details of the procedure can be found in Algorithm \ref{alg:irm_sear} (we describe the algorithm for treatment group and can run a similar procedure for control group). While the computational complexity of IRMv1 (and hence Algorithm \ref{alg:irm_sear}) is unclear yet, in practice, Algorithm \ref{alg:irm_sear} is much faster and scales better (see Figure \ref{fig:toy5_expts}) than Algorithm \ref{alg:subset_search}.

\begin{algorithm}
\KwInput{ $n, n_r, \rvt,\rvy, \rve, \rvbx^{(o)} \setminus \rvx_t$}
\KwOutput{$\text{ATE}$}
\KwInitialization{$\text{ATE} = 0, k=2$}
\For(\tcp*[h]{Use a different train-test split in each run}) {$r = 1,\cdots,n_r$} 
{
    
    { $\Phi \leftarrow \mathsf{IRMv1}(\rvy, \rvbx^{(o)} \setminus \rvx_t, \rve, \rvt=1)$ \\ 
      $ \rvbz_{\mathsf{irm}} \leftarrow \mathsf{kmeans}(|\Phi|, k)$ \tcp*[h]{$\rvbz_{\mathsf{irm}}$ is the subset of variables in the cluster with higher mean absolute value} \\
         $\text{ATE} = \text{ATE} + \frac{1}{n} \sum_{i=1}^{n} (\Expectation[\rvy |\rvbz_{\mathsf{irm}}= \svbz^{(i)}, \rvt = 1] - \Expectation[\rvy | \rvbz_{\mathsf{irm}}= \svbz^{(i)}, \rvt = 0])$
    }
}
$\text{ATE} = \text{ATE} / n_r;$
\caption{ATE estimation using IRM }
\label{alg:irm_sear}
\end{algorithm}
\section{Experiments}
\label{section:experiments}

\noindent{\bf ATE estimation and Performance metrics:}
To test how successful our method is with respect to finding valid adjustments, we consider estimating the ATE of $\rvt$ on $\rvy$. When the ground truth ATE is known, we report the absolute error in ATE prediction (averaged over $n_r$ runs). When the ground truth ATE is unknown, we report the estimated ATE (averaged over $n_r$ runs).

As described in Section \ref{subsection:semi-markovian}, ATE can be estimated from observational data by regressing $\rvy$ for the control and the treatment sub-populations on a valid adjustment set. We note that our work is complementary to works on ATE estimation as our focus is on finding valid adjustments. Once we select a valid adjustment, any of the available ATE estimation methods could be used. We use ridge regression with cross-validation as the regression model for baseline as well as our method.\\
 
\noindent{\bf Environment variable and parameters.}
For all of our experiments, we let $n_r = 100$ and $p_{value} = \{0.1,0.2,0.3,0.4,0.5\}$. For our experiments we create an environment variable as being a random function of $\rvx_{\rvt}$ and $\rvt$ (i.e., $\rve = f(\rvx_{\rvt}, \rvt)$). Exact details of their generation and alternate settings, such as the case of $\rve = f(\rvx_{\rvt})$ (i.e., $\rvbv = \emptyset$), are given in Appendix \ref{appendix:experiments}.\\

\noindent{\bf Algorithms.} We compare the following algorithms:
\begin{enumerate}
    \item \texttt{Baseline}: This uses regression on all of the observed features i.e., $\rvbx^{(o)}$ to estimate ATE. In other words, it assumes $\rvbx^{(o)}$ is ignorable. See Appendix \ref{appendix:baseline} for a pseudo-code of \texttt{Baseline}.
    \item \texttt{Exhaustive}: Given $\rvx_{\rvt}$, this applies Algorithm \ref{alg:subset_search} with $\cX$ being the set of all subsets of $\rvbx^{(o)} \setminus \rvx_{\rvt}$.
    \item \texttt{Sparse}: Given $\rvx_{\rvt}$, this applies Algorithm \ref{alg:subset_search} with $\cX$ being the set of all subsets of $\rvbx^{(o)} \setminus \rvx_{\rvt}$ of size at most $k$ (which is determined in the context).
    \item \texttt{IRM-t}: Given $\rvx_{\rvt}$, this applies Algorithm \ref{alg:irm_sear} to the samples from the treatment group.
    \item \texttt{IRM-c}: Given $\rvx_{\rvt}$, this applies Algorithm \ref{alg:irm_sear} to the samples from the control group.
\end{enumerate}

\subsection{Synthetic Experiment}
\label{subssection:synthetic_experiments}
\noindent{\bf Description.}  Consider the toy example $\cG^{toy}$ from Figure \ref{fig:Gtoy_with_u} with unobserved features $\rvu_1 \in \Reals, \rvu_2 \in \Reals^{\td}, \rvu_3 \in \Reals^{\td}, \rvu_4 \in \Reals^{\td}$ and observed features $\rvx_1 \in \Reals, \rvx_2 \in \Reals^{\td}, \rvx_3 \in \Reals^{\td}$ i.e., $\rvbx^{(u)} = \{ \rvu_1, \rvu_2, \rvu_3, \rvu_4\} \in \Reals^{3\td+1}$ and $\rvbx^{(o)} = \{\rvx_1, \rvx_2, \rvx_3\} \in \Reals^{2\td+1}$. Let $d = 2\td+1$ i.e., the dimension of the observed features. For dimension $d$, we generate a dataset (with $n=50000$) using linear structural equation models for $\rvu$'s, $\rvx$'s and $\rvy$ and a logistic linear model for $\rvt$ and $\rve$. See Appendix \ref{appendix:synthetic_experiments} for details.\\

\begin{figure*}[h]
\centering
\begin{subfigure}{.5\textwidth}
  \centering
  \includegraphics[width=0.9\textwidth]{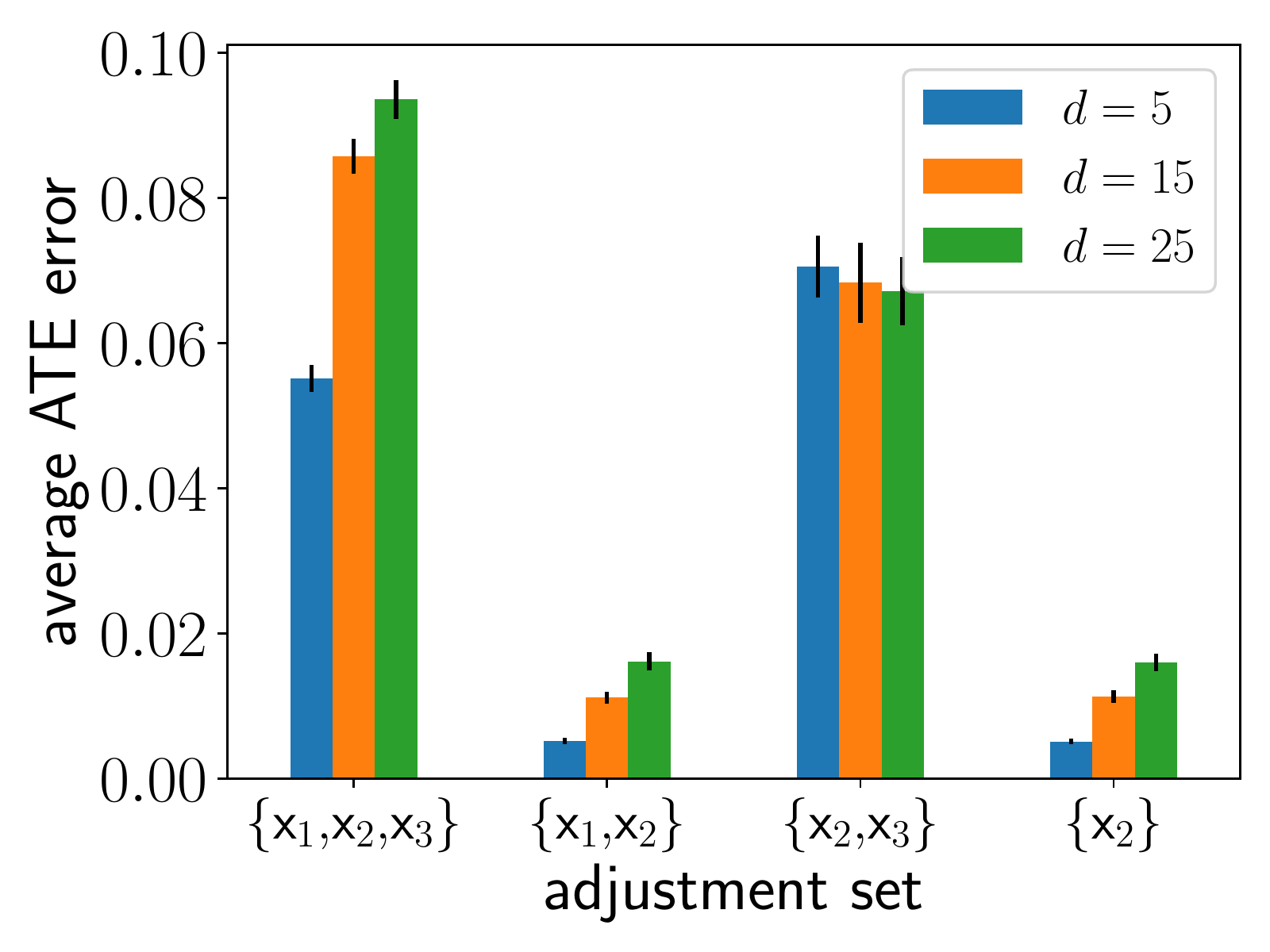}
  \caption{Sets not satisfying back-door ($\{\rvx_1, \rvx_2,\rvx_3\}$, \\ 
  $\{\rvx_2, \rvx_3\}$) result in high ATE error as opposed\\to sets satisfying
  back-door ($\{\rvx_1, \rvx_2\}, \{\rvx_2\}$).}
  \label{fig:toy3_expts}
\end{subfigure}%
\begin{subfigure}{.5\textwidth}
  \includegraphics[width=0.9\textwidth]{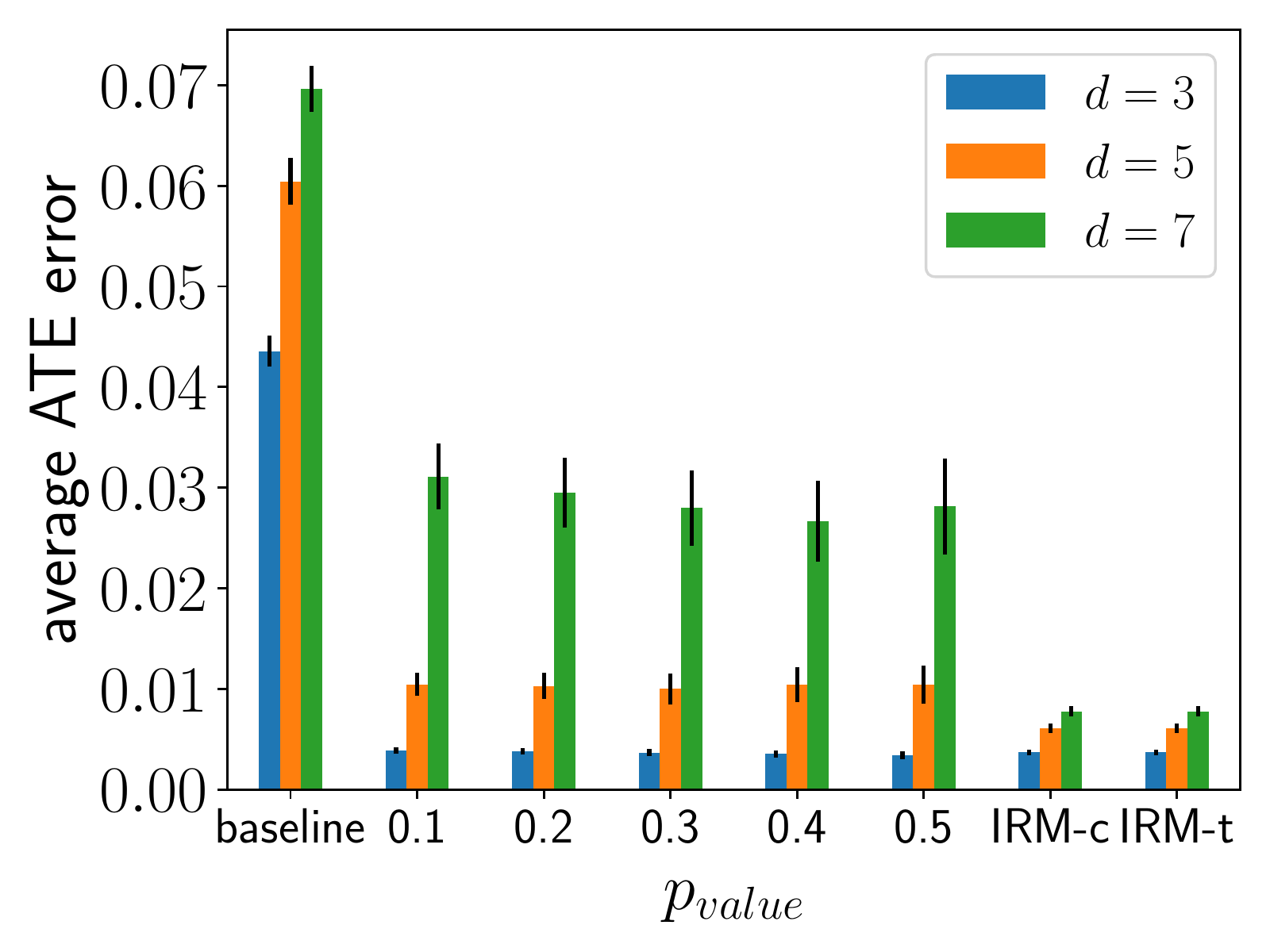}
  \caption{Performance of Algorithms \ref{alg:subset_search} and \ref{alg:irm_sear} on $\cG^{toy}$.}
  \label{fig:toy4_expts}
\end{subfigure}\\
\begin{subfigure}{.5\textwidth}
  \includegraphics[width=0.9\textwidth]{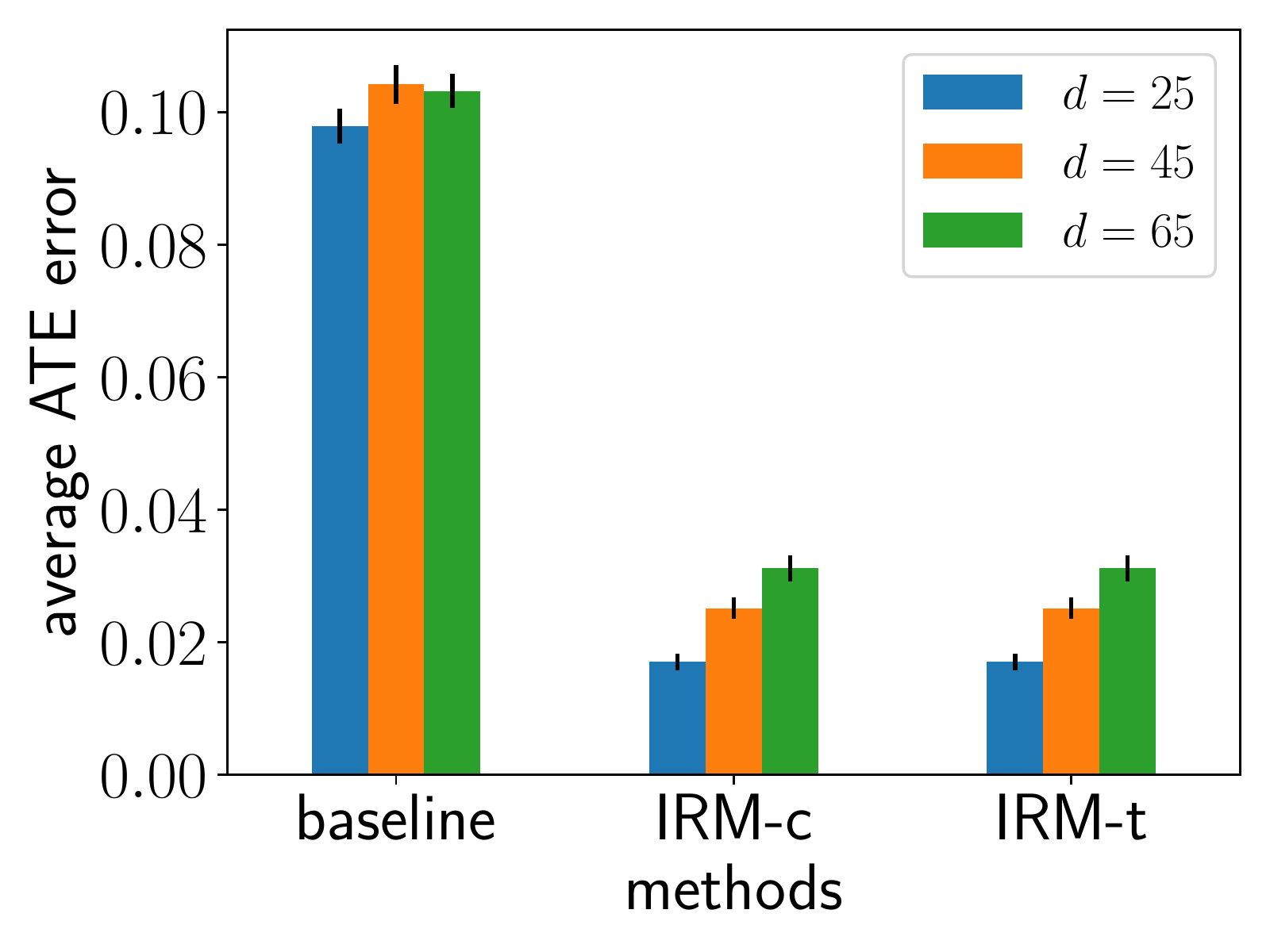}
  \caption{Performance of Algorithm \ref{alg:irm_sear} in high dimensions.}
  \label{fig:toy5_expts}
\end{subfigure}
\caption{Validating our theoretical results and our algorithms on the toy example $\cG^{toy}$.}
\label{fig:toy_expts}
\end{figure*}

\noindent{\bf Results.} 
First, we validate our theoretical results for $d = 5, 15, 25$ (see Figure \ref{fig:toy3_expts}): (a) the ATE error for adjusting on $\{\rvx_1,\rvx_2, \rvx_3\}$ is high since we are in a setting where $\rvbx^{(o)}$ is  \textit{not ignorable}, (b) the ATE error for adjusting on $\{\rvx_1,\rvx_2\}$ is low since it satisfies the back-door criterion, (c) the ATE error for adjusting on $\{\rvx_2,\rvx_3\}$ is high since $\rve \notdsep \rvy | \rvx_2, \rvx_3, \rvt$, (d) the ATE error for adjusting on $\{\rvx_2\}$ is low since $\rve \dsep \rvy | \rvx_2, \rvt$. Next, we validate our algorithms via Figure \ref{fig:toy4_expts}. With $\rvx_{\rvt} = \rvx_1$, our algorithms \texttt{Exhaustive}, \texttt{IRM-t}, and \texttt{IRM-c} significantly outperform \texttt{Baseline} for $d = 3, 5, 7$ even for multiple $p_{value}$ thresholds for \texttt{Exhaustive}. We note that IRM based algorithms significantly outperform the testing based algorithm even in moderately high dimensions ($d=7$) and performs very well even for $d=65$ as seen through in Figure \ref{fig:toy5_expts}.

\subsection{Semi-synthetic Dataset : Infant Health and Development Program (IHDP)}
\label{subsection:ihdp}
\noindent{\bf Description.} IHDP \citep{hill2011bayesian} is generated based on a RCT targeting low-birth-weight, premature infants. The 25-dimensional feature set (comprising of 17 different features) is pre-treatment i.e., it satisfies Assumption \ref{assumption1}. The features measure various aspects about the children and their mothers e.g., child's birth-weight, the number of weeks pre-term that the child was born. See Appendix \ref{appendix:ihdp} for details. In the treated group, the infants were provided with both intensive high-quality childcare and specialist home visits. A biased subset of the treated group is typically removed to create imbalance leaving 139 samples with $\rvt = 1$ and 608 samples with $\rvt = 0$. The outcome, typically simulated using setting ``A'' of the NPCI package \citep{Dorie2016}, is infants' cognitive test score.\\

\noindent{\bf Analysis.} 
The outcome depends on all observed features. Therefore, the set of all observed features satisfies back-door (see Appendix \ref{appendix:ihdp}). To test our method, we drop 7 features and denote the resulting 16-dimensional feature set (comprising of 10 features) by $\rvbx^{(o)}$ to create a challenging non-ignorable case. We use \textit{child's birth-weight} as $\rvx_{\rvt}$. Therefore, we keep this feature in $\rvbx^{(o)}$. See Appendix \ref{appendix:ihdp} for the choice of other features in $\rvbx^{(o)}$.\\

\noindent{\bf Results.} 
We compare \texttt{Baseline}, \texttt{Exhaustive}, \texttt{Sparse} with $k = 5$, \texttt{IRM-c} and \texttt{IRM-t}. All our algorithms except \texttt{IRM-c} significantly outperform \texttt{Baseline} (see Figure \ref{fig:ihdp}). The intuition behind $k = 5$ is the belief that valid adjustments of size 5 exist (see Appendix \ref{appendix:ihdp})\footnote{We note that \texttt{Sparse} still has to perform $\sum_{i = 0}^{5} \binom{9}{i} = 382$ tests to estimate ATE. Therefore, \texttt{Sparse} performs not very differently from \texttt{Exhaustive}.}.

\begin{figure}[ht!]
\begin{subfigure}{.5\textwidth}
  \centering
  \includegraphics[width=0.9\textwidth]{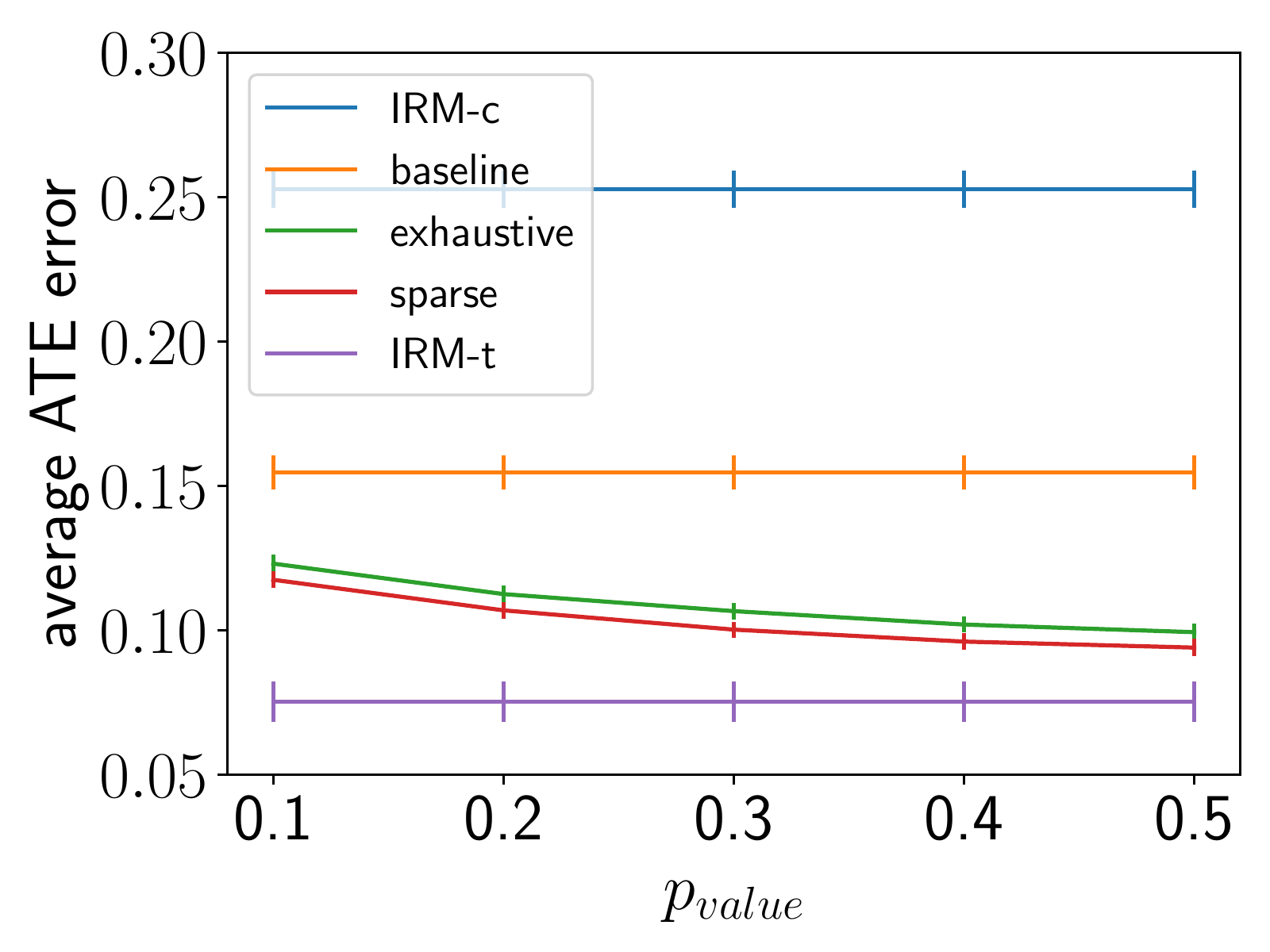}
  \caption{Performance of our algorithms and \\baselines on IHDP dataset.}
  \label{fig:ihdp}
\end{subfigure}%
\begin{subfigure}{.5\textwidth}
\centering
  \includegraphics[width=0.9\textwidth]{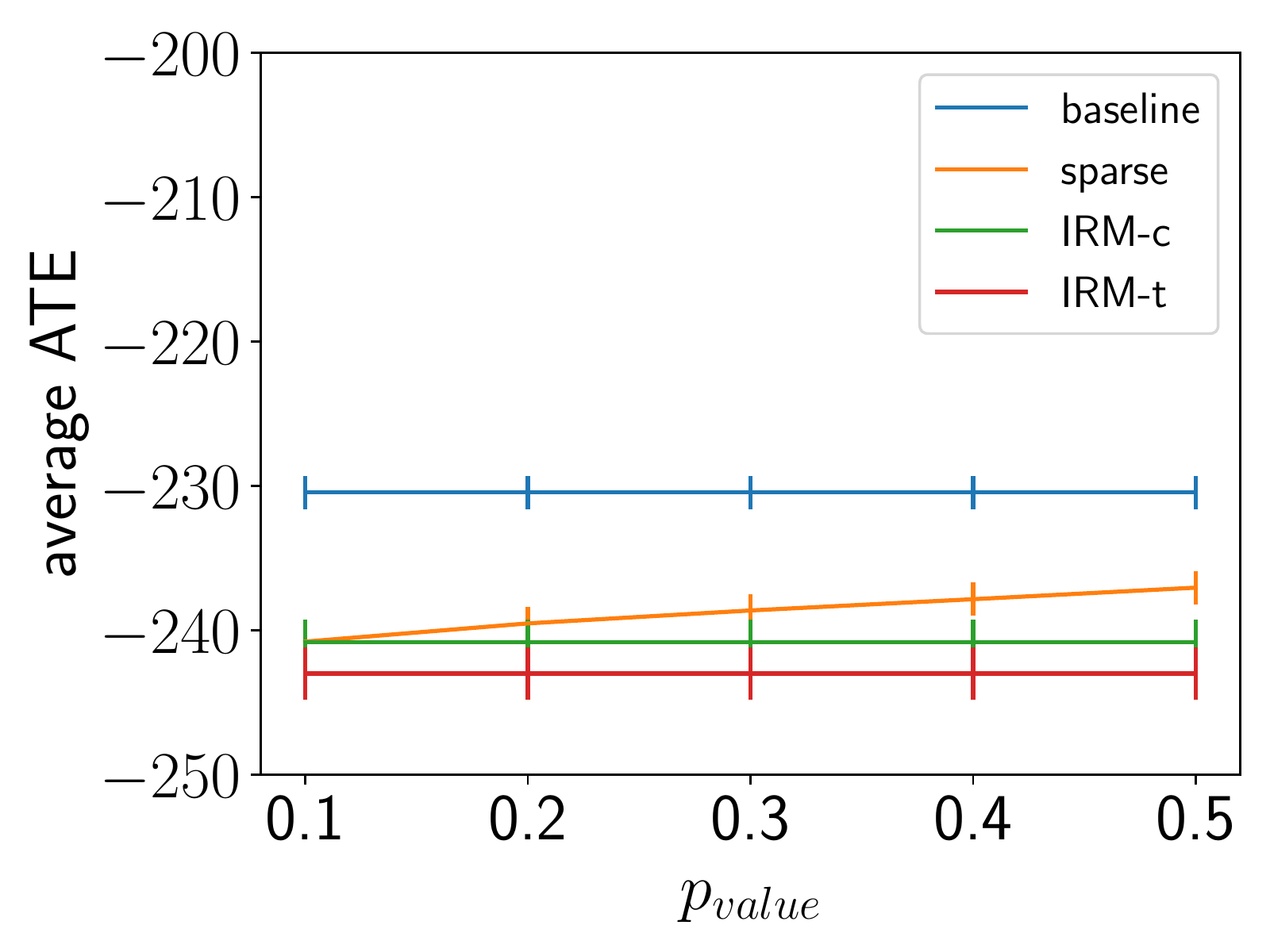}
\caption{Performance of our algorithms and \\baselines on Cattaneo2 dataset.}
\label{fig:cattaneo2}
\end{subfigure}%
\caption{Performance of our algorithms compared to the baseline on benchmark datasets}
\end{figure}
\subsection{Real Dataset : Cattaneo2}
\label{subsection:cattaneo}

\noindent{\bf Description.} 
Cattaneo2 \citep{cattaneo2010efficient}
studies the effect of maternal smoking on babies' birth weight. 
The 20 observed features
measure various attributes about the children, their mothers and their fathers. 
See Appendix \ref{appendix:cattaneo} for details.
The dataset 
considers the maternal smoking habit during pregnancy as the treatment i.e., $\rvt = 1$ if smoking (864 samples) and $\rvt = 0$ if not smoking (3778 samples). \\

\noindent{\bf Analysis.}
Out of the features we have access to (see Appendix \ref{appendix:cattaneo}), we pick mother's age to be $\rvx_{\rvt}$. \\

\noindent{\bf Results.}
The ground truth ATE is unknown (because for every sample either $\rvy_0$ or $\rvy_1$ is observed). However, the authors in \cite{AlmondCL2005} expect a strong negative effect of maternal smoking on the weights of babies -- about 200 to 250 grams lighter for a baby with a mother smoking during pregnancy. We compare all the algorithms except \texttt{Exhaustive} with $\rvx_{\rvt} =$ mother’s age. For the \texttt{sparse} algorithm, we set $k = 5$ to ensure a reasonable run-time. As seen in Figure \ref{fig:cattaneo2}, the ATE estimated using all our algorithms fall in the desired interval (i.e., (-250,-200)) and suggest a larger negative effect compared to the \texttt{Baseline}.
\section{Conclusion and Discussion}
\label{section:conclusion_limitations_future_work}
We showed that it is possible to find valid adjustment sets under non-ignorability with the knowledge of a single causal parent of the treatment. We achieved this by providing an invariance test that exactly identifies all the subsets of observed features (not involving this parent) that satisfy the back-door criterion.\\

\noindent{\bf Knowledge of a causal parent of the treatment.} Our invariance test depends on the causal parent of the treatment i.e., $\rvx_{\rvt}$ via the environment variable i.e., $\rve$. Therefore, our approach works even when the expert knowledge of $\rvx_{\rvt}$ is not available or samples of $\rvx_{\rvt}$ are not observed so long as we have samples of $\rve$ directly. Investigating the application of this insight is an interesting question for future research.\\

\noindent{\bf Assumption \ref{assumption1} and \ref{assumption3}.} Assumption \ref{assumption1} and faithfulness (a stronger version of Assumption \ref{assumption3}) are commonly used in data-driven covariate selection works \citep{entner2013data, gultchin2020differentiable, cheng2020towards}.   While settings beyond Assumption \ref{assumption1} are interesting for future research, finding valid adjustments under Assumption \ref{assumption1} is non-trivial and important in both PO and Pearlian framework (see the first paragraph in \cite{vanderweele2011new}). Further, we note that Assumption \ref{assumption1} holds for some benchmark causal effect estimation datasets (e.g., IHDP, Twins). Lastly, while it is common to assume faithfulness with respect to conditional independencies involving the entire DAG, we assume faithfulness only with respect to conditional independencies involving the sub-sampling variable.\\

\noindent{\bf Alternate minimal DAG knowledge.} As discussed in Remark \ref{remark:m_bias}, our method doesn't cover all back-door criteria (e.g., the M-bias problem). Therefore, exploring alternate minimal DAG knowledge sufficient to test for a broader/different family of valid adjustments could be fruitful.
\section*{Acknolwedgements}

We thank the anonymous reviewers of NeurIPS 2021 for bringing to our notice the works of \cite{entner2013data} and \cite{cheng2020towards} as well as for several suggestions. We also thank the anonymous referees of AISTATS 2022 for their comments and feedback. Kartik Ahuja acknowledges the support provided by IVADO postdoctoral fellowship funding program.

\bibliographystyle{abbrvnat}
\bibliography{content/references}
\clearpage
\appendix
\section*{Appendix}
\textbf{Organization.}
In Appendix \ref{appendix:societal_impacts} we briefly discuss any potential societal impacts of our work. 
In Appendix \ref{appendix:related_work}, we discuss  prior work related to potential outcomes and usage of representation learning to debias treatment effect.
In Appendix \ref{appendix:po}, we review potential outcomes (PO) framework, discuss ignorability and connect it with valid adjustment.
In Appendix \ref{appendix:d_separation}, we provide the definition of d-separation as well as a few related definitions.
In Appendix \ref{appendix:additional_notations}, we provide a few additional notations.
In Appendix \ref{appendix:proof_thm_sufficiency}, we provide a proof of Theorem \ref{thm_sufficiency} and also provide an illustrative example for Theorem \ref{thm_sufficiency}.
In Appendix \ref{appendix:proof_thm_necessity}, we provide a proof of Theorem \ref{thm_necessity}.
In Appendix \ref{appendix:m_bias}, we provide a discussion on the M-bias problem.
In Appendix \ref{appendix:finding_back_doors}, we provide an Algorithm (Algorithm \ref{alg:backdoors}) that, when all the parents of the treatment are observed and known, finds all subsets of the observed features 
satisfying the back-door criterion relative to $(\rvt, \rvy)$ in $\cG$ as promised in Section \ref{section:main_results}. We also provide an example illustrating Algorithm \ref{alg:backdoors} and the associated result via Corollary \ref{corollary:backdoor}.
In Appendix \ref{appendix:baseline}, we provide an implementation of the \texttt{Baseline} ATE estimation routine considered in this work. In Appendix \ref{appendix:experiments}, we discuss the usage of real-world CI testers in Algorithm \ref{alg:subset_search}, provide more discussions on experiments from Section \ref{section:experiments}, specify all the training details, as well as provide more details regarding the comparison of our method with \cite{entner2013data}, \cite{gultchin2020differentiable}, and \cite{cheng2020towards}.

\section{Societal Impact}
\label{appendix:societal_impacts}
In health-care scenarios, since it is sometimes difficult/unethical to do randomized control trials (RCTs), sometimes the consensus treatment protocol is decided based on observational studies. Our algorithm could pick out a correct valid adjustment set when some existing methods assume ignorability due to lack of expert knowledge about the causal model.

On the flip side, due to lower testing power at finite samples or mis-identification of a feature as a direct parent of the treatment (a local causal knowledge required in our work), our algorithm could pick an incorrect valid adjustment set. This, in turn, could potentially result in miscalculation of the treatment effect. The consensus treatment protocols based on such observational conclusions could prove detrimental.
However, we emphasize that this is a risk associated with most (if not all) observational studies and effect estimation algorithms.

\section{Additional related work}\label{appendix:related_work}
\noindent{\bf Potential Outcomes framework.} Potential outcomes (PO) framework formalizes the notion of ignorability as a condition on the observed features that is sufficient (amongst others) for valid adjustment in treatment effect estimation \citep{imbens2010rubin}. Various methods like propensity scoring (\cite{rosenbaum1983central}), \textit{matching} (\cite{rosenbaum1985constructing}) of the treatment group and the control group based on features that satisfy ignorability, and 
synthetic control methods  (\cite{abadie2010synthetic}) have been used to debias effect estimation. In another line of work (\cite{wager2018estimation,kunzel2019metalearners,alaa2017bayesian}), treatment effect was estimated by regressing the outcome on the treated and the untreated sub-populations. 
While this list of works on the PO framework is by no means exhaustive, in a nutshell, these methods can be seen as techniques to estimate the treatment effect when a valid adjustment set is given.\\

\noindent{\bf Representation learning based techniques.} 
Following the main idea behind matching (\cite{Rubin1973, Abadie2004, Rosenbaum1989}), recent methods inspired by deep learning and domain adaptation, used a neural network to transform the features 
and then carry out matching in the representation space (\cite{shi2019adapting,shalit2017estimating,johansson2016learning,yoon2018ganite,kallus2020deepmatch}). These methods aimed to correct the lack of overlap between the treated and the control groups while assuming that the representation learned is ignorable (i.e., a valid adjustment).


\section{Review of potential outcomes and ignorability}\label{appendix:po}
We briefly review the potential outcomes (PO) framework in the context of treatment effect estimation \citep{imbens2015causal}.
In the PO framework, there are exogenous variables called \textit{units}. With a slight abuse of notation, we denote them by $\rvbx^{(u)}$ as well.
When $\rvbx^{(u)}$ is fixed to say $\svbx^{(u)}$, the observed variables (including $\rvy$) are deterministically fixed i.e., only the randomness in the units induces randomness in the observed variables.
The PO framework typically studies the setup where the observed features $\rvbx^{(o)}$ are pre-treatment (similar to semi-Markovian model under Assumption \ref{assumption1}).
Every observational sample $(\svbx^{(o)}, t, y)$ has an associated unit $\svbx^{(u)}$.
For $t' \in \{0,1\}$, the \textit{potential outcome} $\rvy_{t'}$ is the resulting outcome for the unit $\svbx^{(u)}$ when the treatment $\rvt$ is set (by an intervention) to $t'$. 


\begin{definition}
 (Ignorability.) Any $\rvbz \subseteq \rvbx^{(o)}$ satisfies the ignorability condition if $\rvy_{0}, \rvy_{1} \indep \rvt \lvert \rvbz $. 
\end{definition}

In the above definition, the potential outcomes $\rvy_{0}$ and $\rvy_{1}$, the observed treatment $\rvt$, and the features $\rvbz$ are all deterministic functions of the units $\rvbx^{(u)}$. Therefore, the conditional independence criterion makes sense over the common probability measurable in the space of the units $\rvbx^{(u)}$. As mentioned in Section \ref{section:intro}, ignorability cannot be tested for from observational data since  for every observational sample either $\rvy_{0}$ or $\rvy_{1}$ is observed (and not both).

In the PO framework, the ATE is defined as $\Expectation_{\rvbx^{(u)}}[\rvy_{1} - \rvy_{0}]$. When $\rvbz \subseteq \rvbx^{(o)}$ is ignorable, it is also a valid adjustment relative to $(\rvt, \rvy)$ in $\cG$ and therefore the ATE can be estimated by regressing on $\rvbz$.

The Pearlian framework provides a generative model for this setup i.e., a semi-Markovian model (specifying a DAG that encodes causal assumptions relating exogenous and observed variables) as well as specifies graphical criterions that imply existence of valid adjustments relative to $(\rvt, \rvy)$ in $\cG$. 

\section{D-separation}
\label{appendix:d_separation}
In this section, we define d-separation with respect to a semi-Markovian DAG $\cG$. The d-separation or directed-separation is a commonly used graph separation criterion that characterizes conditional independencies in DAGs. First, we will define the notion of a path.

For any positive integer $k$, let $[k] \coloneqq \{1,\cdots, k\}$.
\begin{definition}(Path)
A path $\cP(\rvv_1,\rvv_k)$ is an ordered sequence 
of distinct nodes $\rvv_1 \ldots \rvv_k$ and the edges between these nodes such that for any $i \in [k]$, $\rvv_i \in \cV$ and for any $i \in [k-1]$, either $\rvv_i \myrightarrow \rvv_{i+1}, ~\rvv_i \myleftarrow \rvv_{i+1}$ or $~\rvv_i \doublearrow  \rvv_{i+1}$.
\end{definition}
For example, in Figure \ref{fig:Gtoy_without_u}, $\cP(\rvx_1,\rvx_3) = \{\rvx_1 \myrightarrow \rvx_2\doublearrow \rvx_3\}$ and $\cP(\rvt,\rvy) = \{\rvt \doublearrow \rvx_1 \myrightarrow \rvx_2 \doublearrow \rvx_3 \doublearrow \rvy\}$ are two distinct paths.
Next, we will define the notion of a collider.
\begin{definition}
(Collider) In a path $\cP(\rvv_1,\rvv_k)$, for any $i \in \{2,\cdots, k-1\}$, a collider at $\rvv_i$ mean that the arrows (or edges) meet head-to-head (collide) at $\rvv_i$ i.e. either $ \rvv_{i-1} \myrightarrow \rvv_i \myleftarrow \rvv_{i+1}$, $ \rvv_{i-1} \doublearrow \rvv_i \myleftarrow \rvv_{i+1}$, $ \rvv_{i-1} \myrightarrow \rvv_i \doublearrow \rvv_{i+1}$ or $ \rvv_{i-1} \doublearrow \rvv_i \doublearrow \rvv_{i+1}$.
\end{definition}
For example, in $\cP(\rvx_1,\rvx_3)$ defined above, there is a collider at $\rvx_2$.
Next, we define the notion of a descendant path.
\begin{definition}(Descendant path)
A path $\cP(\rvv_1,\rvv_k)$ is said to be an descendant path from $\rvv_1$ to $\rvv_k$ if $~\forall i \in [k-1]$, $\rvv_i \myrightarrow \rvv_{i+1}$. 
 \end{definition}
 For example, in Figure \ref{fig:Gtoy_without_u}, $\cP(\rvx_1,\rvy) = \{\rvx_1 \myrightarrow \rvx_2 \myrightarrow \rvy\}$ is an descendant path from $\rvx_1$ to $\rvy$.
Next, we define the notion of a descendant.
\begin{definition}(Descendant)
A variable $\rvv_k$ is a descendant of a variable $\rvv_1$ if there exists an descendant path $\cP(\rvv_1,\rvv_k)$ from $\rvv_1$ to $\rvv_k$.
 \end{definition}
 For example, in Figure \ref{fig:Gtoy_without_u}, $\rvy$ is a descendant of $\rvx_1$.

\begin{definition}
(Blocking path) For any variables $\rvv_1, \rvv_2 \in \cW$, a set $\rvbv \subseteq \cW$, and a path $\cP(\rvv_1,\rvv_2)$, $\rvbv$ blocks the path $\cP(\rvv_1,\rvv_2)$ if there exists a variable $\rvv$ in the path $\cP(\rvv_1,\rvv_2)$ that satisfies either of the following two conditions:
\begin{enumerate}
    \item[(1)]  $\rvv \in \rvbv$ and $\rvv$ is not a collider.
    \item[(2)] neither variable $\rvv$ nor any of it’s descendant is in $\rvbv$; and $\rvv$ is a collider.
\end{enumerate}
\end{definition}
 For example, in Figure \ref{fig:Gtoy_without_u}, $\{\rvx_2\}$ blocks the path $\cP(\rvx_1,\rvy) = \{\rvx_1 \myrightarrow \rvx_2 \myrightarrow \rvy\}$ because $\rvx_2 \in \{\rvx_2\}$ and $\rvx_2$ is not a collider. Further, $\{\rvx_2\}$ also blocks the path $\cP(\rvx_1,\rvy) = \{\rvx_1 \myrightarrow \rvx_2 \doublearrow \rvx_3 \doublearrow \rvy\}$ because $\rvx_3 \notin \{\rvx_2\}$ and $\rvx_3$ is a collider..

\begin{definition}
(D-separation) For any variables $\rvv_1, \rvv_2 \in \cW$, and a set $\rvbv \subseteq \cW$, $\rvv_1$ and $\rvv_2$ are d-separated by $\rvbv$ in $\cG$ if $\rvbv$ blocks every path between $\rvv_1$ and $\rvv_2$ in $\cG$.
\end{definition}
For example, in Figure \ref{fig:Gtoy_without_u}, $\rvx_1$ and $\rvy$ are d-separated by $\{\rvx_2\}$.

\section{Additional notations}
\label{appendix:additional_notations}
In this section, we will look at a few additional notations that will be used in the proofs of Theorem \ref{thm_sufficiency}, Theorem \ref{thm_necessity}, and Corollary \ref{corollary:backdoor}. 
\subsection{\texorpdfstring{$\cG_{-\rvt}$}{}}

Often it is favorable to think of the back-door criterion in terms of the graph obtained by removing the edge from $\rvt$ to $\rvy$ in $\cG$. Let $\cG_{-\rvt}$ denote this graph. The following (well-known) remark connects the back-door criterion to $\cG_{-\rvt}$.
\begin{restatable}{remark}{remarkOne}
\label{remark1}
Under Assumption \ref{assumption1}, a set of variables $\rvbz \subseteq \rvbx$ satisfies the back-door criterion relative to the ordered pair of variables $(\rvt, \rvy)$ in $\cG$ if and only if $\rvt$ and $\rvy$ are d-separated by $\rvbz$ in $\cG_{-\rvt}$. 
\end{restatable}

\begin{proof}
Under Assumption \ref{assumption1}, $\rvy$ is the only descendant of $\rvt$ i.e., no node in $\rvbx$ is a descendant of $\rvt$. Therefore, from Definition \ref{definition_back_door}, $\rvbz$ satisfying the back-door criterion relative to $(\rvt, \rvy)$ in $\cG$ is equivalent to $\rvbz$ blocking every path between $\rvt$ and $\rvy$ in $\cG$ that contains an arrow into $\rvt$. Further, under Assumption \ref{assumption1}, there are no paths between $\rvt$ and $\rvy$ in $\cG$ that contain an arrow out of $\rvt$ apart from the direct path $\rvt \myrightarrow \rvy$. However, this direct path $\rvt \myrightarrow \rvy$ does not exist in $\cG_{-\rvt}$. Therefore, $\rvbz$ blocking every path between $\rvt$ and $\rvy$ in $\cG$ that contains an arrow into $\rvt$ is equivalent to $\rvbz$ blocking every path between $\rvt$ and $\rvy$ in $\cG_{-\rvt}$. Thus, $\rvbz$ satisfying the back-door criterion relative to $(\rvt, \rvy)$ in $\cG$ is equivalent to $\rvbz$ blocking every path between $\rvt$ and $\rvy$ in $\cG_{-\rvt}$ i.e., $\rvt \dsep \rvy | \rvbz $ in $\cG_{-\rvt}$.
\end{proof}

\subsection{Subset of a path}
Now, we will define the notion of a subset of a path.

\begin{definition}(Subset of a path)
A path $\cP'(\rvy_1,\rvy_j)$ is said to be a subset of the path $\cP(\rvx_1,\rvx_k)$ (denoted by $\cP'(\rvy_1,\rvy_j) \subset \cP(\rvx_1,\rvx_k)$) if $j < k$, $\exists ~i \in [k+1-j]$ such that $\rvx_i = \rvy_1$, $\rvx_{i+1} = \rvy_2, \cdots, \rvx_{i+j-1} = \rvy_j$ and the edge between $\rvx_{i+l-1}$ and $\rvx_{i+l}$ is same as the edge between $\rvy_{l}$ and $\rvy_{l+1}$ $\forall l \in [j-1]$.
\end{definition}
For example, in Figure \ref{fig:Gtoy_without_u}, $\cP(\rvx_1,\rvx_3) = \{\rvx_1 \myrightarrow \rvx_2\doublearrow \rvx_3\}$ is a subset of the path 
$\cP(\rvt,\rvy) = \{\rvt \doublearrow \rvx_1 \myrightarrow \rvx_2 \doublearrow \rvx_3 \doublearrow \rvy\}$ i.e., $\cP(\rvx_1,\rvx_3) \subset \cP(\rvt,\rvy)$. For a path $\cP(\rvx_1, \rvx_k)$, it is often convenient to represent the subset obtained by removing the nodes at each extreme and the corresponding edges by $\cP(\rvx_1, \rvx_k) \setminus \{\rvx_1, \rvx_k\}$. For example, $\cP(\rvx_1,\rvx_3) = \cP(\rvt,\rvy) \setminus \{\rvt, \rvy\}$.

\section{Proof of Theorem \ref{thm_sufficiency} and an illustrative example}
\label{appendix:proof_thm_sufficiency}
In this section, we will prove Theorem \ref{thm_sufficiency} and also provide an illustrative example for Theorem \ref{thm_sufficiency}. 
Recall the notions of \textit{path}, \textit{collider}, \textit{descendant path}, \textit{blocking path} and \textit{d-separation} from Appendix \ref{appendix:d_separation}. Also, recall the notions of \textit{subset of a path} and $\cG_{-\rvt}$ as well as Remark \ref{remark1} from Appendix \ref{appendix:additional_notations}.

\subsection{Proof of Theorem \ref{thm_sufficiency}}
We re-state the Theorem below and then provide the proof.\footnote{We say that $\rvbz$ satisfies the backdoor criterion if it blocks all the backdoor paths between $\rvt$ and $\rvy$ in $\cG$ i.e., paths between $\rvt$ and $\rvy$ in $\cG$ that contains an arrow into $\rvt$. Please see Definition \ref{definition_back_door} in the main paper.}
\thmSufficiency*
\begin{proof}
We will prove this by contradiction. Suppose $\rvbz$ does not satisfy the back-door criterion relative to $(\rvt, \rvy)$ in $\cG$. From Remark \ref{remark1}, under Assumption \ref{assumption1}, this is equivalent to 
$\rvt$ and $\rvy$ not being d-separated by $\rvbz$ in $\cG_{-\rvt}$. This is further equivalent to saying that there exists at least one unblocked path (not containing the edge $\rvt \myrightarrow \rvy$) from $\rvt$ to $\rvy$ in $\cG$ when $\rvbz$ is conditioned on. Let $\cP (\rvt, \rvy)$ denote the shortest of these unblocked paths. We have the following two scenarios depending on whether or not $\cP (\rvt, \rvy)$ contains $\rvx_{\rvt}$. First, we will show that in both of these cases there exists an unblocked path\footnote{Note: There is no possibility of an unblocked path from $\rvx_{\rvt} $ to $\rvy$ in $\cG$ containing the edge $\rvt \myrightarrow \rvy$ when $\rvbz, \rvt$ are conditioned on. This is because $\rvt$ is conditioned on and any such path to $\rvy$ cannot form a collider at $\rvt$.} $\cP'(\rvx_{\rvt},\rvy)$ from $\rvx_{\rvt} $ to $\rvy$ in $\cG$ when $\rvbz, \rvt$ are conditioned on.


Note : All bi-directed edges in $\cG$ are unblocked because (a) none of the unobserved feature is conditioned on and (b) there is no collider at any of the unobserved feature.

\begin{enumerate}[leftmargin=5mm]
    \item[(i)] $\rvx_{\rvt} \in \cP (\rvt, \rvy)$: This implies that there is an unblocked path  $\cP''(\rvx_{\rvt},\rvy) \subset \cP (\rvt, \rvy)$ from $\rvx_{\rvt} $ to $ \rvy$ in $\cG$ when $\rvbz$ is conditioned on.
    Suppose we now condition on $\rvt$ in addition to $\rvbz$. The conditioning on $\rvt$ can affect the path $\cP''(\rvx_{\rvt},\rvy)$ only\footnote{$\rvt \notin \cP''(\rvx_{\rvt},\rvy)$ because $\cP (\rvt, \rvy)$ is the shortest unblocked path (not containing the edge $\rvt \myrightarrow \rvy$) from $\rvt$ to $\rvy$ in $\cG$ when $\rvbz$ is conditioned on.} if a) there is an unblocked descendant path from some $\rvx_s \in \cP''(\rvx_{\rvt},\rvy) \setminus \{\rvx_{\rvt}, \rvy\}$ to $\rvt$ and b) $\rvx_s$ is a collider in the path $\cP''(\rvx_{\rvt},\rvy) \setminus \{\rvx_{\rvt}, \rvy\}$ . However, conditioning on such a $\rvt$ cannot block the path $\cP''(\rvx_{\rvt},\rvy)$. Thus, there exists an unblocked path $\cP'(\rvx_{\rvt},\rvy) = \cP''(\rvx_{\rvt},\rvy)$ in $\cG$ when $\rvbz, \rvt$ are conditioned on.
    \item[(ii)] $\rvx_{\rvt} \notin \cP (\rvt, \rvy)$: Under Assumption \ref{assumption1}, $\cG$ cannot contain the edge $\rvt \myleftarrow \rvy$ (because a DAG cannot have a cycle). Furthermore, under Assumption \ref{assumption1}, $\rvt$ has no child other than $\rvy$. Therefore, in this case, the path $\cP (\rvt, \rvy)$ takes one of the following two forms : (a) $\rvt \myleftarrow \rvx_s \cdots \rvy$ or (b) $\rvt \doublearrow \rvx_s \cdots \rvy$ for some 
    $\rvx_s \neq \rvx_{\rvt}$. In either case, there is a collider at $\rvt$ 
    (i.e., either $\rvx_{\rvt} \myrightarrow \rvt \myleftarrow \rvx_s$, $\rvx_{\rvt} \myrightarrow \rvt \doublearrow \rvx_s$, $\rvx_{\rvt} \doublearrow \rvt \myleftarrow \rvx_s$ or $\rvx_{\rvt} \doublearrow \rvt \doublearrow \rvx_s$) 
    in the path $\cP'''(\rvx_{\rvt}, \rvx_s)$ from $\rvx_{\rvt}$ to $\rvx_s$. 
     Suppose we now condition on $\rvt$ in addition to $\rvbz$. The conditioning on $\rvt$ unblocks the path $\cP'''(\rvx_{\rvt}, \rvx_s)$ because there is a collider at $\rvt$.
     Also, similar to the previous case, the conditioning on $\rvt$ cannot block the path $\cP(\rvt, \rvy)$ from $\rvt$ to $\rvy$ (passing through $\rvx_s$). Therefore, we see that there is an unblocked path $\cP'(\rvx_{\rvt},\rvy)$ from $\rvx_{\rvt}$ to $\rvy$ (passing through $\rvt$ and $\rvx_s$) in $\cG$ when $\rvbz, \rvt$ are conditioned on
    (i.e., either $\rvx_{\rvt} \myrightarrow \rvt \myleftarrow \rvx_s \cdots \rvy$, $\rvx_{\rvt} \myrightarrow \rvt \doublearrow \rvx_s \cdots \rvy$, $\rvx_{\rvt} \doublearrow \rvt \myleftarrow \rvx_s \cdots \rvy$ or $\rvx_{\rvt} \doublearrow \rvt \doublearrow \rvx_s \cdots \rvy$).
    
\end{enumerate}
 Now, in each of the above cases, there is an edge from $\rvx_{\rvt}$ to $\rve$ because $\rve$ is sub-sampled using $\rvx_{\rvt}$.  Therefore, there exists an unblocked path $\cP''''(\rve,\rvy) \supset \cP'(\rvx_{\rvt},\rvy)$ of the form $\rve \myleftarrow \rvx_{\rvt} \cdots \rvy$ in $\cG$ when $\rvbz, \rvt$ are conditioned on because $\rvx_{\rvt} \notin \rvbz$ i.e., $\rvx_{\rvt}$ is not conditioned on. This is true regardless of whether $\rvx_{\rvt}$ is an ancestor of $\rvbz$ or not since the edge $\rve \myleftarrow \rvx_{\rvt}$ cannot create a collider at $\rvx_{\rvt}$. The existence of the path $\cP''''(\rve,\rvy)$ contradicts the fact that $\rve$ is d-separated from $\rvy$ by $\rvbz$ and $\rvt$ in $\cG$.
 This completes the proof.
\end{proof}

\subsection{An illustrative example for Theorem \ref{thm_sufficiency}}
\label{appendix:illustrative_example}
Now, we will look into an example illustrating Theorem \ref{thm_sufficiency}.
Consider the DAG $\cG^{bi}$ in Figure \ref{fig:G_41}. 
We let $\rvx_{\rvt} = \rvx_1$ (because $\rvx_1 \doublearrow \rvt$) and sub-sample $\rve$ using $\rvx_1$ and $\rvt$ (see Figure \ref{fig:G_41}). For this example, $\rvbz \subseteq \{\rvx_2,\rvx_3\}$ i.e., $\rvbz \in \{\varnothing, \{\rvx_2\}, \{\rvx_3\}, \{\rvx_2, \rvx_3\}\}$. It is easy to verify that $\rve \dsep \rvy | \rvx_2, \rvt$ but $\rve \notdsep \rvy | \rvt$, $\rve \notdsep \rvy | \rvx_3, \rvt$, and $\rve \notdsep \rvy | \rvx_2, \rvx_3 \rvt$ in $\cG^{bi}$. Given these, Theorem \ref{thm_sufficiency} implies that $\rvbz = \{\rvx_2\}$ should satisfy the back-door criterion relative to $(\rvt, \rvy)$ in $\cG^{bi}$. This is indeed the case and can be verified easily. Thus, we see that our framework has the potential to identify
valid adjustment sets ($\{\rvx_2\}$ for $\cG^{bi}$) in the scenario where no causal parent of the treatment variable is known but a bi-directed neighbor of the treatment is known.

Note : Theorem \ref{thm_sufficiency} does not comment on whether $\varnothing, \{\rvx_3\}$ and $\{\rvx_2, \rvx_3\}$ satisfy or do not satisfy the back-door criterion relative to $(\rvt, \rvy)$ in $\cG^{bi}$.

\begin{figure}[ht!]
\centering
  \centering
	\begin{tikzpicture}[scale=0.7, every node/.style={transform shape}, > = latex, shorten > = 1pt, shorten < = 2pt]
	\node[shape=circle,draw=black](x1) at (0,2) {\LARGE${\rvx_1}$};
	\node[shape=circle,draw=black](x2) at (3,2) {\LARGE$\rvx_2$};
	\node[shape=circle,draw=black](x3) at (5,0.25) {\LARGE$\rvx_3$};
	\node[shape=circle,draw=black](e) at (-2,0.25) {\LARGE$\rve$};
	\node[shape=circle,draw=black](y) at (3,-1.5) {\LARGE$\rvy$};
	\node[shape=circle,draw=black](t) at (0,-1.5) {\LARGE$\rvt$};
	\path[style=thick][->](x1) edge (x2);
	\path[style=thick][<->, bend right](x1) [dashed] edge (x2);
	\path[style=thick][->](t) edge (y);		
	\path[style=thick][->](x2) edge (y);	
	\path[style=thick][<->](t) [dashed] edge (x1);
	\path[style=thick][<->](x2) [dashed] edge (x3);
	\path[style=thick][<->](y) [dashed] edge (x3);
	\path[style=thick, color = red][->](t) edge (e);
	\path[style=thick, color = red][->](x1) edge (e);
	\end{tikzpicture}
\caption{The DAG $\cG^{bi}$ where $\rve$ has been sub-sampled using $\rvx_1$ and $\rvt$.}
  \label{fig:G_41}
\end{figure}

\section{Proof of Theorem \ref{thm_necessity}}
\label{appendix:proof_thm_necessity}
In this section, we will prove Theorem \ref{thm_necessity}. 
Recall the notions of \textit{path}, \textit{collider}, \textit{descendant path}, \textit{descendant}, \textit{blocking path} and \textit{d-separation} from Appendix \ref{appendix:d_separation}. Also, recall the notions of \textit{subset of a path} and $\cG_{-\rvt}$ as well as Remark \ref{remark1} from Appendix \ref{appendix:additional_notations}. 

We re-state the Theorem below and then provide the proof.\footnote{We say that $\rvbz$ satisfies the backdoor criterion if it blocks all the backdoor paths between $\rvt$ and $\rvy$ in $\cG$ i.e., paths between $\rvt$ and $\rvy$ in $\cG$ that contains an arrow into $\rvt$. Please see Definition \ref{definition_back_door} in the main paper.}
\thmNecessity*
\begin{proof}
We will prove this by contradiction. Suppose $\rve \notdsep \rvy | \rvbz, \rvt$ in $\cG$ i.e., $\rve$ and $\rvy$ are not \emph{d-separated} by $\rvbz, \rvt$ in $\cG$. In other words, there exists at least one unblocked path from $\rve$ to $\rvy$ in $\cG$ when $\rvbz, \rvt$ are conditioned on. Let $\cP (\rve, \rvy)$ denote the shortest of these unblocked paths. 

Depending on the choice of $\rvbv$, we have the following two cases. In each of this cases, we will show that the path $\cP (\rve, \rvy)$ is of the form $\rve \myleftarrow \rvx_{\rvt} \cdots \rvy$.
\begin{itemize}[leftmargin=*]
    \item $\rvbv = \{\rvt\}$ : $\rve$ is sub-sampled using $\rvt$ and $\rvx_{\rvt}$. Therefore, the path $\cP (\rve, \rvy)$ can take one of the following two forms : (a) $\rve \myleftarrow \rvt \cdots \rvy$ or (b) $\rve \myleftarrow \rvx_{\rvt} \cdots \rvy$. However, $\rvt$ is conditioned on and the path $\rve \myleftarrow \rvt \cdots \rvy$ cannot form a collider at $\rvt$ (because of the edge $\rve \myleftarrow \rvt$). Therefore, the path $\cP (\rve, \rvy)$ cannot be of the form $\rve \myleftarrow \rvt \cdots \rvy$ and has to be of the form $\rve \myleftarrow \rvx_{\rvt} \cdots \rvy$.
    \item $\rvbv = \varnothing$ : $\rve$ is sub-sampled using only $\rvx_{\rvt}$. Therefore, the path $\cP (\rve, \rvy)$ has to be of the form $\rve \myleftarrow \rvx_{\rvt} \cdots \rvy$.
\end{itemize}
Now, observe that there is no collider at $\rvx_{\rvt}$ in the path $\rve \myleftarrow \rvx_{\rvt} \cdots \rvy$ and $\rvx_{\rvt}$ is not conditioned on (because $\rvx_{\rvt} \notin \rvbz$). Therefore, there exists at least one unblocked path from $\rvx_{\rvt}$ to $\rvy$ in $\cG$ when $\rvbz, \rvt$ are conditioned on. Let $\cP' (\rvx_{\rvt}, \rvy) \subset \cP (\rve, \rvy)$ denote the shortest of these unblocked paths from $\rvx_{\rvt}$ to $\rvy$ in $\cG$ when $\rvbz, \rvt$ are conditioned on. The path $\cP' (\rvx_{\rvt}, \rvy)$ cannot contain the edge $\rvt \myrightarrow \rvy$ since $\rvt$ is conditioned on and the path cannot form a collider at $\rvt$ (because of the edge $\rvt \myrightarrow \rvy$).

    We have the following two scenarios depending on whether or not $\cP' (\rvx_{\rvt}, \rvy)$ contains $\rvt$. First, we will show that in both of these cases there exists an unblocked path $\cP''(\rvt,\rvy)$ from $\rvt $ to $\rvy$ (that does not contain the edge $\rvt \myrightarrow \rvy$) in $\cG$ when $\rvbz$ is conditioned on.
    
    Note : All bi-directed edges in $\cG$ are unblocked because (a) none of the unobserved feature is conditioned on and (b) there is no collider at any of the unobserved feature.
    
    \begin{enumerate}[leftmargin=5mm]
        \item[(1)] $\rvt \notin \cP'(\rvx_{\rvt}, \rvy)$: Suppose we now uncondition on $\rvt$ (but still condition on $\rvbz$). We have the following two scenarios depending on whether or not unconditioning on $\rvt$ blocks the path $\cP'(\rvx_{\rvt}, \rvy)$ (while $\rvbz$ is still conditioned on).
        \begin{enumerate}[leftmargin=3mm]
        \item[(i)] Unconditioning on $\rvt$ does not block the path $\cP'(\rvx_{\rvt}, \rvy)$: Consider the path $\cP''(\rvt,\rvy) \supset \cP'(\rvx_{\rvt},\rvy)$ from $\rvt$ to $\rvy$ of the form $\rvt \myleftarrow \rvx_{\rvt} \cdots \rvy$. This path is unblocked in $\cG$ when $\rvbz$ is conditioned on because (a) by assumption the path $\cP'(\rvx_{\rvt}, \rvy)$ is unblocked in $\cG$ when $\rvbz$ is conditioned on and (b) there is no collider at $\rvx_{\rvt}$ in this path (in addition to $\rvx_{\rvt}$ not being conditioned on since $\rvx_{\rvt} \notin \rvbz$). $\cP''(\rvt,\rvy)$ does not contain the edge $\rvt \myrightarrow \rvy$ because $\cP'(\rvx_{\rvt},\rvy)$ does not contain the edge $\rvt \myrightarrow \rvy$.
        \item[(ii)] Unconditioning on $\rvt$ blocks the path $\cP'(\rvx_{\rvt}, \rvy)$ (Refer Figure \ref{fig_case_thm_necessity} for an illustration of this case): We will first create a set $\rvbx_{\cS}$ consisting of all the nodes at which the path $\cP'(\rvx_{\rvt},\rvy)$ is blocked when $\rvt$ is unconditioned on (while $\rvbz$ is still conditioned on).  Define the set $\rvbx_{\cS} \subseteq \rvbx^{(o)}$ such that for any $\rvx_s \in \rvbx_{\cS}$ the following are true: (a) $\rvx_s \in \cP'(\rvx_{\rvt},\rvy) \setminus \{\rvx_{\rvt}, \rvy\}$, (b) the path  $\cP'(\rvx_{\rvt}, \rvy)$ contains a collider at  $\rvx_s$, (c) there is a descendant path $\cP^d(\rvx_s, \rvt)$ from $\rvx_s$ to $\rvt$, (d) the descendant path $\cP^d(\rvx_s, \rvt)$ is unblocked when $\rvbz$ is conditioned on, (e)  $\rvx_s \notin \rvbz$, and (f) there is no unblocked descendant path from $\rvx_s$ to any $\rvx_a \in \rvbz$.
            
        Since the path $\cP'(\rvx_{\rvt},\rvy)$ is blocked when $\rvt$ is unconditioned on (while $\rvbz$ is still conditioned on), we must have that $\rvbx_{\cS} \neq \varnothing$. Let $\rvx_c \in \rvbx_{\cS}$ be that node which is closest to $\rvy$ in the path $\cP'(\rvx_{\rvt},\rvy)$. By the definition of $\rvbx_{\cS}$ and the choice of $\rvx_c$, unconditioning on $\rvt$ cannot block the path $\cP'''(\rvx_c,\rvy) \subset \cP'(\rvx_{\rvt}, \rvy)$ when $\rvbz$ is still conditioned on. Also, by the definition of $\rvbx_{\cS}$, the descendant path $\cP^d(\rvx_c, \rvt)$ from $\rvx_c$ to $\rvt$ is unblocked when $\rvbz$ is conditioned on.
            
        Now consider the path $\cP''(\rvt,\rvy) $ of the form $\rvt \myleftarrow \cdots \myleftarrow \rvx_c \myleftarrow \cdots \rvy$ i.e., $\cP''(\rvt,\rvy) \supset \cP'''(\rvx_c,\rvy)$ and $\cP''(\rvt,\rvy) \supset \cP^d(\rvx_c, \rvt)$. The path $\cP''(\rvt,\rvy) $ is unblocked when $\rvbz$ is conditioned on since (a) $\cP^d(\rvx_c, \rvt)$ is unblocked when $\rvbz$ is conditioned on, (b) $\cP'''(\rvx_c,\rvy)$ is unblocked when $\rvbz$ is conditioned on, and (c) there is no collider at $\rvx_c$ and $\rvx_c$ is not conditioned on since $\rvx_c \notin \rvbz$. Furthermore, $\cP''(\rvt,\rvy)$ does not contain the edge $\rvt \myrightarrow \rvy$ because $\cP'''(\rvx_c,\rvy) \subset \cP'(\rvx_{\rvt},\rvy)$ does not contain the edge $\rvt \myrightarrow \rvy$ and $\cP^d(\rvx_c, \rvt)$ does not contain the edge $\rvt \myrightarrow \rvy$. 
        \end{enumerate}
        \item[(2)] $\rvt \in \cP'(\rvx_{\rvt}, \rvy)$: 
            In this case, there is an unblocked path $\cP''''(\rvt, \rvy) \subset \cP'(\rvx_{\rvt},\rvy)$ from $\rvt$ to $\rvy$ when $\rvbz, \rvt$ are  conditioned on. There are two sub-cases depending on whether or not unconditioning on $\rvt$ can block the path $\cP''''(\rvt, \rvy)$ (while $\rvbz$ is still conditioned on).
            \begin{enumerate}
                \item[(A)] Unconditioning on $\rvt$ does not block the path $\cP''''(\rvt, \rvy)$ : In this case, by assumption, the path $\cP''(\rvt,\rvy) = \cP''''(\rvt, \rvy)$ in $\cG$ is unblocked when $\rvbz$ is conditioned on. Furthermore, since $\cP''(\rvt,\rvy) \subset \cP'(\rvx_{\rvt},\rvy)$, $\cP''(\rvt,\rvy)$ does not contain the edge $\rvt \myrightarrow \rvy$.
                \item[(B)] Unconditioning on $\rvt$ blocks the path $\cP''''(\rvt, \rvy)$: 
                Let $\rvx_{t'}$ be the node adjacent to $\rvt$ in the path $\cP''''(\rvt, \rvy)$. 
                Consider the path $\cP'''''(\rvx_{t'}, \rvy) \subset \cP''''(\rvt, \rvy)$. Clearly, $\rvt \notin \cP'''''(\rvx_{t'}, \rvy)$ since the path $\cP'(\rvx_{\rvt}, \rvy)$ was assumed to be the shortest unblocked path from $\rvx_{\rvt}$ to $\rvy$. Therefore, the only way unconditioning on $\rvt$ could block the path $\cP''''(\rvt, \rvy)$ is if it blocked the path $\cP'''''(\rvx_{t'}, \rvy)$. Now, this sub-case is similar to the case (1)(ii) with $\rvx_{\rvt} = \rvx_{t'}$ and $\cP'(\rvx_{\rvt}, \rvy) = \cP'''''(\rvx_{t'}, \rvy)$\footnote{The choice of edge ($\myrightarrow$ or ${~\dashleftarrow \hspace{-3.0mm} \dashrightarrow}$) between $\rvx_{\rvt}$ and $\rvt$ does not matter in (1)(ii). 
                }.
                As in (1)(ii), it can be shown that there exists an unblocked path $\cP''(\rvt,\rvy)$ in $\cG$ (that does not contain the edge $\rvt \myrightarrow \rvy$) when $\rvbz$ is conditioned on.
            \end{enumerate}
    \end{enumerate}
    \begin{figure}[ht!]
  \centering
	\begin{tikzpicture}[scale=0.7, every node/.style={transform shape}, > = latex, shorten > = 1pt, shorten < = 2pt]
	\node[shape=circle,draw=black](xt) at (0,2) {\LARGE$\rvx_{\rvt}$};
	\node[shape=rectangle,draw=none](xtdummy1) at (2,2) {};
	\node[shape=rectangle,draw=none](xtdummy2) at (1.75,2) {};
	\node[shape=circle,draw=black](xs) at (3,2) {\LARGE$\rvx_c$};
	\node[shape=rectangle,draw=none](ydummy1) at (3,1) {};
	\node[shape=rectangle,draw=none](ydummy2) at (3,0.75) {};
	\node[shape=rectangle,draw=none](a1) at (2,0.83){};
	\node[shape=rectangle,draw=none](a2) at (1,-0.33){};
	\node[shape=rectangle,draw=none](a3) at (2.1,0.94){};
	\node[shape=rectangle,draw=none](a4) at (0.9,-0.44){};
	\node[shape=circle,draw=black](e) at (-2,0.25) {\LARGE$\rve$};
	\node[shape=circle,draw=black](y) at (3,-1.5) {\LARGE$\rvy$};
	\node[shape=circle,draw=black](t) at (0,-1.5) {\LARGE$\rvt$};
	\path[style=thick][->](xtdummy2) edge (xs);
	\path[style=thick][-](xt) [dotted] edge (xtdummy1);
	\path[style=thick][->](ydummy2) edge (xs);
	\path[style=thick][-](y) [dotted] edge (ydummy1);
	\path[style=thick][->](t) edge (y);		
	\path[style=thick][->](xt) edge (t);
	\path[style=thick, color = red][->](t)  edge (e);
	\path[style=thick, color = red][->](xt)  edge (e);
	\path[style=thick][->](xs)  edge (a1);
	\path[style=thick][->](a2)  edge (t);
	\path[style=thick][-](a3) [dotted]  edge (a4);
	\end{tikzpicture}
\caption{Illustrating the case (1)(ii) in the proof of Theorem \ref{thm_necessity}}
\label{fig_case_thm_necessity}
\end{figure}
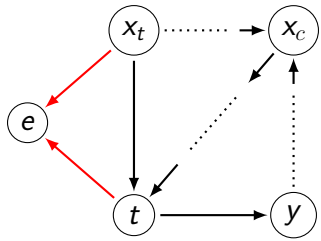

     Now, in each of the above cases, there exists an unblocked path $\cP''(\rvt,\rvy)$ in $\cG$ when $\rvbz$ is conditioned on and this path  does not contain the edge $\rvt \myrightarrow \rvy$. Therefore, there exists an unblocked path $\cP''(\rvt,\rvy)$ in $\cG_{-\rvt}$ when $\rvbz$ is conditioned on (since $\cP''(\rvt,\rvy)$ does not contain the edge $\rvt \myrightarrow \rvy$) implying $\rvt \notdsep \rvy | \rvbz$ in $\cG_{-\rvt}$. From Remark \ref{remark1}, under Assumption \ref{assumption1}, this is equivalent to  
     $\rvbz$ not satisfying the back-door criterion relative to $(\rvt, \rvy)$ in $\cG$ leading to a contradiction. This completes the proof.
\end{proof}

\section{The M-bias model}
\label{appendix:m_bias}
In this section, we discuss the M-bias problem. It is a causal model under which although some observed features (that are pre-treatment) are provided, one must not adjust for any of it. This model has been widely discussed \citep{imbens2020potential, liu2012implications} in the literature to underscore the need for algorithms that find valid adjustment sets.

We illustrate the M-bias problem using the semi-Markov model (with the corresponding DAG $\cG^{M}$) in Figure \ref{fig:M_bias}. 

\begin{figure}[ht!]
  \centering
	\begin{tikzpicture}[scale=0.7, every node/.style={transform shape}, > = latex, shorten > = 1pt, shorten < = 2pt]
	\node[shape=circle,draw=black](x1) at (1.5,2) {\LARGE$\rvx_1$};
	\node[shape=circle,draw=black](y) at (3,-1.5) {\LARGE$\rvy$};
	\node[shape=circle,draw=black](t) at (0,-1.5) {\LARGE$\rvt$};
	\path[style=thick][->](t) edge (y);		
	\path[style=thick][<->, bend right](x1) [dashed] edge (t);
	\path[style=thick][<->, bend left](x1) [dashed] edge (y);
	\end{tikzpicture}
\caption{The DAG $\cG^{M}$ illustrating the M-bias problem.}
\label{fig:M_bias}
\end{figure}
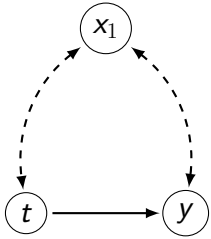

The DAG $\cG^{M}$ consists of the following edges: $\rvt \myrightarrow \rvy, \rvx_1 \doublearrow \rvt, \rvx_1 \doublearrow \rvy$. It is easy to verify that $\{\rvx_1\}$ does not satisfy the back-door criterion with respect to $(\rvt,\rvy)$ in $\cG^{M}$. Further, it is also easy to verify that the empty set i.e., $\varnothing$ satisfies the back-door criterion with respect to $(\rvt,\rvy)$ in $\cG^{M}$. In what follows, we will see how our framework cannot be used to arrive at this conclusion.

There are no observed parents of $\rvt$ in $\cG^{M}$. Therefore, Theorem \ref{thm_necessity} (i.e., the necessary condition) does not apply here. 
For Theorem \ref{thm_sufficiency} to be applicable, there is only one choice of $\rvx_{\rvt}$ i.e., one must use $\rvx_{\rvt} = \rvx_1$.
Now, for any $\rvbv \subseteq \{\rvt\}$ such that $\rve$ is sub-sampled according to $\rve = f(\rvx_1, \rvbv, \eta)$, $\rve$ is not d-separated from $\rvy$ given only $\rvt$. Therefore, one cannot conclude whether or not $\rvbz= \varnothing$ satisfies the back-door criterion with respect to $(\rvt,\rvy)$ in $\cG^{M}$ from Theorem \ref{thm_sufficiency} (i.e., the sufficiency condition).
In summary, we see that our sufficient condition cannot identify the set satisfying the back-door criterion (i.e., the null set) and necessity condition does not apply in the case of the M-bias problem.

Therefore, there are models where sets satisfying the back-door criterion exist (for e.g., the empty set in the M-bias problem) and our results may not be able to identify them.

\section{Finding all back-doors}
\label{appendix:finding_back_doors}

Building on Corollary \ref{corollary:backdoor_CI_equivalence}, we provide 
an  Algorithm 
(Algorithm \ref{alg:backdoors})
that, when all the parents of the treatment are observed and known, 
finds the set of all the subsets of the observed features 
satisfying the 
back-door criterion relative to $(\rvt, \rvy)$ in $\cG$ which we denote by $\cZ$.
We initialize Algorithm \ref{alg:backdoors} with the set $\cZ_1$ obtained by adding $\pi(t)$ to every element of the power set of $\rvbx^{(o)} \setminus \pi(t)$. The set $\cZ_1$ can be constructed easily with the knowledge of $\rvbx^{(o)}$ and $\pi(t)$ provided to Algorithm \ref{alg:backdoors}. Then, we repeatedly apply Corollary \ref{corollary:backdoor_CI_equivalence} to each parent in turn to identify all back-doors. We state this result formally in Corollary \ref{corollary:backdoor} below.

\begin{algorithm}
\KwInput{ $\pi(t), \rve, \rvt, \rvy, \rvbx^{(o)}$}
\KwOutput{$\cZ$}
\KwInitialization{$\cZ = \cZ_1$}
{
    \For{$\rvx_{\rvt} \in \pi(t)$}
    {
    \For{$\rvbz \subseteq \rvbx^{(o)} \setminus \{\rvx_{\rvt}\}$}
    {
    \If{$\rve \indep \rvy | \rvbz, \rvt$}
    {
        $\cZ = \cZ \cup \rvbz$
    }
    }
    }
}
\caption{Finding all back-doors}
\label{alg:backdoors}
\end{algorithm}




\textbf{Remark:} Algorithm \ref{alg:backdoors} is based on two key ideas : (1) Any subset of the observed features that contains all the parents of the treatment satisfies the back-door criterion relative to $(\rvt, \rvy)$ in $\cG$. Formally, consider the set $\cZ_1$ obtained by adding $\pi(t)$ to every element of the power set of $\rvbx^{(o)} \setminus \pi(t)$. Then, any $\rvbz \in \cZ_1$ satisfies the back-door criterion relative to $(\rvt, \rvy)$ in $\cG$. We use the set $\cZ_1$ in the initialization step of Algorithm \ref{alg:backdoors} as it can be constructed easily with the knowledge of $\rvbx^{(o)}$ and $\pi(t)$. (2) For any $\rvbz \notin \cZ_1$ that satisfies the back-door criterion relative to $(\rvt, \rvy)$ in $\cG$, there exists $\rvx_{\rvt} \in \pi(\rvt)$ such that $\rvbz \subseteq \rvbx^{(o)} \setminus \rvx_{\rvt}$. In this scenario, Algorithm \ref{alg:backdoors} captures $\rvbz$ because $\rve \indep \rvy | \rvbz, \rvt$ from Corollary \ref{corollary:backdoor_CI_equivalence} (under Assumption \ref{assumption3}). 

We now provide an example illustrating Algorithm \ref{alg:backdoors}, followed by Corollary \ref{corollary:backdoor} and its proof.
\subsection{Example}
We illustrate Algorithm \ref{alg:backdoors} with an example. Consider the DAG $\cG_{bd}$ in Figure \ref{fig:illustrating_algorithm}. 
It is easy to verify that, for $\cG_{bd}$, $\cZ = \{\{\rvx_3\}, \{\rvx_1,\rvx_3\}, \{\rvx_2,\rvx_3\}, \{\rvx_1,\rvx_2\}, \{\rvx_1,\rvx_2, \rvx_3\}, \{\rvx_1, \rvx_2,\rvx_4\}, \{\rvx_1, \rvx_2,\rvx_3, \rvx_4\} \}$.
Now, Algorithm \ref{alg:backdoors} takes $\pa(\rvt) = \{\rvx_1, \rvx_2\}$ and $\rvbx^{(o)} = \{\rvx_1, \rvx_2, \rvx_3, \rvx_4\}$ as inputs. Therefore, $\cZ_1 = \{\rvx_1,\rvx_2\}, \{\rvx_1,\rvx_2, \rvx_3\}, \{\rvx_1, \rvx_2,\rvx_4\}, \{\rvx_1, \rvx_2,\rvx_3, \rvx_4\}$ can be constructed by adding $\pa(\rvt)$ to every element of the power set of $\rvbx^{(o)} \setminus \pi(t)$ i.e., to the power set of $\{\rvx_3, \rvx_4\}$).  
Algorithm \ref{alg:backdoors} is initialized with $\cZ_1$ and the only remaining sets to be identified are $\{\rvx_3\}, \{\rvx_1,\rvx_3\}$, and $\{\rvx_2,\rvx_3\}$. When $\rvx_{\rvt} = \rvx_1$, Algorithm \ref{alg:backdoors} will identify $\{\rvx_3\}$ and $\{\rvx_2,\rvx_3\}$ as sets that satisfy the back-door criterion relative to $(\rvt, \rvy)$ in $\cG_{bd}$. Similarly, when $\rvx_{\rvt} = \rvx_2$, Algorithm \ref{alg:backdoors} will identify $\{\rvx_3\}$ and $\{\rvx_1,\rvx_3\}$ as sets that satisfy the back-door criterion relative to $(\rvt, \rvy)$ in $\cG_{bd}$.
 
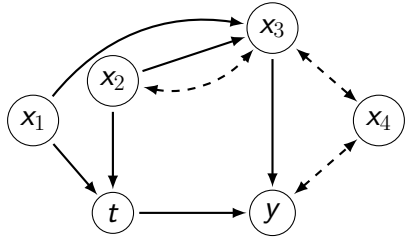
\begin{figure}[h]
  \centering
	\begin{tikzpicture}[scale=0.7, every node/.style={transform shape}, > = latex, shorten > = 1pt, shorten < = 2pt]
	\node[shape=circle,draw=black](x2) at (0,1) {\LARGE$\rvx_2$};
	\node[shape=circle,draw=black](x3) at (3,2) {\LARGE$\rvx_3$};
	\node[shape=circle,draw=black](x4) at (5,0.25) {\LARGE$\rvx_4$};
	\node[shape=circle,draw=black](x1) at (-1.5,0.25) {\LARGE$\rvx_1$};
	\node[shape=circle,draw=black](y) at (3,-1.5) {\LARGE$\rvy$};
	\node[shape=circle,draw=black](t) at (0,-1.5) {\LARGE$\rvt$};
	\path[style=thick][->](x2) edge (x3);
	\path[style=thick][<->, bend right](x2) [dashed] edge (x3);
	\path[style=thick][->](t) edge (y);		
	\path[style=thick][->](x3) edge (y);		
	\path[style=thick][->](x2) edge (t);
	\path[style=thick][<->](x3) [dashed] edge (x4);
	\path[style=thick][<->](y) [dashed] edge (x4);
	\path[style=thick][->](x1) edge (t);
	\path[style=thick][->, bend left](x1)  edge (x3);
	\end{tikzpicture}
\caption{The DAG $\cG_{bd}$ for illustrating Algorithm \ref{alg:backdoors}}
\label{fig:illustrating_algorithm}
\end{figure}

\subsection{Corollary \ref{corollary:backdoor}}
\label{appendix:proof_corollary_back-door}
Recall the notions of \textit{path}, \textit{collider}, \textit{descendant path}, \textit{descendant}, \textit{blocking path} and \textit{d-separation} from Appendix \ref{appendix:d_separation}. Also, recall the notions of \textit{subset of a path} and $\cG_{-\rvt}$ as well as Remark \ref{remark1} from Appendix \ref{appendix:additional_notations}.

\begin{restatable}{corollary}{corrBackdoor}
\label{corollary:backdoor}
Let Assumptions \ref{assumption1} and \ref{assumption3} be satisfied. Let $\cZ$ be the set of all sets $\rvbz \subseteq \rvbx^{(o)}$ that satisfy the \textit{back-door} criterion relative to the ordered pair of variables $(\rvt, \rvy)$ in $\cG$. 
If all the parents of $\rvt$ are observed and known i.e., $\pi(t) = \pi^{(o)}(t)$ is known, 
then Algorithm \ref{alg:backdoors} returns the set $\cZ$.
\end{restatable}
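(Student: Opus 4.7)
The plan is to prove the two-way inclusion: everything Algorithm \ref{alg:backdoors} outputs lies in $\cZ$ (soundness), and everything in $\cZ$ is captured by the algorithm (completeness). The main leverage comes from Corollary \ref{corollary:backdoor_CI_equivalence}, which I will apply once per parent of $\rvt$, combined with a simple observation about what a back-door set must contain.

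For soundness, I would first handle the initialization step by showing $\cZ_1 \subseteq \cZ$. Every $\rvbz \in \cZ_1$ contains $\pi(\rvt) = \pi^{(o)}(\rvt)$ as a subset, and any such superset blocks every back-door path from $\rvt$ to $\rvy$ because, under Assumption \ref{assumption1}, every back-door path leaves $\rvt$ through an edge into $\rvt$, i.e., it begins with an arrow from some element of $\pi(\rvt)$; conditioning on the parents blocks these non-collider nodes. (None of these parents are descendants of $\rvt$ under Assumption \ref{assumption1}, so Definition \ref{definition_back_door} is satisfied.) Next, for each $\rvbz$ added by the inner loop, the invariance test $\rve \indep \rvy | \rvbz, \rvt$ with $\rve$ sub-sampled from the observed parent $\rvx_{\rvt}$ passes. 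Under Assumption \ref{assumption3} this yields $\rve \dsep \rvy | \rvbz, \rvt$ in $\cG$, and the ``if'' direction of Corollary \ref{corollary:backdoor_CI_equivalence} (which is Theorem \ref{thm_sufficiency}) then forces $\rvbz$ to satisfy the back-door criterion relative to $(\rvt, \rvy)$. Hence everything the algorithm outputs belongs to $\cZ$.

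For completeness, pick any $\rvbz \in \cZ$ and split on whether $\pi(\rvt) \subseteq \rvbz$. If $\pi(\rvt) \subseteq \rvbz$, then $\rvbz$ is obtained by adjoining $\pi(\rvt)$ to the subset $\rvbz \setminus \pi(\rvt) \subseteq \rvbx^{(o)} \setminus \pi(\rvt)$, so $\rvbz \in \cZ_1$ and is captured by the initialization. Otherwise pick any $\rvx_{\rvt} \in \pi(\rvt) \setminus \rvbz$, which exists because the parent set is known and $\pi(\rvt) \not\subseteq \rvbz$. Then $\rvbz \subseteq \rvbx^{(o)} \setminus \{\rvx_{\rvt}\}$ and $\rvx_{\rvt}$ has a direct edge to $\rvt$, so the ``only if'' direction of Corollary \ref{corollary:backdoor_CI_equivalence} (which is Theorem \ref{thm_necessity}) yields $\rve \dsep \rvy | \rvbz, \rvt$ in $\cG$; since d-separation implies conditional independence by the standard Markov property of the augmented DAG (with $\rve$ as an added child of $\rvx_{\rvt}$ and possibly $\rvt$), the invariance test passes and the loop iteration with this choice of $\rvx_{\rvt}$ adds $\rvbz$ to $\cZ$. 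Thus $\cZ$ is exactly reconstructed.

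The step that carries the real content is completeness in the non-trivial case: one must be certain that whenever $\rvbz$ is a valid back-door but misses some parent, the missing parent $\rvx_{\rvt}$ is itself a legitimate choice of anchor for Corollary \ref{corollary:backdoor_CI_equivalence}. This is exactly why the hypothesis $\pi(\rvt) = \pi^{(o)}(\rvt)$ is essential: it guarantees that any parent not included in $\rvbz$ is observed and hence eligible to seed the sub-sampling variable $\rve$. Without full knowledge of the parent set, one would be unable to certify that the missing parent lies in $\rvbx^{(o)}$, and the argument would break precisely at this point.
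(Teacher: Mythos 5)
Your proof is correct and follows essentially the same route as the paper's: establish $\cZ_1 \subseteq \cZ$ by noting that under Assumption \ref{assumption1} and $\pi(\rvt)=\pi^{(o)}(\rvt)$ every back-door path starts $\rvt \myleftarrow \rvx_{\rvt}$ with $\rvx_{\rvt}$ a non-collider in $\pi(\rvt)$, then handle every back-door set missing some parent by invoking both directions of Corollary \ref{corollary:backdoor_CI_equivalence} with that parent as the anchor. Your explicit separation of soundness (CI $\Rightarrow$ d-separation via Assumption \ref{assumption3} $\Rightarrow$ back-door) from completeness (back-door $\Rightarrow$ d-separation $\Rightarrow$ CI via the Markov property) is a slightly more careful bookkeeping of what the paper compresses into the word ``equivalent,'' but the argument is the same.
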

\begin{proof}
From Remark \ref{remark1}, under assumption \ref{assumption1}, $\rvbz$ satisfying the back-door criterion relative to the ordered pair of variables $(\rvt, \rvy)$ in $\cG$ is equivalent to $\rvt$ and $\rvy$ being d-separated by $\rvbz$ in $\cG_{-\rvt}$ i.e., $\rvt \dsep \rvy | \rvbz$ in $\cG_{-\rvt}$. From \cite{Pearl2016},  $\pi(t)$ always satisfies the back-door criterion relative to the ordered pair of variables $(\rvt, \rvy)$ in $\cG$ i.e., $\rvt \dsep \rvy | \pi(t)$ in $\cG_{-\rvt}$.  Consider any $\rvbz \subseteq \rvbx^{(o)}$ such that $\pi(t) \subseteq \rvbz$. First, we will show that $\rvt \dsep \rvy | \rvbz$ in $\cG_{-\rvt}$ i.e., $\rvbz$ satisfies the back-door criterion relative to the ordered pair of variables $(\rvt, \rvy)$ in $\cG$.

Suppose $\rvt \notdsep \rvy | \rvbz$ in $\cG_{-\rvt}$ i.e., $\rvt$ and $\rvy$ are not d-separated by $\rvbz$ in $\cG_{-\rvt}$. This is equivalent to saying that there exists at least one unblocked path (not containing the edge $\rvt \myrightarrow \rvy$) from $\rvt$ to $\rvy$ in $\cG_{-\rvt}$ when $\rvbz$ is conditioned on. Without the loss of generality, let $\cP(\rvt, \rvy)$ denote any one of these unblocked paths. The path $\cP (\rvt, \rvy)$ has to be of the form $\rvt \myleftarrow \rvx_{\rvt} \cdots \rvy$ where $\rvx_{\rvt} \in \pi(t)$ because (a) under Assumption \ref{assumption1}, $\cG$ cannot contain the edge $\rvt \myleftarrow \rvy$ (because a DAG cannot have a cycle) and (b) under Assumption \ref{assumption1}, $\rvt$ has no child other than $\rvy$.
However, $\rvx_{\rvt} \in \pi(t) \subseteq \rvbz$ i.e., $\rvx_{\rvt}$ is conditioned on. Now since there is no collider at  $\rvx_{\rvt}$ in the path $\cP(\rvt, \rvy)$, it cannot be unblocked and this leads to a contradiction.  Therefore, $\rvbz$ satisfies the back-door criterion relative to the ordered pair of variables $(\rvt, \rvy)$ in $\cG$.

Now, consider the set $\cZ_1$ obtained by adding $\pi(\rvt)$ to every element of the power set of $\rvbx^{(o)} \setminus \pi(\rvt)$ i.e., $\cZ_1 \coloneqq \{\rvbz \subseteq \rvbx^{(o)} : \pi(t) \subseteq \rvbz\}$. From the argument above, we have $\cZ_1 \subseteq \cZ$. From the knowledge of $\pi(t)$ and $\rvbx^{(o)}$, one can easily construct the set $\cZ_1$ and thus initialize $\cZ$ in Algorithm \ref{alg:backdoors} with $\cZ_1$. 

Now, consider the set $\cZ_2 \coloneqq \cZ \setminus \cZ_1$. Consider any set $\rvbz \in \cZ_2$ satisfying the back-door criterion relative to the ordered pair of variables $(\rvt, \rvy)$ in $\cG$. By the definition of $\cZ_1$ (and $\cZ_2$), there exists
at least one parent of $\rvt$ not present in the set $\rvbz$.
In other words, there exists $\rvx_{\rvt} \in \pi(\rvt)$ such that $\rvbz \subseteq \rvbx^{(o)} \setminus \rvx_{\rvt}$. From Corollary \ref{corollary:backdoor_CI_equivalence}, under Assumption \ref{assumption3}, this is equivalent to $\rve \indep \rvy | \rvbz, \rvt$. Therefore, Algorithm \ref{alg:backdoors} will capture the set $\rvbz$. Since the choice of $\rvbz$ was random, Algorithm \ref{alg:backdoors} will capture every $\rvbz \in \cZ_2$ and return $\cZ_1 \cup \cZ_2$. This completes the proof.



\end{proof}

\section{The baseline}
\label{appendix:baseline}
In this section, we provide an implementation of the \texttt{Baseline} considered in Section \ref{section:experiments}. This routine estimates the ATE from the observational data by regressing $\rvy$ for the treated and the untreated sub-populations on a given set $\rvbz$. The \texttt{Baseline} we consider in this work is an instance of this routine.  
More specifically, for the \texttt{Baseline},
we set $\rvbz$ to be the set of all the observed features i.e., $\rvbz = \rvbx^{(o)}$. See Section \ref{section:experiments} for details.
\begin{algorithm}
\DontPrintSemicolon
\KwInput{ $n, n_r, \rvt, \rvy, \rvbz$}
\KwOutput{$\text{ATE}(\rvbz)$}
\KwInitialization{$\text{ATE}(\rvbz) = 0$}
\For(\tcp*[h]{Use a different train-test split in each run}) {$r = 1,\cdots,n_r$} 
{
    $\text{ATE}(\rvbz) = \text{ATE}(\rvbz) + \frac{1}{n} \sum_{i=1}^{n} (\Expectation[\rvy | \rvbz = \svbz^{(i)}, \rvt = 1] - \Expectation[\rvy | \rvbz = \svbz^{(i)}, \rvt = 0])$;
}
$\text{ATE}(\rvbz) = \text{ATE}(\rvbz) / n_r$;
\caption{ATE estimation using $\rvbz$ as an adjustment set }\label{alg:ate_generic}
\end{algorithm}

\section{Additional experiments}
\label{appendix:experiments}
In this section, we briefly discuss the usage of real-world CI testers in Algorithm \ref{alg:subset_search}. We also provide in-depth discussions on the synthetic experiment from Section \ref{subssection:synthetic_experiments}, the experiments on IHDP from Section \ref{subsection:ihdp}, and the experiments on Cattaneo from Section \ref{subsection:cattaneo}. Additionally, we specify all the training details, as well as provide more details regarding the comparison of our method with \cite{entner2013data}, \cite{gultchin2020differentiable}, and \cite{cheng2020towards}.

\subsection{Usage of CI testers in Algorithm \ref{alg:subset_search}}
\label{appendix:CI}
In this work we use the RCot real-world CI tester \citep{strobl2019approximate}.

The real-world CI testers produce a p-value close to zero if the CI does not hold and produce a p-value uniformly distributed between 0 and 1 if the CI holds. Since we use a non-zero p-value threshold, depending on the quality of the CI tester, the false positive rate for valid adjustment sets may be non-zero.

Suppose, for a CI tester and for an increasing sample size $n$, we find a sequence of Type-I error rate ($\alpha_n$) and Type-II error rate ($\beta_n$) going to zero i.e., $\alpha_n, \beta_n \myrightarrow 0$. 
Then, if there is a valid adjustment set, it is easy to see that our algorithm will have zero bias in the estimated effect when the significance threshold $\alpha_n$ is used as the p-value threshold in our algorithm.

\subsection{Synthetic experiment}
\label{appendix:synthetic_experiments}
In this sub-section, we provide more details on the synthetic experiment in Section \ref{subssection:synthetic_experiments}. 

Let $\Uniform(a,b)$ denote the uniform distribution over the interval $[a,b]$ for $a,b \in \Reals$ such that $a<b$. Let $\cN(\mu,\sigma^2)$ denote the Gaussian distribution with mean $\mu$ and variance $\sigma^2$. Let $\Bernoulli(p)$ denote the Bernoulli distribution which takes the value 1 with probability $p$. Let $\Sigmoid(\cdot)$ denote the sigmoid function i.e., for any $a \in \Reals$, $\Sigmoid(a) = 1/1+e^{-a}$. Let $\Softmax(\cdot)$ denote the softmax function.\\

\noindent{\bf Dataset Description.}
We generate different variables as below:
\begin{itemize}[topsep=2pt,itemsep=-0pt]
    \item $\rvu_i \sim \Uniform(1,2)$
    \item $\rvx_1 \sim \theta_{11}\rvu_1 + \theta_{12}\rvu_2 + \cN(0,0.01)$ where $\theta_{11}, \theta_{12} \in \Uniform(1,2)$
    \item $\rvx_2 \sim \theta_{21}\rvx_1 + \theta_{22}\rvu_2 + \theta_{23}\rvu_3 + \cN(0,0.01)$ where $\theta_{21}, \theta_{22}, \theta_{23} \in \Uniform(1,2)$
    \item $\rvx_3 \sim \theta_{31}\rvu_3 + \theta_{32}\rvu_4 +  \cN(0,0.01)$ where $\theta_{31}, \theta_{12} \in \Uniform(1,2)$
    \item $\rvt \sim \Bernoulli(\Sigmoid(\theta_{51} \rvx_1 + \theta_{52} \rvu_1))$ where $\theta_{51},\theta_{52} \in \Uniform(1,2)$
    \item $\rvy \sim \theta_{41}\rvx_2 + \theta_{42}\rvu_4 + \theta_{43}\rvt + \cN(0,0.01)$ where $\theta_{41}, \theta_{42}, \theta_{43} \in \Uniform(1,2)$
\end{itemize}
We generate the weight vectors from $\Uniform(1,2)$ to ensure that the faithfulness assumption with respect to the sub-sampling variable is satisfied (i.e., Assumption \ref{assumption3}). This is because for smaller weights, it is possible that conditionally dependent relations are declared as conditionally independent. See \cite{uhler2013geometry} for details.\\

For all our experiments, we use $3$ environments i.e., $\rve \in \{0,1,2\}$ and generate the sub-sampling variable as below with $\hat{\Expectation}$ denoting the empirical expectation. While other choices of sub-sampling function $f$ could be explored, the natural choice (for discrete $\rve$) of softmax with random weights suffices.
\begin{itemize}[topsep=2pt,itemsep=-0pt]
    \item $\rve \sim \Softmax(\btheta_{61} (\rvx_1 - \hat{\Expectation}[\rvx_1]) + \btheta_{62} (\rvt - \hat{\Expectation}[\rvt]) )$ with $\btheta_{61} \coloneqq (\theta_{61}^{(1)}, \theta_{61}^{(2)}, \theta_{61}^{(3)}) \in \Reals^{3}$ and $\btheta_{62} \coloneqq (\theta_{62}^{(1)}, \theta_{62}^{(2)}, \theta_{62}^{(3)}) \in \Reals^{3}$ such that $\theta_{61}^{(1)}, \theta_{62}^{(1)} \in \Uniform(1,2)$, $\theta_{61}^{(2)} = \theta_{62}^{(2)} = 0$, and $\theta_{61}^{(3)}, \theta_{62}^{(3)} \in \Uniform(-2,-1)$
\end{itemize}
In other words, we keep separation between the weight vectors associated with different environments to make sure that the environments look different from each other as expected by IRM.\\


\noindent{\bf Success Probability.}
For a given $p_{value}$ threshold, we let the success probability of the set $\{\rvx_2\}$ be the fraction of times (in $n_r$ runs) the p-value of CI$(\rve \indep \rvy |\rvx_2, \rvt )$ is more than $p_{value}$.
In Figure \ref{fig:syn1_sp} below, we show how the success probability of the set $\{\rvx_2\}$ varies with different $p_{value}$ thresholds i.e., $\{0.1,0.2,0.3,0.4,0.5\}$ for the dataset used in Section \ref{subssection:synthetic_experiments}. As we can see in Figure \ref{fig:syn1_sp}, the success probability of the set $\{\rvx_2\}$, for the same $p_{value}$ threshold, is much lower in high dimensions compared to low dimensions. We believe this happens (a) because of the non-ideal CI tester and (b) because the number of samples are finite. In contrast, our algorithms \texttt{IRM-t} and \texttt{IRM-c} always pick the set $\{\rvx_2\}$ to adjust on i.e., $\rvbz_{\mathsf{irm}} = \{\rvx_2\}$ for both \texttt{IRM-t} and \texttt{IRM-c} for $d = 3,5,7$.\\

\begin{figure}[ht!]
\centering
\begin{subfigure}{.5\textwidth}
  \centering
  \includegraphics[width=0.9\textwidth]{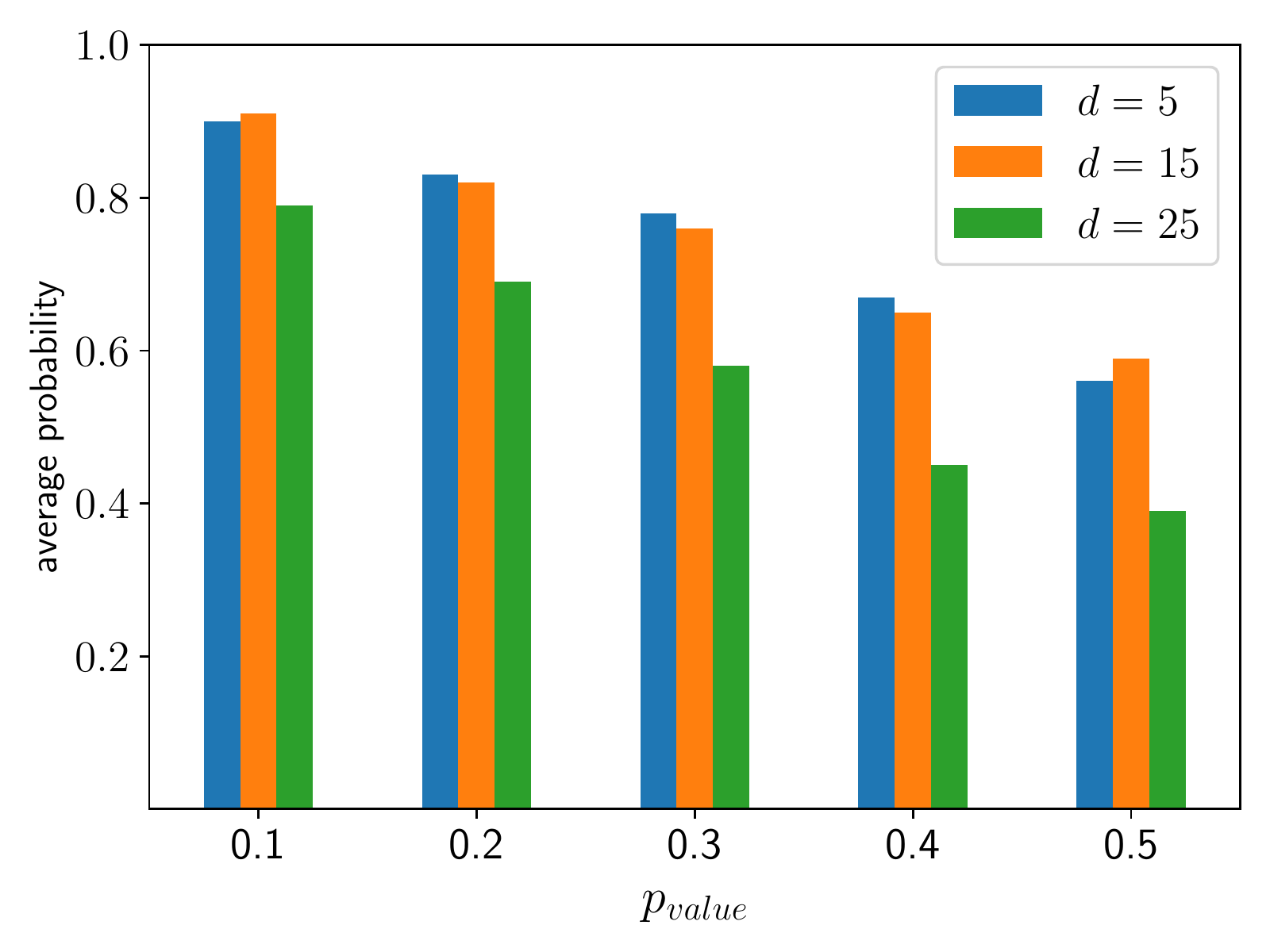}
  \caption{Success probability of the set $\{\rvx_2\}$ in the \\toy
  example $\cG^{toy}$ for different $p_{value}$ thresholds.}
  \label{fig:syn1_sp}
\end{subfigure}%
\begin{subfigure}{.5\textwidth}
  \centering
  \includegraphics[width=0.9\textwidth]{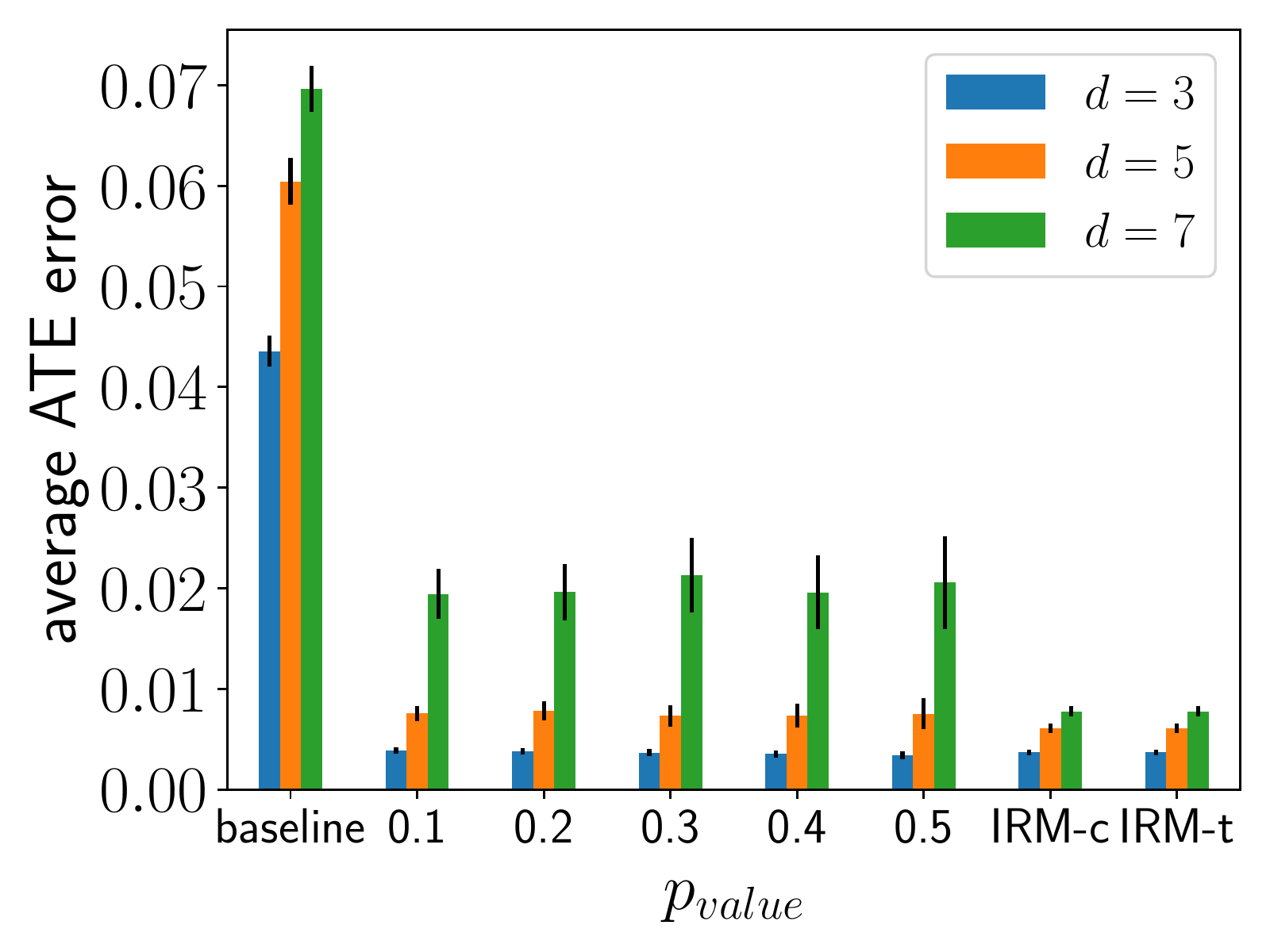}
  \caption{Performance of Algorithm \ref{alg:subset_search} on $\cG^{toy}$ when \\
  the candidate adjustment sets are $\td$-dimensional.}
  \label{fig:toy4_sparse}
\end{subfigure}
\caption{Additional analysis on the toy example $\cG^{toy}$.}
\label{fig:toy_more_plots}
\end{figure}

\noindent{\bf Sparse subset search.} In Section \ref{subssection:synthetic_experiments}, we validated our algorithm by letting $\cX$ be the set of all subsets of $\rvbx^{(o)} \setminus \{\rvx_{\rvt}\}$. However, for this synthetic experiment, we do know that only $\rvx_2 \in \cX$ satisfies the back-door criterion relative to $(\rvt,\rvy)$. Further, we know that $\rvx_2$ is $\td$-dimensional. Therefore, with this additional knowledge, we could instead let $\cX$ be the set of all $\td$-dimensional subsets of $\rvbx^{(o)} \setminus \{\rvx_{\rvt}\}$. 
In other words, we consider the \texttt{Sparse} algorithm from Section \ref{section:experiments} with $k = \td$\footnote{More precisely, the \texttt{Sparse} algorithm considers subsets of size at-most $k$. Here, we consider subsets of size exactly equal to $k$.}. We show the performance of this algorithm for this choice of $\cX$, in comparison to the \texttt{Baseline} (i.e. using all observed features) as well as \texttt{IRM-t} and \texttt{IRM-c}, in Figure \ref{fig:toy4_sparse} for $d = 3,5,7$. With this restriction on the candidate adjustment sets, our algorithm performs better than it does in Figure \ref{fig:toy4_expts} where there are no restrictions on the candidate adjustment sets.\\

\noindent{\bf Performance with dimensions.}
The gains of our testing and subset search based algorithm over the \texttt{Baseline} are much more in the low dimensions compared to the high dimensions as seen in Figures \ref{fig:toy4_expts} and \ref{fig:toy4_sparse}. We believe there are two primary reasons behind this : (a) The CI tester leaks more false positive in high dimensions compared to low dimensions (see Appendix \ref{appendix:CI}) and (b) The CI tester fails to consistently output a high p-value for the set $\{\rvx_2\}$ in high dimensions (see Figure \ref{fig:syn1_sp}). The gains of our IRM based algorithm remain consistent even in high dimensions as expected.

\subsection{Generating the environment/sub-sampling variable}
\label{subsec:app_env}

In all our experiments in Section \ref{section:experiments}, we let the sub-sampling variable depend on both $\rvx_{\rvt}$ and $\rvt$. Now, we will look into the case where the sub-sampling variable is generated as a function of only $\rvx_{\rvt} = \rvx_1$ i.e., $\rve = f(\rvx_1)$. More specifically, we generate the sub-sampling variable as below:
\begin{itemize}[topsep=2pt,itemsep=-0pt]
    \item $\rve \sim \Softmax(\btheta_{61} (\rvx_1 - \hat{\Expectation}[\rvx_1]))$ with $\btheta_{61} \coloneqq (\theta_{61}^{(1)}, \theta_{61}^{(2)}, \theta_{61}^{(3)}) \in \Reals^{3}$ such that $\theta_{61}^{(1)} \in \Uniform(1,2)$, $\theta_{61}^{(2)} = 0$, and $\theta_{61}^{(3)} \in \Uniform(-2,-1)$
\end{itemize}
For this setting, we show the plots analogous to those in Figure \ref{fig:toy3_expts}, Figure \ref{fig:toy4_expts}, Figure \ref{fig:syn1_sp} and Figure \ref{fig:toy4_sparse} in Figure \ref{fig:app_toy_expts}. As we can see in Figure \ref{fig:app_toy3_expts}, Figure \ref{fig:app_toy4_expts}, Figure \ref{fig:app_syn1_sp}, and Figure \ref{fig:app_toy4_sparse}, the performance of our algorithm with $\rve = f(\rvx_{\rvt})$ is similar to (at a high level) its performance with $\rve = f(\rvx_{\rvt}, \rvt)$. This should not be surprising since Corollary \ref{corollary:backdoor_CI_equivalence} holds for any $\rvbv \subseteq \{\rvt\}$ i.e., for both $\rvbv = \varnothing$ and $\rvbv = \{\rvt\}$. In other words, while theoretical tradeoff between the choice of $\rvbv$ i.e., $\varnothing$ or $\{\rvt\}$ is unclear, there is no major empirical difference. Note: We do not show the performance of IRM based algorithms for $\rve = f(\rvx_1)$ since it is exactly the same as the performance for $\rve = f(\rvx_1, \rvt)$.

\begin{figure}[ht!]
\centering
\begin{subfigure}{.5\textwidth}
  \centering
  \includegraphics[width=0.9\textwidth]{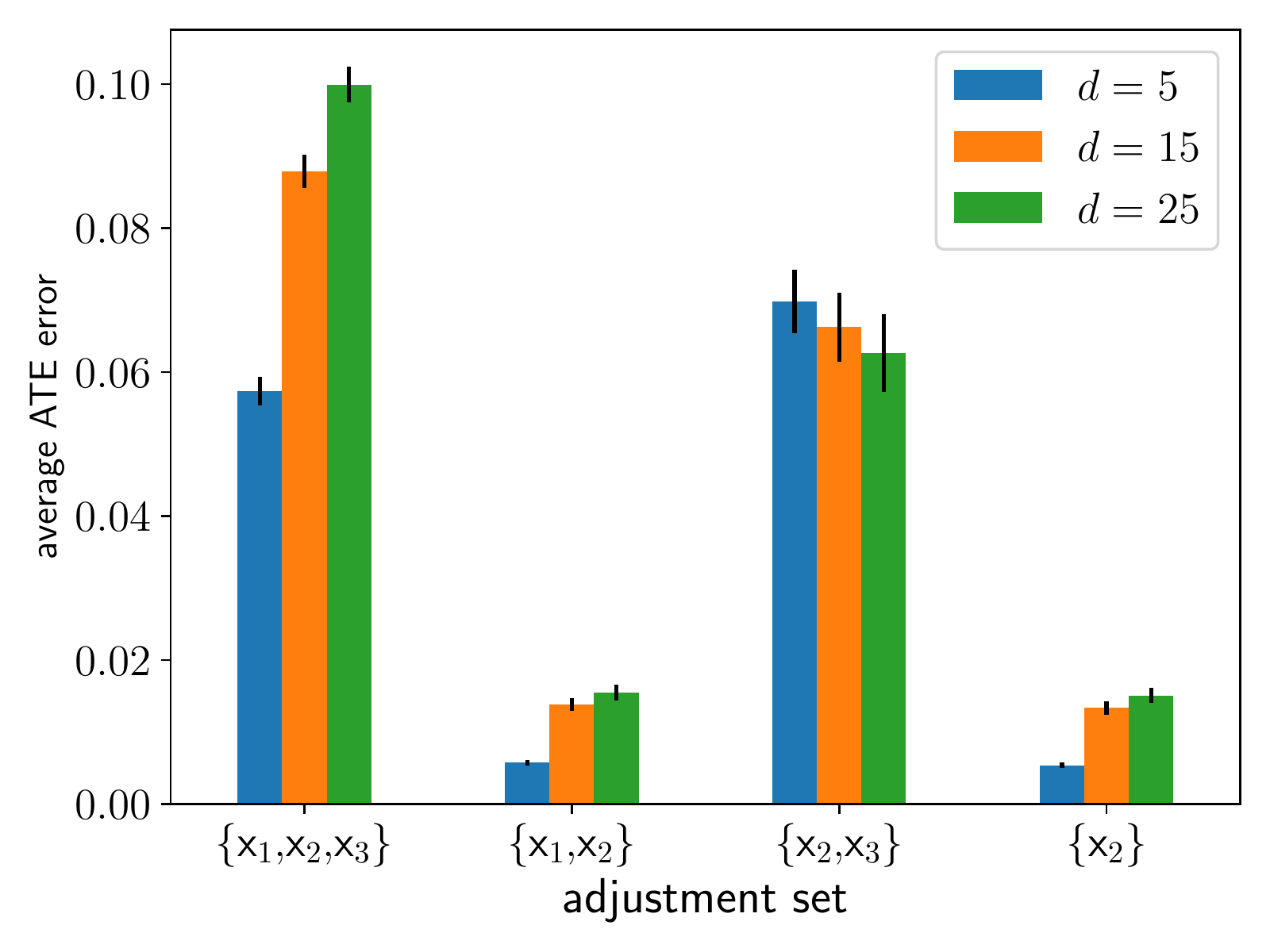}
  \caption{Sets not satisfying back-door ($\{\rvx_1, \rvx_2,\rvx_3\}$, \\ 
  $\{\rvx_2, \rvx_3\}$) result in high ATE error; sets satisfying\\
  back-door ($\{\rvx_1, \rvx_2\}, \{\rvx_2\}$) result in low ATE error.}
  \label{fig:app_toy3_expts}
\end{subfigure}%
\begin{subfigure}{.5\textwidth}
  \centering
  \includegraphics[width=0.9\textwidth]{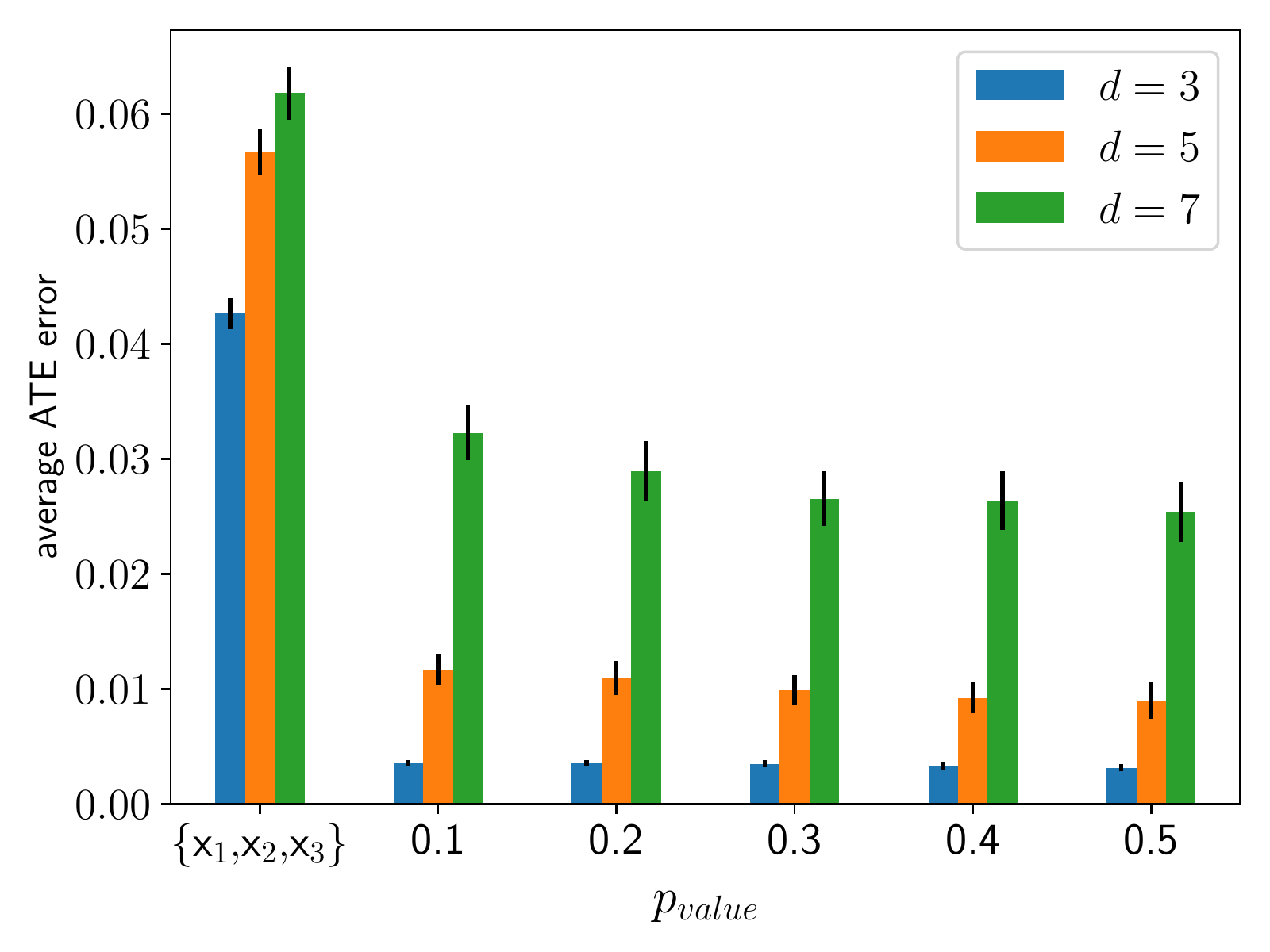}
  \caption{Performance of Algorithm \ref{alg:subset_search} on $\cG^{toy}$.}
  \label{fig:app_toy4_expts}
\end{subfigure}
\qquad
\begin{subfigure}{.5\textwidth}
  \centering
  \includegraphics[width=0.9\textwidth]{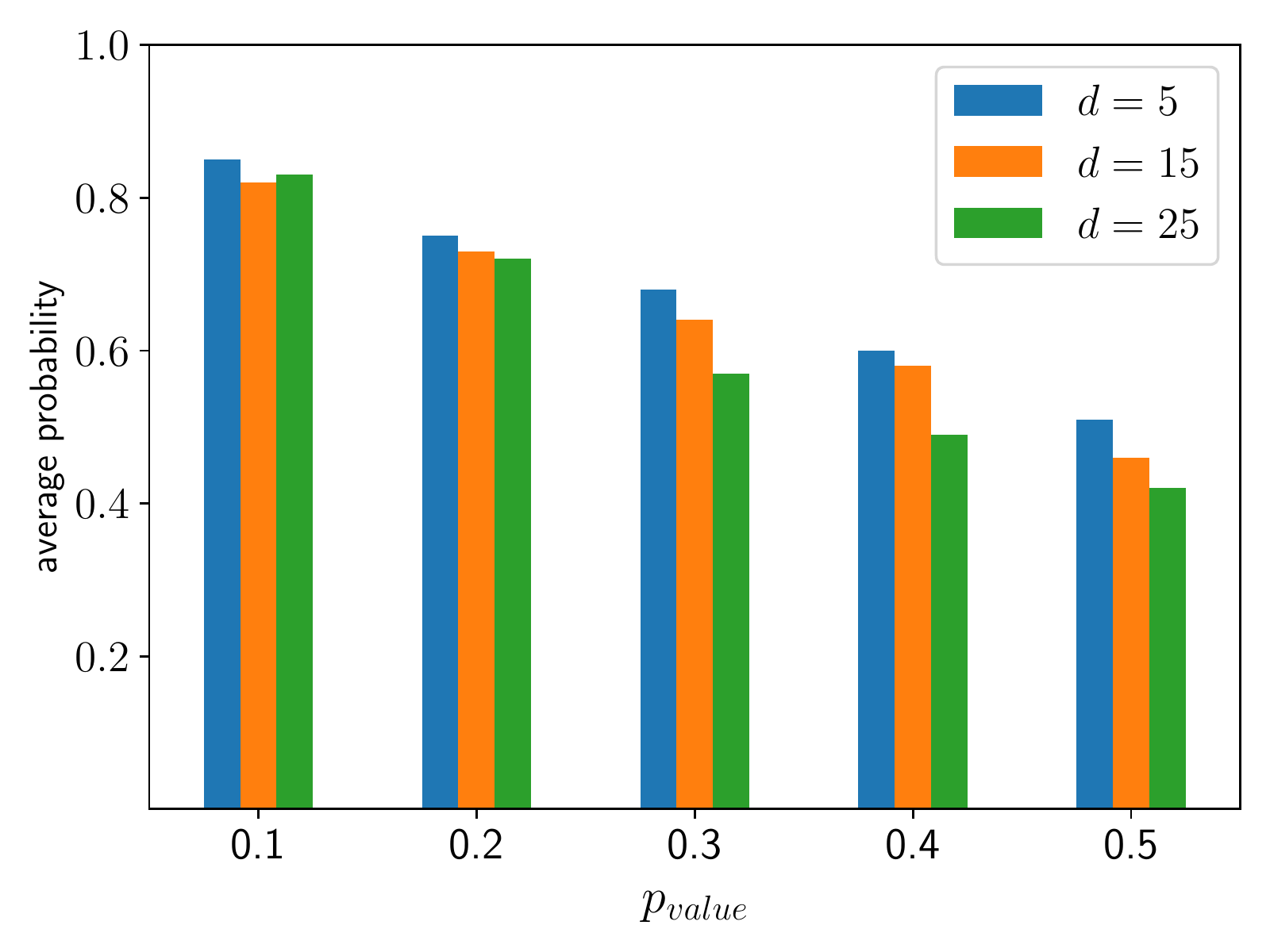}
  \caption{Success probability of the set $\{\rvx_2\}$ in the \\ toy
  example $\cG^{toy}$ for different $p_{value}$ thresholds.}
  \label{fig:app_syn1_sp}
\end{subfigure}%
\begin{subfigure}{.5\textwidth}
  \centering
  \includegraphics[width=0.9\textwidth]{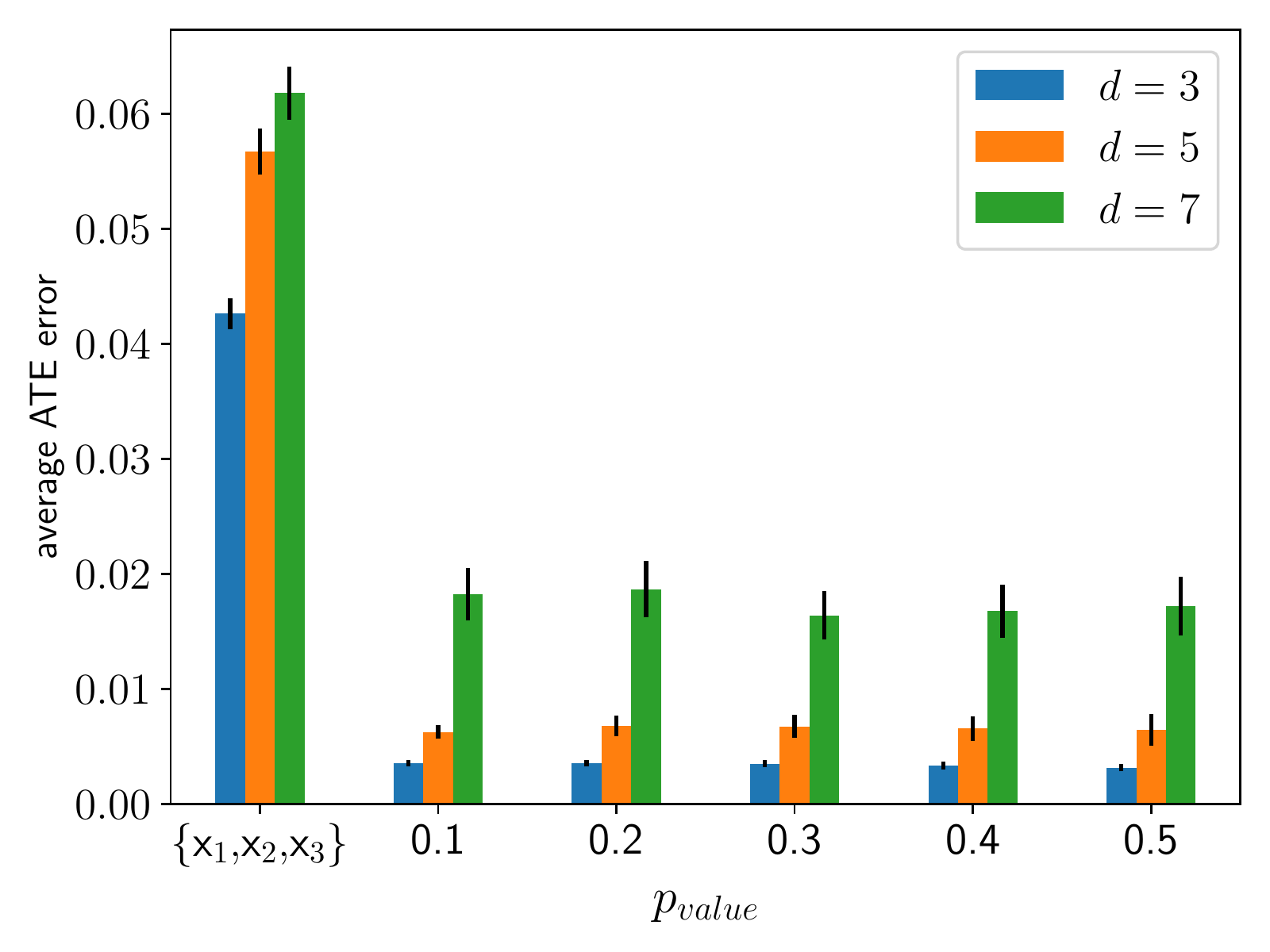}
  \caption{Performance of Algorithm \ref{alg:subset_search} on $\cG^{toy}$ when \\
  the candidate adjustment sets are $\td$-dimensional.}
  \label{fig:app_toy4_sparse}
\end{subfigure}
\caption{Validating our theoretical results and our Algorithm \ref{alg:subset_search} on $\cG^{toy}$ when $\rve = f(\rvx_{\rvt})$.}
\label{fig:app_toy_expts}
\end{figure}

\subsection{IHDP}
\label{appendix:ihdp}
In this section, we provide more details on experiments in Section \ref{subsection:ihdp} on the IHDP\footnote{\url{https://github.com/vdorie/npci/blob/master/examples/ihdp_sim/data/ihdp.RData}} dataset.\\

\noindent{\bf Dataset Description.}
First, we describe various aspects measured by the features available in this dataset.
The feature set comprises of the following attributes (a) 1-dimensional: child's birth-weight, child’s head circumference at birth, number of weeks pre-term that the child was born, birth order, neo-natal health index, mother’s age when she gave birth to the child, child’s gender, indicator for whether the child was a twin, indicator for whether the mother was married when the child born, indicator for whether the child was first born, indicator for whether the mother smoked cigarettes when she was pregnant, indicator for whether the mother consumed alcohol when she was pregnant, indicator for whether the mother used drugs when she was pregnant, indicator for whether the mother worked during her pregnancy, indicator for whether the mom received any prenatal care, (b) 3-dimensional: education level of the mother at the time the child was born, and (c) 7 -dimensional: site indicator.\\ 

\noindent{\bf The set of all observed features satisfies the back-door criterion for IHDP.}
As described in Section \ref{subsection:ihdp}, the outcome simulated by the setting ``A'' of the NPCI package depends on all the observed features. In other words, there is a direct edge from each of the observed feature to the outcome $\rvy$ in this scenario. Also, recall from Section \ref{subsection:ihdp} that the feature set is pre-treatment (i.e., it satisfies Assumption \ref{assumption1}). Therefore, from Remark \ref{remark1}, $\rvbz \subseteq \rvbx$ satisfies the back-door criterion relative to $(\rvt, \rvy)$ in $\cG$ if and only if $\rvt$ and $\rvy$ are d-separated by $\rvbz$ in $\cG_{-\rvt}$. Here, when $\rvbz$ is the set of all observed features, it is easy to see that $\rvt$ and $\rvy$ are d-separated by $\rvbz$ in $\cG_{-\rvt}$. Therefore, the set of all observed features satisfies the back-door criterion.\\

\noindent{\bf Choices of features in $\rvbx^{(o)}$.}
As mentioned in Section \ref{subsection:ihdp}, we keep the feature {child’s birth-weight} in $\rvbx^{(o)}$. In addition to these, we also keep {the number of weeks pre-term that the child}, {child’s head circumference at birth}, {birth order}, {neo-natal health index}, {mother’s age when she gave birth to the child} , { child’s gender}, { indicator for whether the mother used drugs when she was pregnant}, {indicator for whether the mom received any prenatal care}, and  {site indicator} in $\rvbx^{(o)}$.\\

\noindent{\bf Existence of valid adjustment sets of size 5.} 
Since $\rvbx^{(o)}$ comprises of only 10 different features, the set of all subsets of $\rvbx^{(o)} \setminus \{\rvx_{\rvt}\}$ comprises of 512 elements for any $\rvx_{\rvt}$. Therefore, in principle, one could find the set with lowest ATE error amongst these 512 candidate adjustment sets instead of the averaging performed by our algorithm (Algorithm \ref{alg:subset_search}). In an attempt to do this for comparison with our algorithm, we accidentally came across the following subset of features : $\rvbx^{(m)}$ =  $\{${child’s head circumference at birth}, {birth order}, {indicator for whether the mother used drugs when she was pregnant}, {indicator for whether the mom received any prenatal care}, {site indicator}$\}$. The ATE estimated using $\rvbx^{(m)}$ to adjust (termed as `the oracle') significantly outperforms the ATE estimated using $\rvbx^{(o)}$ to adjust (termed as `the baseline' i.e., \texttt{Baseline}) as shown in Figure \ref{fig:ihdp_magic}. 

\begin{figure}[h]
\centering
\begin{subfigure}{.5\textwidth}
  \centering
  \includegraphics[width=0.9\textwidth]{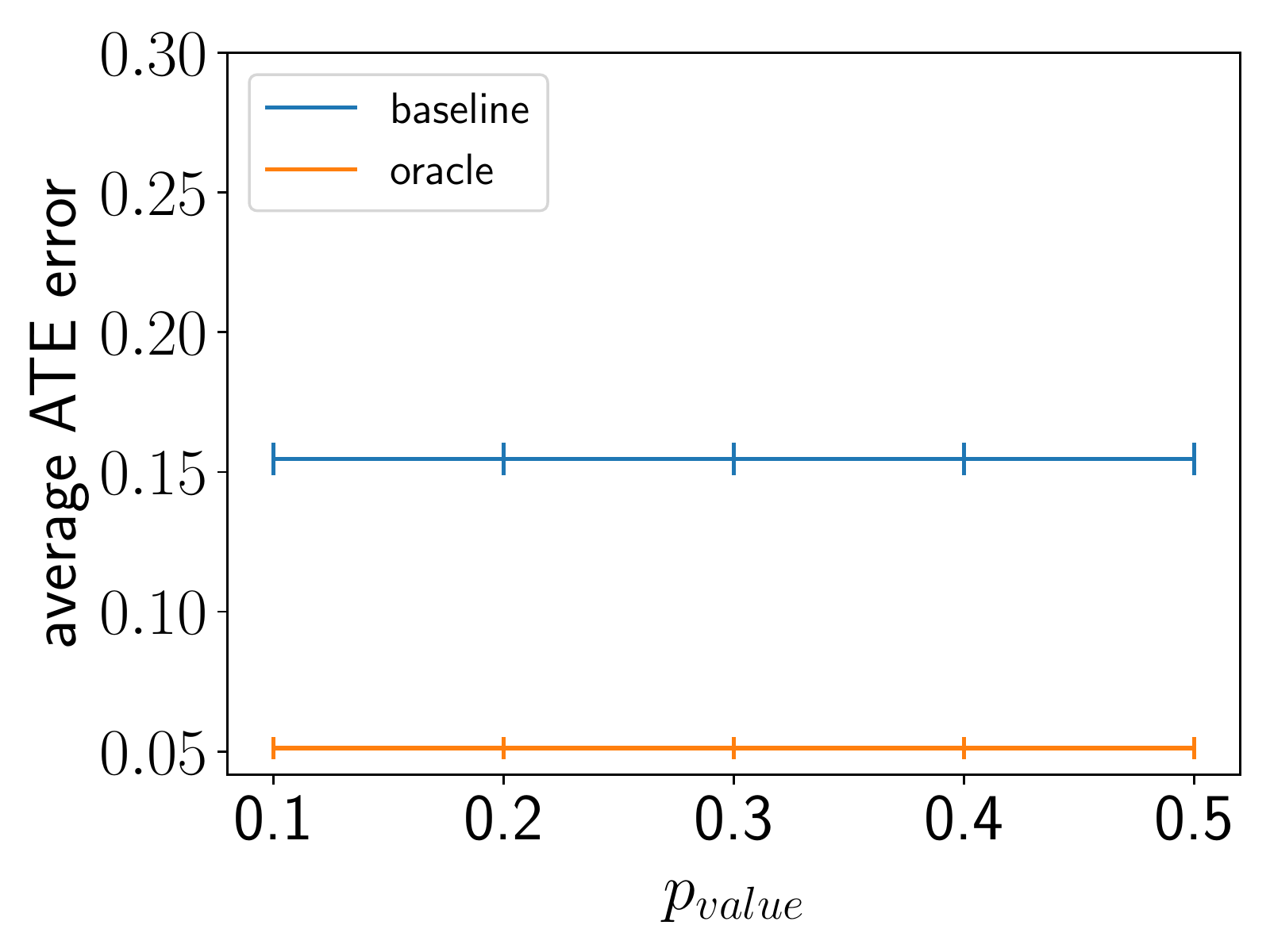}
  \caption{Comparison of \texttt{Baseline} (i.e., adjusting \\for $\rvbx^{(o)}$) with  oracle (i.e., adjusting for $\rvbx^{(m)}$) \\on IHDP}
  \label{fig:ihdp_magic}
\end{subfigure}%
\begin{subfigure}{.5\textwidth}
  \centering
  \includegraphics[width=0.9\textwidth]{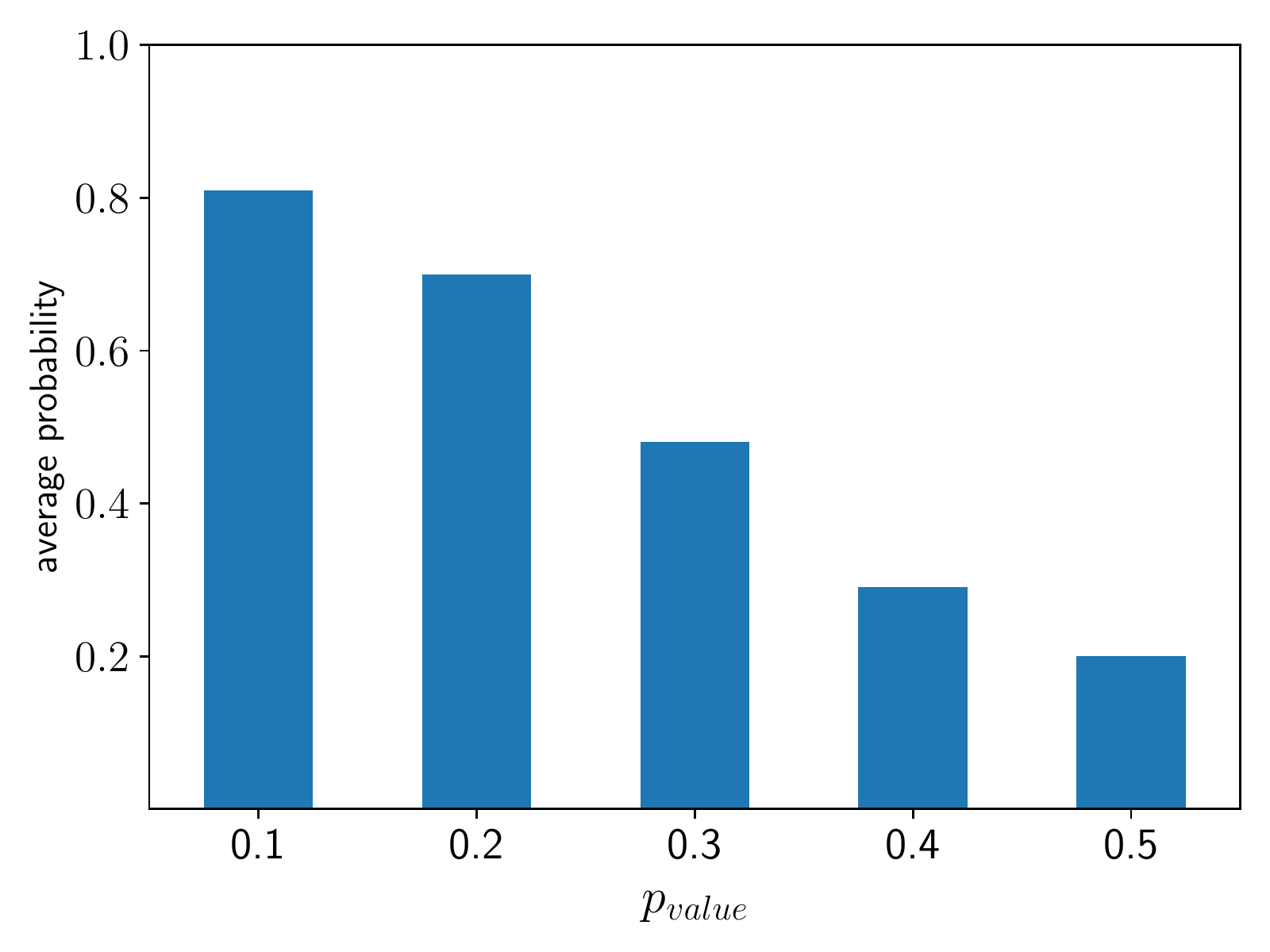}
  \caption{Success probability of the set $\rvbx^{(m)}$ in IHDP\\ for different $p_{value}$ thresholds.}
  \label{fig:ihdp_sp}
\end{subfigure}
\caption{Additional analysis on IHDP.}
\label{fig:ihdp_more_plots}
\end{figure}

Therefore, we believe that there exist valid adjustment sets of size 5 (or adjustment sets better than $\rvbx^{(o)}$) for this dataset. Therefore, to curtail the run-time of \texttt{Exhaustive}, we consider \texttt{Sparse} with $\cX =$ subsets of $\rvbx^{(o)} \setminus \{\rvx_{\rvt}\}$ with size at-most 5 in Section \ref{subsection:ihdp}. However, as mentioned in Section \ref{subsection:ihdp}, the performance of \texttt{Sparse} is similar to that of \texttt{Exhaustive} since (a) \texttt{Sparse} has to perform 382 tests and (b) there is no guarantee that $\rvbx^{(m)}$ will be picked as a valid adjustment set (as explained below). Finally, we point out that the performance of \texttt{IRM-t} is closest to `the oracle' as evident from Figure \ref{fig:ihdp}.\\

\noindent{\bf Success Probability.} Similar to Section \ref{appendix:synthetic_experiments}, we consider the success probability of the set $\rvbx^{(m)}$.
For a given $p_{value}$ threshold, we let the success probability of the set $\rvbx^{(m)}$ be the fraction of times (in $n_r$ runs) the p-value of CI$(\rve \indep \rvy |\rvbx^{(m)}, \rvt )$ is more than $p_{value}$.
In Figure \ref{fig:ihdp_sp}, we show how the success probability of the set $\rvbx^{(m)}$ varies with different $p_{value}$ thresholds i.e., $\{0.1,0.2,0.3,0.4,0.5\}$ for IHDP.

\subsection{Cattaneo}
\label{appendix:cattaneo}
In this section, we provide more details on experiments in Section \ref{subsection:cattaneo} on the Cattaneo\footnote{\url{www.stata-press.com/data/r13/cattaneo2.dta}} dataset.\\

\noindent{\bf Dataset Description.}
We describe various aspects measured by the features available in this dataset. The feature set comprises of the following attributes : mother’s marital status, indicator for whether the mother consumed alcohol when she was pregnant, indicator for whether the mother had any previous infant where the newborn died, mother’s age, mother’s education, mother’s race, father’s age, father’s education, father’s race, months since last birth by the mother, birth month, indicator for whether the baby is first-born, total number of prenatal care visits, number of prenatal care visits in the first trimester, and the number of trimesters the mother received any prenatal care.
Apart from these, there are also a few other features available in this dataset for which we did not have access to their description.

\subsection{Training details}
\label{appendix:training_details}
For all of our experiments, we split the data randomly into train data and test data in the ratio $0.8:0.2$. We use ridge regression with cross-validation and regularization strengths: $0.001, 0.01, 0.1, 1$ as the regression model.  We mainly relied on the following github repositories --- (a) causallib\footnote{\url{https://github.com/ibm/causallib}} \citep{causalevaluations}, (b) RCoT \citep{strobl2019approximate}, (c) ridgeCV\footnote{\url{https://github.com/scikit-learn/scikit-learn/tree/15a949460/sklearn/linear_model/_ridge.py}}, and (d) IRM\footnote{\url{https://github.com/facebookresearch/InvariantRiskMinimization}}.

For IRM, we use 15000 iterations. We train the IRM framework using 2 environments and perform validation on the remaining environment. For validation, we vary the learning rate (of the Adam optimizer that IRM uses) between $0.01$ and $0.001$ and vary the IRM regularizer between $0.1$ and $0.001$. During training, we use a step learning rate scheduler which decays the initial learning rate by half after every 5000 iterations. 


\subsection{Comparison with \cite{entner2013data} and \cite{gultchin2020differentiable}}
\label{app:entner}
As described in Section \ref{section:main_results},  \cite{entner2013data} and \cite{gultchin2020differentiable} cannot be used to conclude that $\emptyset, \{\rvx_3\}$, and $\{\rvx_2,\rvx_3\}$ are not admissible i.e., not valid backdoors in $\cG^{toy}$ (because the variable $\rvx_t=\rvx_1$ has an unobserved parent) while our Theorem \ref{thm_necessity} can be used to conclude that. Here, we provide the p-values (averaged over 100 runs) corresponding to these in Table \ref{p-val-e}. As we can see, our invariance test results in a very small p-value for $\emptyset, \{\rvx_3\}$, and $\{\rvx_2,\rvx_3\}$ leading to the conclusion that they are not valid backdoors in $\cG^{toy}$.

\begin{table}[h]
\caption{p-value of CI$(\rve \indep \rvy |\rvbz, \rvt )$ for $\rvbz = \emptyset, \{\rvx_3\},$ or $\{\rvx_2,\rvx_3\}$ in $\cG^{toy}.$}
  \label{p-val-e}
  \centering
	\begin{tabular}{p{16mm}p{40mm}p{40mm}p{40mm}}
	\\
    	\toprule
    	$\rvbz$ & \textbf{d = 3} & \textbf{d = 5}  & \textbf{d = 7} \\
    	\midrule
    	$\emptyset$ & $1.3\times 10^{-15}$ $\pm$ $2.9\times 10^{-16}$ & $1.1\times 10^{-15}$ $\pm$ $1.5\times 10^{-16}$ & $1.9\times 10^{-15}$ $\pm$ $6.7\times 10^{-16}$ \\
    	\midrule
    	$\{\rvx_3\}$ & $1.4\times 10^{-15}$ $\pm$ $2.7\times 10^{-16}$ & $1.2\times 10^{-15}$ $\pm$ $3.1\times 10^{-16}$ & $1.0\times 10^{-15}$ $\pm$ $2.7\times 10^{-16}$ \\
    	\midrule
    	$\{\rvx_2,\rvx_3\}$ & $1.8\times 10^{-4}$ $\pm$ $1.8\times 10^{-4}$ & $5.1\times 10^{-3}$ $\pm$ $3.6\times 10^{-3}$ & $1.9\times 10^{-4}$ $\pm$ $1.3\times 10^{-4}$ \\
    	\bottomrule
  \end{tabular}
\end{table}


\subsection{Comparison with \cite{cheng2020towards}}
\label{app:cheng}
As described in Section \ref{section:main_results},  \cite{cheng2020towards} cannot be used to conclude that $\emptyset, \{\rvx_2\}, \{\rvx_3\}$, and $\{\rvx_2,\rvx_3\}$ are not admissible i.e., not valid backdoors in the DAG obtained by adding the edge $\rvx_1 \rightarrow \rvy $ to $\cG^{toy}$ (because there is no COSO variable) while our Theorem \ref{thm_necessity} can be used to conclude that. Here, we provide the p-values (averaged over 100 runs) corresponding to these in Table \ref{p-val-c}. As we can see, our invariance test results in a very small p-value for $\emptyset, \{\rvx_2\}, \{\rvx_3\}$, and $\{\rvx_2,\rvx_3\}$ leading to the conclusion that they are not valid backdoors.

\begin{table}[h]
\caption{p-value of CI$(\rve \indep \rvy |\rvbz, \rvt )$ for $\rvbz = \emptyset, \{\rvx_2\}, \{\rvx_3\},$ or $\{\rvx_2,\rvx_3\}$ in the DAG obtained by adding the edge $\rvx_1 \rightarrow \rvy $ to $\cG^{toy}$.}
  \label{p-val-c}
  \centering
	\begin{tabular}{p{16mm}p{40mm}p{40mm}p{40mm}}
	\\
    	\toprule
    	$\rvbz$ & \textbf{d = 3} & \textbf{d = 5}  & \textbf{d = 7} \\
    	\midrule
    	$\emptyset$ & $1.2\times 10^{-15}$ $\pm$ $2.0\times 10^{-16}$ & $1.6\times 10^{-15}$ $\pm$ $3.2\times 10^{-16}$ & $1.3\times 10^{-15}$ $\pm$ $2.1\times 10^{-16}$ \\
    	\midrule
    	$\{\rvx_2\}$ & $8.8\times 10^{-7}$ $\pm$ $8.8\times 10^{-7}$ & $5.3\times 10^{-4}$ $\pm$ $4.2\times 10^{-4}$ & $8.1\times 10^{-3}$ $\pm$ $4.8\times 10^{-3}$ \\
    	\midrule
    	$\{\rvx_3\}$ & $1.6\times 10^{-15}$ $\pm$ $3.2\times 10^{-16}$ & $9.7\times 10^{-16}$ $\pm$ $1.6\times 10^{-16}$ & $2.0\times 10^{-15}$ $\pm$ $3.8\times 10^{-16}$ \\
    	\midrule
    	$\{\rvx_2,\rvx_3\}$ & $3.5\times 10^{-7}$ $\pm$ $3.4\times 10^{-7}$ & $1.1\times 10^{-3}$ $\pm$ $7.8\times 10^{-4}$ & $1.4\times 10^{-3}$ $\pm$ $1.3\times 10^{-3}$ \\
    	\bottomrule
  \end{tabular}
\end{table}



\end{document}